\newif\ifreport\reporttrue
\newtheorem{theorem}{Theorem}
\newtheorem{lemma}{Lemma}
\newtheorem{corollary}{Corollary}
\newtheorem{definition}{Definition}
 \newtheorem{assumption}{Assumption}
\newcommand{\ignore}[1]{{}}
\def\BibTeX{{\rm B\kern-.05em{\sc i\kern-.025em b}\kern-.08em
    T\kern-.1667em\lower.7ex\hbox{E}\kern-.125emX}}
\def\red{\color{red}}
\begin{document}

\title{A Local Geometric Interpretation of Feature Extraction in Deep Feedforward Neural Networks\\
\thanks{This work was supported in part by the NSF grant CCF-1813078 and the ARO grant W911NF-21-1-0244.}
}

\author{\IEEEauthorblockN{Md Kamran Chowdhury Shisher, Tasmeen Zaman Ornee, and Yin Sun}
\IEEEauthorblockA{Deptartment of Electrical and Computer Engineering \\
Auburn University, Auburn, AL, 36849 \\
\{mzs0153, tzo0017, yzs0078\}@auburn.edu}
}

\maketitle
%!TEX root = ./DeepNeuralNet.tex
\begin{abstract}
In this paper, we present a local geometric analysis to interpret how deep feedforward neural networks extract low-dimensional features from high-dimensional data. Our study shows that, in a local geometric region, the optimal weight in one layer of the neural network and the optimal feature generated by the previous layer comprise a low-rank approximation of a matrix that is determined by the Bayes action of this layer. This result holds (i) for analyzing both the output layer and the hidden layers of the neural network, and (ii) for neuron activation functions with non-vanishing gradients.
%that are locally strictly increasing and continuously differentiable. 
We use two supervised learning problems to illustrate our results: neural network based maximum likelihood classification (i.e., softmax regression) and neural network based minimum mean square estimation. Experimental validation of these theoretical results will be conducted in our future work.
\end{abstract}
\ignore{Our local geometric analysis for feature extraction in Neural Networks is a unified framework that covers a wide range of inference problems with different loss functions and activation functions. We consider local geometric analysis around Bayes action. Moreover, our approach is applicable to problems consisting of any number of hidden layers. In addition, the features extracted by the intermediate layers have been solved. We propose an iterative algorithm to solve the deep learning feature extraction problem.} 
\ignore{We investigate a problem of finding low dimensional representations, also known as features, from high dimensional data for inference tasks. Specifically, we provide a local geometric analysis of features extracted in a deep feedforward neural network based supervised learning. The problem can be cast as a stochastic decision problem. The Bayes action of the decision problem plays a central role in the local analysis. Our study shows that the optimal feature generated by the last hidden layer and the optimal weight of the output layer corresponds to a low rank approximation of a matrix that characterizes the Bayes action. The analysis of the output layer can be applied to the hidden layers in an iterative manner: The features obtained in a later layer is a Bayes action of the optimization problem of the previous layer.  Our theoretical findings are applicable to twice continuously differentiable loss functions and to strictly increasing and continuously differentiable activation functions. Finally, we present two examples: (i) Neural network based maximum likelihood estimation and (ii) Neural network based minimum mean square estimation to illustrate our theoretical analysis.}
%!TEX root = ./DeepNeuralNet.tex

\section{introduction}
In recent years, neural network based supervised learning has been extensively admired due to its emerging applications in a wide range of inference problems, such as image classification, DNA sequencing, natural language processing, etc. The success of deep neural networks depends heavily on its capability of extracting good low-dimensional features from high-dimensional data. 
Due to the complexity of deep neural networks, theoretical interpretation of feature extraction in deep neural networks has been challenging, with some recent progress reported in, e.g., \cite{bartlett2019nearly, zhang2021understanding, karakida2019universal, mei2018mean, goldt2020modeling, jacot2018neural, lei2020geometric, geiger2021landscape, quinn2021information, huang2019information, tishby2015deep, yu2020learning, arora}.

In this paper, we analyze the training of deep feedforward neural networks for a class of empirical risk minimization (ERM) based supervised learning algorithms. A local geometric analysis is conducted for feature extraction in deep feedforward neural networks. Specifically, the technical contributions of this paper are summarized as follows:
\begin{itemize}
\item %First, we show that the deep neural network based supervised learning can be treated as a stochastic decision problem. The Bayes action of the decision problem plays an important role to characterize optimal features in a local region. To obtain the local region, we consider the universal function approximation property of the deep feedforward neural networks and weak dependency between the data and the label. 
We first analyze the design of (i) the weights and biases in the output layer and (ii) the feature constructed by the last hidden layer. In a local geometric region, this design problem is converted to a low-rank matrix approximation problem, where the matrix is characterized by the Bayes action of the supervised learning problem. Optimal designs of the weights, biases, and feature are derived in the local geometric region (see Theorems \ref{theorem1}-\ref{theorem3}).

\item The above local geometric analysis can be readily applied to a hidden layer (see Corollaries \ref{corollary2}-\ref{corollary4}), by considering another supervised learning problem for the hidden layer. The local geometric analyses of different layers are  related to each other in an iterative manner: The optimal feature obtained from the analysis of one layer is the Bayes action needed for analyzing the previous layer. We use two supervised learning problems to illustrate our results.

%Our approach of local analysis centered on Bayes action allows us to handle a wide range of loss functions and activation functions. The loss functions in our work are twice continuously differentiable and the activation functions are strictly increasing and continuously differentiable. 

%\item % In addition, our local analysis can be applied in an iterative manner: The optimal feature obtained in a later layer can be treated as a Bayes action to find the optimal feature generated by the previous layer. 

%\item %The local analysis shows that the optimal feature in the local region can be found by computing the singular value decomposition of a matrix associated with the Bayes action.
\end{itemize}

\subsection{Related Work}
Due to the practical success of deep neural networks, there have been numerous efforts \cite{bartlett2019nearly, zhang2021understanding, karakida2019universal, mei2018mean, goldt2020modeling, jacot2018neural, lei2020geometric, geiger2021landscape, quinn2021information, huang2019information, tishby2015deep, yu2020learning, arora} to explain the feature extraction procedure of deep neural networks. Towards this end, researchers have used different approaches, for example, statistical learning theory approach \cite{bartlett2019nearly, zhang2021understanding}, information geometric approach \cite{karakida2019universal, mei2018mean, goldt2020modeling, jacot2018neural, lei2020geometric, geiger2021landscape, quinn2021information, huang2019information}, information theoretic approach \cite{huang2019information, tishby2015deep, yu2020learning}, etc. The information bottleneck formulation in \cite{tishby2015deep} suggested that the role of the deep neural network is to learn minimal sufficient statistics of the data for an inference task. The authors in \cite{yu2020learning} proposed that maximal coding rate reduction is a fundamental principle  in deep neural networks. In \cite{huang2019information}, the authors formulated the problem of feature extraction by using KL-divergence, and provided a local geometric analysis by considering a weak dependency between the data and the label. Motivated by \cite{huang2019information}, we also consider the weak dependency. Compared to  \cite{huang2019information}, our local geometric analysis can handle more general supervised learning problems and neuron activation functions, as explained in Section \ref{sec_3}.

\begin{figure}[t]
\centering
\includegraphics[width=0.49\textwidth]{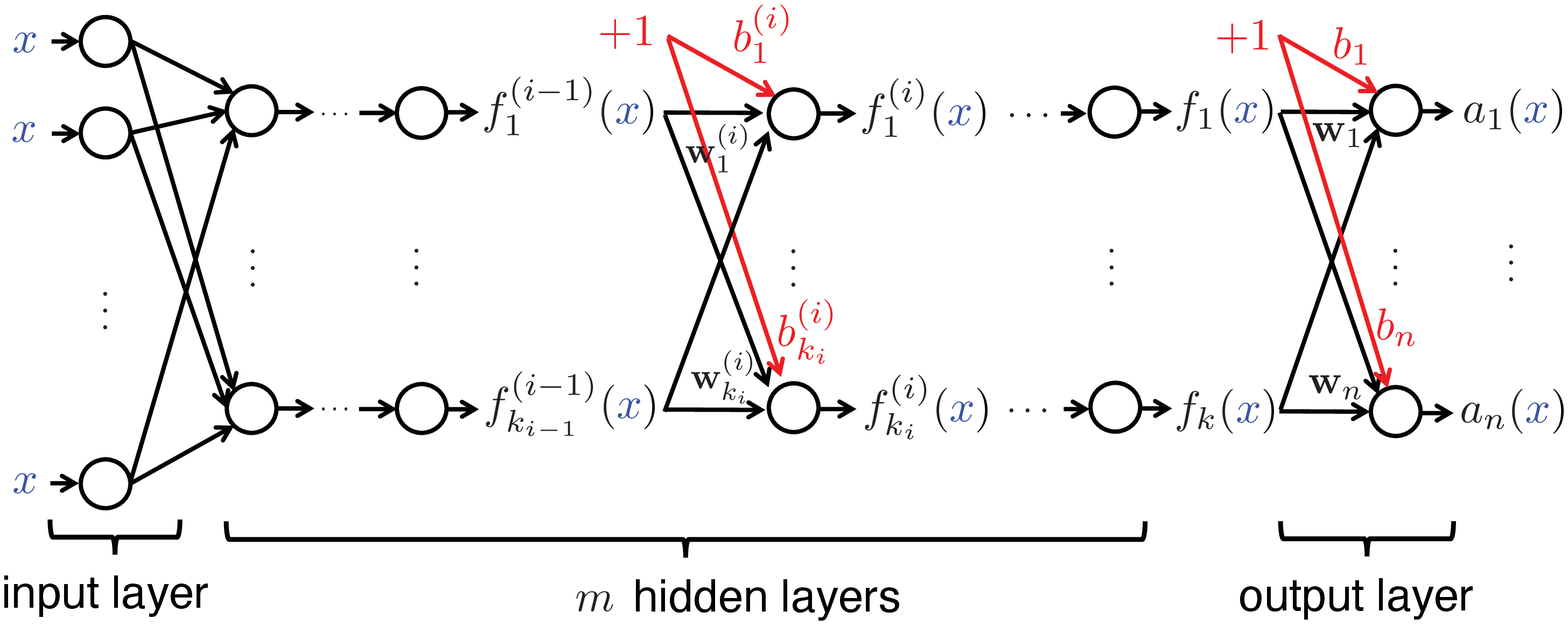}
\caption{\small A deep feedforward neural network. \label{fig:DNN}
}
\end{figure}

%!TEX root = ./DeepNeuralNet.tex
\section{Model and Problem}

\subsection{Deep Feedforward Neural Network Model} 
Consider the deep feedforward neural network illustrated in Figure \ref{fig:DNN}, which consists of one input layer, $m$ hidden layers, and one output layer. The input layer admits an input variable $x\in \mathcal X$ and feeds a vector 
\begin{align}\label{input}
\mathbf f^{(0)}(x) = [h^{(0)}_1(x), \ldots, h_{k_0}^{(0)}(x)]^{\operatorname{T}} \in \mathbb R^{k_0}
\end{align}
to the first hidden layer, where $k_0$ is the number of neurons in the input layer and $h^{(0)}_j: \mathcal X\mapsto \mathbb R$ is the activation function of the $j$-th neuron in the input layer. For all $i=1,\ldots,m$, the $i$-th hidden layer admits $\mathbf f^{(i-1)}(x)\in \mathbb R^{k_{i-1}}$ from the previous layer and constructs a vector $\mathbf f^{(i)}(x)\in \mathbb R^{k_{i}}$, usually called a \emph{feature}, given by
\begin{align}\label{featurei}
&\mathbf f^{(i)}(x)\nonumber\\
=&\!\left[h^{\!(i)}\!(\mathbf w_1^{(i)\!\operatorname{T}}\mathbf f^{(i-1)}\!(x)\!+\!b^{(i)}_1), \ldots, h^{\!(i)}\!(\mathbf w_{k_i}^{(i)\!\operatorname{T}}\mathbf f^{(i-1)}\!(x)\!+\!b^{(i)}_{k_i})\right]^{\operatorname{T}}\!\!, 
\end{align}
where $k_i$ is the number of neurons in the $i$-th hidden layer, $h^{(i)}: \mathbb R\mapsto \mathbb R$ is the activation function of each neuron in the $i$-th hidden layer, $\mathbf w_j^{(i)}\in \mathbb R^{k_{i-1}}$ and $b_j ^{(i)}\in \mathbb R$ are the weight vector and bias of the $j$-th neuron in the $i$-th hidden layer, respectively. Denote $\mathbf W^{(i)} =[\mathbf w^{(i)}_1, \ldots, \mathbf w^{(i)}_{k_{i}}]$ and $\mathbf b^{(i)}=$ $[b^{(i)}_1, \ldots, b^{(i)}_{k_i}]^{\operatorname{T}}$, then \eqref{featurei} can be expressed compactly as 
\begin{align}
\mathbf f^{(i)}(x)=\mathbf h^{(i)} \left(\mathbf W^{(i)\operatorname{T}} \mathbf f^{(i-1)}(x) + \mathbf b^{(i)}\right),
\end{align}
where $\mathbf h^{(i)}: \mathbb R^{k_i}\mapsto \mathbb R^{k_i}$ is a vector-valued function determined by \eqref{featurei}.
For notational simplicity, let us denote $k= k_m$ and $\mathbf f(x)=\mathbf f^{(m)}(x)$. The output layer admits $\mathbf f(x)\in \mathbb R^{k}$ from the last hidden layer and generates an output vector $\mathbf a(x) \in \mathcal H^n$, called an \emph{action}, which is determined by
\begin{align}\label{case1}
\mathbf a(x) =& {\mathbf h}(\mathbf {W}^{\operatorname{T}}\mathbf f(x)+ \mathbf b)\nonumber\\
=&  [h(\mathbf w_{1}^{\operatorname{T}} \mathbf f(x) + b_1), \cdots, h(\mathbf w_{n}^{\operatorname{T}} \mathbf f(x)  + b_n)]^{\operatorname{T}},
\end{align}
where $n$ is the number of neurons in the output layer, $h: \mathbb R \mapsto \mathcal H$ is the activation function of each neuron in the output layer, $\mathcal H$ is the image set of $h$ with $\mathcal H\subseteq \mathbb R$, $\mathbf w_j\in \mathbb R^{k}$ and $b_j \in \mathbb R$ are the weight vector and bias of the $j$-th neuron in the output layer, respectively, $\mathbf W =[\mathbf w_1, \ldots, \mathbf w_n]$ and $\mathbf b = [b_1, \ldots, b_n]^{\operatorname{T}}$. 

\ignore{Our system consists of a feedforward neural network. Consider a random variable $X \in \mathcal X$ that denotes the data, where $\mathcal X$ is a discrete finite set. As shown in Fig. \ref{fig:DNN}, for an input $x \in \mathcal{X}$, an $m$ hidden layered feedforward neural network generates $k$ dimensional feature $\mathbf f(x) = [f_1 (x), \ldots, f_k(x)]^{\operatorname{T}}$ and produces an action $\mathbf a(x) = [a_1(x), \ldots, a_n(x)]^{\operatorname{T}} \in \mathbb R^n$. In this sequel, we first explain the features generated by the hidden layers and demonstrate the action later in this section.

\ignore{The input to the first hidden layer is given by 
\begin{align}
\mathbf f^{(0)}(x) =& [\mathbbm1(x=1), \ldots, \mathbbm1(x=|\mathcal{X}|)] \in \mathbb{R}^{|\mathcal{X}|},
\end{align}
where $\mathbbm1(x=j)$ is an indicator function defined as
\begin{align}
\mathbbm1(x=j) =& \bigg\{\begin{array}{l l} 1,
& \text{ if }~ x = j,\\
0, & \text{ if }~ x \neq j.
\end{array}
\end{align}}
Let, $\mathbf f^{(i)}(x) \in \mathbb R^{k_i}$ denotes the feature produced by the first $i$ hidden layers for all $i=1, \ldots, m$. The $j$-th element of the vector $\mathbf f^{(i)}(x)$ is $$h^{(i)}(\mathbf w_j^{(i)\operatorname{T}}\mathbf f^{(i-1)}(x)+b^{(i)}_j),$$ where $\mathbf f^{(0)}(x)\in \mathbb R^{k_0}$ is the input of the neural network, $h^{(i)}:\mathcal X \mapsto \mathbb R$ is the activation function of the $i$-th hidden layer, $\mathbf w_j^{(i)} \in \mathbb R^{k_{i-1}}$ and $b_j^{(i)} \in \mathbb R$ are the weight vector and bias of the $j$-th node of the $i$-th hidden layer, respectively. Hence, $\mathbf f^{(i)}(x)$ can be expressed as
\begin{align}
\mathbf f^{(i)}(x)= \bm h^{(i)} \left(\mathbf W^{(i)\operatorname{T}} \mathbf f^{(i-1)}(x) + \mathbf b^{(i)}\right),
\end{align}
where $\mathbf W^{(i)}=[\mathbf w_1^{(i)}, \ldots, \mathbf w_{k_i}^{(i)}]\in \mathbb{R}^{k_{i-1} \times {k_i}}$ and $\mathbf b^{(i)} \in \mathbb{R}^{k_i}$ are weights and bias of the $i$-th hidden layer, respectively. 
The feature $\mathbf f^{(m)}(x)$ is the output of the last hidden layer. For notational simplicity, we will use $\mathbf f^{(m)}(x)=\mathbf f(x)\in \mathbb R^k$. 

An activation function $h: \mathbb R \mapsto \mathcal H$ is applied on $\mathbf w_i^{\operatorname{T}}\mathbf f(x)+b_i$, where $\mathbf w_i \in \mathbb R^k$ is the weight and $b_i \in \mathbb R$ is the bias of the output layer. Then, the feedforward neural network produces $\mathbf a(x) \in \mathcal A=\mathcal H^n$, where $\mathcal A$ is a finite set and $\mathbf a(x)$ can be represented as
\begin{align}\label{case1}
\mathbf a(x) =& {\mathbf h}(\mathbf {W}^{\operatorname{T}}\mathbf f(x)+ \mathbf b), \nonumber\\
=&  [h(\mathbf w_{1}^{\operatorname{T}} \mathbf f(x) + b_1), \cdots, h(\mathbf w_{n}^{\operatorname{T}} \mathbf f(x)  + b_n)]^{\operatorname{T}}.
\end{align}}

\subsection{Neural Network based Supervised Learning Problem}

The above deep feedforward neural network is used to solve a supervised learning problem. We focus on a class of popular supervised learning algorithms called \emph{empirical risk minimization (ERM)}. In ERM algorithms, the weights and biases of the neural network are trained to construct a vector-valued function $\mathbf a: \mathcal X \mapsto \mathcal A$ that outputs an action $\mathbf a(x)\in \mathcal A$ for each $x \in \mathcal X$, where $\mathcal A \subseteq \mathcal H^n \subseteq \mathbb R^n$. Consider two random variables $X\in\mathcal X$ and $Y\in\mathcal Y$, where $\mathcal X$ and $\mathcal Y$ are finite sets. The performance of an ERM algorithm is measured by a loss function $L: \mathcal Y \times \mathcal A \mapsto \mathbb R$, where $L(y,\mathbf a(x))$ is the incurred loss if action $\mathbf a(x)$ is generated by the neural network when $Y=y$. For example, in neural network based maximum likelihood classification, also known as \emph{softmax regression}, the loss function is 
\begin{align}\label{log-loss}
L_{\log}(y,\mathbf a) = -\log\left( \frac{a_y}{\sum_{y'\in \mathcal Y} a_{y'}}\right),
\end{align}
which is the negative log-likelihood of a distribution $Q_Y$ generated by the neural network, where $Q_Y(y)= {a_y}/{\sum_{y'\in \mathcal Y} a_{y'}}$, $a_y>0$ for all $y\in \mathcal Y$, and the dimension of $\mathbf a$ is  $n=|\mathcal Y|$. In neural network based minimum mean-square estimation, the loss function is one half of the mean-square error between $\mathbf y \in \mathbb R^n$ and an estimate $\hat{\mathbf y}=\mathbf a(x) \in \mathbb R^n$ constructed by the neural network, i.e.,
\begin{align}\label{mean-square-error}
L_2(\mathbf y,\hat{\mathbf y}) = \frac{1}{2} \|\mathbf y-\hat{\mathbf y} \|_2^2.
\end{align}
Let  $P_{X,Y}$ be the empirical joint distribution of $X$ and $Y$ in the training data,  $P_X$ and $P_Y$ be the associated marginal distributions, which satisfies  $P_X(x)>0$ for all $x\in\mathcal X$ and $P_Y(y)>0$ for all $y\in\mathcal Y$. The objective of ERM algorithms is to solve the following neural network training problem: 
\begin{align}\label{main_problem}
\min_{\substack{(\mathbf W, \mathbf b),\\~(\mathbf W^{(i)}, \mathbf b^{(i)}), i=1, \ldots, m}} \mathbb E_{X,Y \sim P_{X, Y}}[L\left(Y, \mathbf a(X)\right)],
\end{align}
where $\mathbf a(x)$ is subject to \eqref{input}-\eqref{case1}, because $\mathbf a(x)$ is the action generated by the neural network.

\ignore{Consider a random variable $Y \in \mathcal Y$ that represents the label, where $\mathcal Y$ is a discrete finite set. In supervised learning, a decision maker infers the label $Y$ by using the action $\mathbf a(x)$ provided by a neural network. The performance of the inference is measured by a loss function $L$. The $L(y, \mathbf{a})$ is the incurred loss if an action $\mathbf{a} \in \mathcal A$ is chosen when $Y=y$. For example, $L_2(y, \hat y)=\frac{1}{2}\|y-\hat y\|_2^2$ is a quadratic loss function and $L_{\text{log}}(y, \mathbf a)=-\text{log}(\frac{1}{\|\mathbf a\|_1} a_y)$ is a logarithmic loss function, where $\mathbf a \in \mathbb R^{|\mathcal Y|}$. Let $P_{X, Y}$ be the empirical joint distribution of the data $X$ and the label $Y$. Then, the objective of DNN based supervised learning is to solve the following optimization problem:
\begin{align}\label{main_problem}
\min_{\substack{(\mathbf W, \mathbf b),\\~(\mathbf W^{(i)}, \mathbf b^{(i)}),\\ i=1, \ldots, m}} \mathbb E_{X,Y \sim P_{X, Y}}[L\left(Y, \mathbf a(X)\right)],
\end{align}
where $\mathbf a(x)$ is action for given $x \in \mathcal X$ generated by the weight matrices $\mathbf W, \mathbf W^{(1)}, \ldots, \mathbf W^{(m)}$ and the bias vectors $\mathbf b, \mathbf b^{(1)}, \ldots, \mathbf b^{(m)}$ defined in \eqref{case1}.
}
\subsection{Problem Reformulation}
Denote $\Phi = \{f: \mathcal X\mapsto \mathcal A\}$ as the set of all  functions from $\mathcal X$ to $\mathcal A$. Any action function $\mathbf a(x)$ produced by the neural network, i.e., any function satisfying \eqref{input}-\eqref{case1}, belongs to $\Phi$, whereas some functions in $\Phi$ cannot be constructed by the neural network. By relaxing the set of feasible action functions  in \eqref{main_problem} as $\Phi$, we derive  the following lower bound of \eqref{main_problem}:
\begin{align}\label{lower_bound1}
&\min_{\mathbf a \in \Phi} \mathbb E_{X,Y \sim P_{X, Y}}[L(Y, \mathbf a(X))] \\
=&\sum_{x \in \mathcal X}P_X(x) \min_{\mathbf a(x) \in \mathcal A} \mathbb E_{Y \sim P_{Y|X=x}}[L(Y, \mathbf a(x))],\label{lower_bound}
\end{align}
where \eqref{lower_bound1} is decomposed into a sequence of separable optimization problems in \eqref{lower_bound}, each optimizing the action $\mathbf a(x)\in \mathcal A$ for a given $x\in\mathcal X$. 
Let $\mathcal A_{P_Y}\subseteq \mathcal A$ denote the set of optimal solutions to the following problem: 
\begin{align}\label{bayes}
\mathcal A_{P_Y} =\arg\min_{\mathbf a \in \mathcal A} \mathbb E_{Y \sim P_Y}[L(Y, \mathbf a)]
\end{align}
and use $\mathbf a_{P_Y}$ to denote an element of $\mathcal A_{P_Y}$, which is usually called a \emph{Bayes action}. Define the discrepancy 
\begin{align}\label{L-divergence}
D_L(\mathbf a_{P_Y}||\mathbf a)=\mathbb E_{Y \sim P_{Y}}[L(Y, \mathbf a)] -\mathbb E_{Y \sim P_{Y}}[L(Y, \mathbf a_{P_Y})].
\end{align} 
According to \eqref{bayes} and \eqref{L-divergence}, $D_L(\mathbf a_{P_Y}||\mathbf a)\geq 0$ for all $\mathbf a\in \mathcal A$, where equality is achieved if and only if $\mathbf a \in \mathcal A_{P_Y}$. {When $\mathbf a=\mathbf a_{Q_Y}$, $D_L(\mathbf a_{P_Y}||\mathbf a_{Q_Y})$ is a generalized divergence  between $P_Y$ and $Q_Y$ \cite{farnia2016minimax,grunwald2004game}.} 

By subtracting the lower bound \eqref{lower_bound} from \eqref{main_problem}, we obtain the following  problem that is equivalent to \eqref{main_problem}: 
\begin{align}\label{reformed_problem_old}
\min_{\substack{(\mathbf W, \mathbf b),\\~(\mathbf W^{(i)}, \mathbf b^{(i)}), i=1, \ldots, m}} \sum_{x \in \mathcal X} P_X(x) D_L(\mathbf a_{P_{Y|X=x}} || \mathbf a(x)),
\end{align}
where $\mathbf a_{P_{Y|X=x}}\in \mathcal A_{P_{Y|X=x}}$ is a Bayes action associated with the conditional empirical distribution $P_{Y|X=x}$ and $\mathbf a(x)$ is subject to \eqref{input}-\eqref{case1}. 

\ignore{Consider a set of functions $\Phi$ that contains all functions mapping from the input space $\mathcal X$ to $\mathcal A$. Then, a lower bound of \eqref{main_problem} is
\begin{align}\label{lower_bound}
&\min_{\mathbf a \in \Phi} \mathbb E_{X,Y \sim P_{X, Y}}[L(Y, \mathbf a(X))] \nonumber\\
=&\sum_{x \in \mathcal X}P_X(x) \min_{\mathbf a(x) \in \mathcal A}  E_{Y \sim P_{Y|X=x}}[L(Y, \mathbf a(x))].
\end{align}
In the right-hand side of \eqref{lower_bound}, the lower bound is decomposed into a sequence of separated optimization problems each optimizing action $\mathbf a(x)$ for a given $x \in \mathcal X$. The optimal action to \eqref{lower_bound} for a given $x \in \mathcal X$ is called Bayes action $\mathbf a_{P_{Y|X=x}}$ associated with the conditional distribution $P_{Y|X=x}$ \cite{farnia2016minimax, grunwald2004game, dawid2014theory}.

Let $\mathcal A_{P_Y}$ denotes the set of all Bayes actions $\mathbf a_{P_Y}$. If we need to choose an action $\mathbf a$ without the knowledge of $X$, we get Bayes action $\mathbf a_{P_Y}$:
\begin{align}\label{bayes}
\min_{\mathbf a \in \mathcal A} \mathbb E_{Y \sim P_Y}[L(Y, \mathbf a)]=\mathbb E_{Y \sim P_Y}[L(Y, \mathbf a_{P_Y})].
\end{align}

The excess risk $D_L(\mathbf a_{P_Y}||\mathbf a)$ of an action $\mathbf a \in \mathcal A$ is given by
\begin{align}\label{L-divergence}
D_L(\mathbf a_{P_Y}||\mathbf a)=\mathbb E_{Y \sim P_{Y}}[L(Y, \mathbf a)] -\mathbb E_{Y \sim P_{Y}}[L(Y, \mathbf a_{P_Y})].
\end{align} 
From the definitions \eqref{bayes} and \ref{L-divergence}, we get for all $a \in \mathcal A$,
\begin{align}\label{property}
D_L(\mathbf a_{P_Y}|| \mathbf a) \geq 0
\end{align}
with equality if and only if $\mathbf a = \mathbf a_{P_Y}$.

By subtracting the lower bound \eqref{lower_bound} from \eqref{main_problem}, the problem \eqref{main_problem} can be equivalently expressed as: 
\begin{align}\label{reformed_problem_old}
\min_{\substack{(\mathbf W, \mathbf b),\\~(\mathbf W^{(i)}, \mathbf b^{(i)}),\\ i=1, \ldots, m}} \sum_{x \in \mathcal X} P_X(x) D_L(\mathbf a_{P_{Y|X=x}} || \mathbf a(x)).
\end{align}
Instead of solving \eqref{lower_bound}, we seek to find weight matrices and bias vectors $(\mathbf W, \mathbf b)$ and $(\mathbf W^{(i)}, \mathbf b^{(i)})$, $i=1, \ldots, m$, which yield a good action $\mathbf a(x)$ and minimize the action divergence \eqref{reformed_problem_old}. 
}

%!TEX root = ./DeepNeuralNet.tex
\section{Main Results: Feature Extraction in \\ Deep Feedforward Neural Networks}\label{sec_3}
\subsection{Local Geometric Analysis of the Output Layer}\label{feature_extraction}
We  consider the following reformulation of \eqref{reformed_problem_old} that focuses on the training of the output layer: 
\begin{align}\label{reformed_problem}
\min_{\substack{\mathbf{W} \in  \mathbb R^{k \times n},\\ \mathbf b \in \mathbb R^n,\mathbf f \in \Lambda}} ~\sum_{x \in \mathcal X} P_X(x) D_L(\mathbf a_{P_{Y|X=x}} || \mathbf h(\mathbf{W}^{\operatorname{T}}\mathbf f(x)+\mathbf b)),
\end{align}
where $\Lambda$ is the set of feature functions created by the input and hidden layers of the neural network. 

Recall that $\mathcal H^n$ is the image set of the vector-valued activation function $\mathbf h(\mathbf b)$ of the output layer. Because $\mathcal A_{P_Y}\subseteq \mathcal A\subseteq \mathcal H^n$, for any Bayes action $\mathbf a_{P_Y}\in\mathcal A_{P_Y}$  that solves \eqref{bayes}, there exists a bias $\tilde{\mathbf b}=[\tilde b_1,\ldots,\tilde b_n]^{\operatorname{T}}\in \mathbb R^n$ such that 
\begin{align}\label{BiasVector}
\mathbf h(\tilde{\mathbf b})=[ h(\tilde b_1),\ldots,h(\tilde b_n)]^{\operatorname{T}}=\mathbf a_{P_Y}.
\end{align}
The following assumption is needed in our study.
\ignore{
\begin{assumption}\label{assumption1}
The activation function $h$ is strictly increasing and continuously differentiable. 
\end{assumption}

Moreover, according to the above arguments in \eqref{eq_continuouslydifferentiable}-\eqref{eq_alpha_i_bound}, if Assumption \ref{assumption1} is replaced by the following weaker Assumption \ref{assumption6}, then Lemma \ref{lemma1} can be also proven: 
}
\begin{assumption}\label{assumption1}
For each $i=1,\ldots, n$, there exist $\delta>0$ and $K>0$ such that for all $z\in (\tilde b_i-\delta, \tilde b_i+\delta)$, the  activation function $h$ satisfies
\begin{align}\label{eq_assumption1}
\left| h(z) - h(\tilde b_i)\right|\geq K \left|z-\tilde b_i\right|.
\end{align}
\end{assumption}

\begin{lemma}\label{assumption6to1}
If $h$ is strictly increasing and continuously differentiable, then $h$ satisfies Assumption \ref{assumption1}.
\end{lemma}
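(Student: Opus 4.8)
The plan is to reduce the claimed local lower bound to a statement about the derivative $h'$ near each bias value $\tilde b_i$, and then invoke the mean value theorem. Fix an index $i \in \{1,\ldots,n\}$ and write $c := \tilde b_i$ for brevity. The inequality \eqref{eq_assumption1} asks for a local lower-Lipschitz estimate of $h$ at $c$, i.e. that $h$ is not ``too flat'' near $c$. Since the difference quotient $(h(z)-h(c))/(z-c)$ is exactly what the mean value theorem converts into a value of $h'$, the entire problem collapses to bounding $h'$ from below by a positive constant on a small neighborhood of $c$.

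First I would record the two consequences of the hypotheses. Because $h$ is continuously differentiable, $h'$ exists and is continuous on $\mathbb R$; because $h$ is strictly increasing, $h'(z) \ge 0$ for every $z$. The decisive step is to establish that $h'(c) > 0$: once this holds, I set $K := h'(c)/2 > 0$ and use continuity of $h'$ at $c$ to pick $\delta > 0$ with $h'(z) \ge K$ for all $z \in (c-\delta, c+\delta)$.

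With the derivative bound in hand, the conclusion is immediate. For any $z \in (c-\delta,c+\delta)$ with $z \neq c$, the mean value theorem yields some $\xi$ strictly between $z$ and $c$ such that $h(z) - h(c) = h'(\xi)(z-c)$; since $\xi$ also lies in $(c-\delta,c+\delta)$ we have $h'(\xi) \ge K$, whence $|h(z) - h(c)| = h'(\xi)\,|z-c| \ge K\,|z-c|$, which is \eqref{eq_assumption1} (the case $z = c$ being trivial). As the index $i$ ranges over the finite set $\{1,\ldots,n\}$, I would finish by taking the common constants $\delta := \min_i \delta_i$ and $K := \min_i K_i$, so that Assumption \ref{assumption1} holds simultaneously for all $i$.

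The main obstacle is the positivity $h'(c) > 0$, and this is where care is needed: strict monotonicity of a $C^1$ function only forces $h' \ge 0$ and permits isolated zeros of $h'$ (for instance $z \mapsto z^3$ at the origin), at which \eqref{eq_assumption1} genuinely fails because $h(z)-h(c) = o(|z-c|)$. Thus the argument really rests on the gradient being non-vanishing at the relevant bias values $\tilde b_i$, and I would make this explicit in the writeup. For the standard strictly increasing activations the paper has in mind --- sigmoid, $\tanh$, softplus --- the derivative is strictly positive everywhere, so $h'(c)>0$ holds automatically and the bound follows with the uniform constants constructed above.
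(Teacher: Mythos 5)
Your proposal follows essentially the same route as the paper's own proof: establish $h'(\tilde b_i)>0$, use continuity of $h'$ to get the uniform lower bound $h'(z)\geq h'(\tilde b_i)/2 =: K$ on a $\delta$-neighborhood of $\tilde b_i$, and then convert this derivative bound into the difference-quotient bound \eqref{eq_assumption1} (the paper does this conversion implicitly; you invoke the mean value theorem explicitly, which is the right way to justify it). Your final step of taking $\delta=\min_i \delta_i$ and $K=\min_i K_i$ is unnecessary --- Assumption \ref{assumption1} allows the constants to depend on $i$ --- but harmless.

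The caveat you flag at the end is not a side remark; it is a genuine flaw in the lemma as stated, and in the paper's own proof. The paper's argument opens with ``Because $h$ is strictly increasing, $h'(\tilde b_i)>0$,'' which is false as a general implication: a strictly increasing $C^1$ function only satisfies $h'\geq 0$ and may have isolated zeros of the derivative. Your example $h(z)=z^3$ with $\tilde b_i=0$ is a correct counterexample to the lemma itself, since $|h(z)-h(0)|/|z-0| = z^2 \to 0$, so no constants $K>0$, $\delta>0$ can satisfy \eqref{eq_assumption1}. The statement (and both proofs) become correct once one adds the hypothesis $h'(\tilde b_i)>0$ at the relevant bias values --- the ``non-vanishing gradient'' condition the paper's abstract alludes to --- which indeed holds automatically for the sigmoid and hyperbolic tangent activations, whose derivatives are strictly positive everywhere. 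With that amendment made explicit, your writeup is a complete and correct proof; without it, neither your argument nor the paper's goes through, and yours has the merit of saying so.
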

\ifreport
\begin{proof}
See Appendix \ref{passumption6to1}.
\end{proof}
\else
Due to space limitation, all the proofs are relegated to our
technical report \cite{TechnicalReport}.\!\!
\fi
%Essentially, Assumption \ref{assumption1} requires that $h$ has a non-vanishing gradient around $\tilde b_i$. 
It is easy to see that the leaky ReLU activation function \cite[pp. 187-188]{goodfellow2016deep} satisfies Assumption \ref{assumption1}. In addition, the hyperbolic tangent function and the sigmoid function \cite[p. 189]{goodfellow2016deep}
also satisfy Assumption \ref{assumption1}, because they are strictly increasing and continuously differentiable. 
%As shown in Appendix \ref{app1}, Assumption \ref{assumption1} is satisfied by the hyperbolic tangent function, the sigmoid function, and the leaky ReLU function \cite[pp. 187-189]{goodfellow2016deep}. 
%Assumption \ref{assumption1} is satisfied by, e.g., the hyperbolic tangent function and sigmoid function \cite[p. 189]{goodfellow2016deep}. 
%In Appendix \ref{plemma1}, another weaker assumption  is provided, which is satisfied by the leaky ReLU function \cite[pp. 187-188]{goodfellow2016deep}.  

Let $\mathcal P^{\mathcal Y}$ be the set of all probability distributions on $\mathcal Y$ and $\text{relint}(\mathcal P^{\mathcal Y})$ be the relative interior of the set $\mathcal P^{\mathcal Y}$.

\begin{assumption}\label{assumption5}
If two distributions $P_Y, Q_Y \in \text{\normalfont{relint}}(\mathcal P^{\mathcal Y})$ are close to each other such that
\begin{align}
\sum_{y \in \mathcal Y} (P_Y(y)-Q_Y(y))^2\leq \gamma^2,
\end{align}
then for any  $\mathbf a_{P_Y} \in \mathcal A_{P_Y}$, there exists an $\mathbf a_{Q_Y} \in \mathcal A_{Q_Y}$ such that 
\begin{align}
\|\mathbf a_{P_Y}-\mathbf a_{Q_Y}\|_2=O(\gamma).
\end{align}
\end{assumption}
Assumption \ref{assumption5} characterizes the differentiability of the Bayes action $\mathbf a_{P_Y}$ with respect to $P_Y$. 
The loss functions in \eqref{log-loss} and \eqref{mean-square-error} satisfy Assumption \ref{assumption5}, as explained later in Section \ref{examples}.

Because of the universal function approximation properties of deep feedforward neural networks \cite{cybenko1989approximation, hornik1989multilayer, goodfellow2016deep}, we make the following assumption. 

 \begin{assumption}\label{assumption3}
For given  $\epsilon >0$ and  $\mathbf a_{P_{Y|X=x}}\in\mathcal A_{P_{Y|X=x}}$ with $x\in \mathcal X$, there exists an optimal solution $(\mathbf f, \mathbf W, \mathbf b)$ to \eqref{reformed_problem} such that for all $x\in \mathcal X$
\begin{align}\label{eq_assumption3}
\| \mathbf a_{P_{Y|X=x}} - \mathbf h({\mathbf{W}}^{\operatorname{T}}{\mathbf f}(x)+{\mathbf b})\|_2^2 \leq \epsilon^2.
\end{align}
\end{assumption}
By Assumption 
\ref{assumption3}, the neural network can closely approximate the vector-valued function $x\mapsto \mathbf a_{P_{Y|X=x}}$.

 \begin{definition}For a given $\epsilon >0$, two random variables $X$ and $Y$ are called {$\epsilon$-dependent}, if the $\chi^2$-mutual information $I_{\chi^2}(X;Y)$ is no more than $\epsilon^2$, given by
\begin{align}\label{weak-dependent}
I_{\chi^2}(X;Y)=D_{\chi^2}(P_{X,Y} || P_X \otimes P_Y)\leq \epsilon^2,
\end{align}
where 
\begin{align}
D_{\chi^2}(P_X||Q_X)=\int_{\mathcal X} \frac{(P(x)-Q(x))^2}{Q^2(x)} d Q(x)
\end{align}
is Neyman's $\chi^2$-divergence \cite{polyanskiy2014lecture}. 
\end{definition}

Motivated by the seminal work \cite{huang2019information} and \cite{huang2019universal}, we consider the following assumption. 

\begin{assumption}\label{assumption4}
For a given $\epsilon >0$, $X$ and $Y$ are $\epsilon$-dependent. 
\end{assumption}

By using the above assumptions, we can find a local geometric region \eqref{constraint} that is useful for our analysis.

\begin{lemma}\label{lemma1}
For a sufficiently small $\epsilon > 0$, if Assumptions \ref{assumption1}-\ref{assumption4} hold, then there exists an optimal solution $(\mathbf f, \mathbf{W}, \mathbf b)$ to \eqref{reformed_problem}  such that for all $x\in\mathcal X$ and $i=1, \ldots, n$
\begin{align}\label{constraint}
{\mathbf w_i}^{\operatorname{T}}\mathbf f(x)+ b_i-\tilde b_i=O(\epsilon).
\end{align}
\end{lemma}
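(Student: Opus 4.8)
The plan is to chain together Assumptions \ref{assumption1}--\ref{assumption4} to show that each pre-activation $\mathbf w_i^{\operatorname{T}}\mathbf f(x)+b_i$ must stay within $O(\epsilon)$ of the corresponding $\tilde b_i$ defined by the global Bayes action $\mathbf a_{P_Y}$. The starting point is to observe that $\epsilon$-dependence (Assumption \ref{assumption4}) forces the conditional distributions $P_{Y|X=x}$ to be close to the marginal $P_Y$ for most $x$. Concretely, $I_{\chi^2}(X;Y)=\sum_x P_X(x)\sum_y (P_{Y|X=x}(y)-P_Y(y))^2/P_Y(y)\le\epsilon^2$ implies that, weighted by $P_X(x)$, the squared $\ell_2$ distance $\sum_y(P_{Y|X=x}(y)-P_Y(y))^2$ is $O(\epsilon^2)$. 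Since $P_X(x)>0$ for every $x$ in the finite set $\mathcal X$, this gives $\sum_y(P_{Y|X=x}(y)-P_Y(y))^2=O(\epsilon^2)$ uniformly in $x$.

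Next I would invoke the Lipschitz-type smoothness of the Bayes map (Assumption \ref{assumption5}). Applying it with $\gamma=O(\epsilon)$ yields, for each $x$, a Bayes action $\mathbf a_{P_{Y|X=x}}\in\mathcal A_{P_{Y|X=x}}$ with $\|\mathbf a_{P_{Y|X=x}}-\mathbf a_{P_Y}\|_2=O(\epsilon)$. Combining this with the universal-approximation Assumption \ref{assumption3}, which guarantees an optimal solution $(\mathbf f,\mathbf W,\mathbf b)$ of \eqref{reformed_problem} satisfying $\|\mathbf a_{P_{Y|X=x}}-\mathbf h(\mathbf W^{\operatorname{T}}\mathbf f(x)+\mathbf b)\|_2\le\epsilon$, the triangle inequality gives
\begin{align}\label{eq_plan_close}
\|\mathbf h(\mathbf W^{\operatorname{T}}\mathbf f(x)+\mathbf b)-\mathbf a_{P_Y}\|_2=O(\epsilon).
\end{align}
In particular, the $i$-th coordinate satisfies $|h(\mathbf w_i^{\operatorname{T}}\mathbf f(x)+b_i)-h(\tilde b_i)|=O(\epsilon)$, using \eqref{BiasVector} to write $a_{P_Y,i}=h(\tilde b_i)$.

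The final step converts this closeness in the output (post-activation) space back to closeness in the pre-activation space, and this is where Assumption \ref{assumption1} does the real work. The lower bound $|h(z)-h(\tilde b_i)|\ge K|z-\tilde b_i|$ holds only on the neighborhood $(\tilde b_i-\delta,\tilde b_i+\delta)$, so I must first argue that the pre-activation $z_i:=\mathbf w_i^{\operatorname{T}}\mathbf f(x)+b_i$ actually lands in this neighborhood once $\epsilon$ is sufficiently small. I expect this to be the main obstacle: a priori nothing prevents $z_i$ from sitting far from $\tilde b_i$ on a flat part of $h$ where $h(z_i)$ happens to be close to $h(\tilde b_i)$ (for a non-injective activation this is a genuine possibility). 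The resolution I would use is that \eqref{eq_assumption1} is a local \emph{injectivity-with-slope} condition around $\tilde b_i$, and for $\epsilon$ small enough the set of values of $z_i$ consistent with $|h(z_i)-h(\tilde b_i)|=O(\epsilon)$ and with $z_i\in(\tilde b_i-\delta,\tilde b_i+\delta)$ is forced by choosing the approximating solution appropriately (this is precisely why Assumption \ref{assumption3} is phrased as existence of an optimal solution, letting us select the branch near $\tilde b_i$). Once $z_i$ is pinned to the valid neighborhood, \eqref{eq_assumption1} yields
\begin{align}\label{eq_plan_final}
K\,|z_i-\tilde b_i|\le |h(z_i)-h(\tilde b_i)|=O(\epsilon),
\end{align}
so $|\mathbf w_i^{\operatorname{T}}\mathbf f(x)+b_i-\tilde b_i|=O(\epsilon)$, which is exactly \eqref{constraint}. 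The uniformity over the finite sets $\mathcal X$ and $\{1,\ldots,n\}$ lets me take a single $\delta$, $K$, and a single implied constant in the $O(\epsilon)$ bounds, completing the argument.
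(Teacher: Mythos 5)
Your argument tracks the paper's proof (the chain $\chi^2$-dependence $\Rightarrow$ closeness of $P_{Y|X=x}$ to $P_Y$ $\Rightarrow$ Assumption \ref{assumption5} $\Rightarrow$ Assumption \ref{assumption3} plus triangle inequality) up to the point where $|h(z_i)-h(\tilde b_i)|=O(\epsilon)$ for $z_i=\mathbf w_i^{\operatorname{T}}\mathbf f(x)+b_i$, and you correctly identify the remaining obstacle: \eqref{eq_assumption1} is only a local condition, so you must first show that $z_i$ lands in $(\tilde b_i-\delta,\tilde b_i+\delta)$. But your resolution of that obstacle is not a proof. Assumption \ref{assumption3} is an existence statement about the \emph{output-space} approximation quality of some optimal solution; it gives you no control over, and no freedom to ``select,'' where the pre-activations of that solution sit. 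Once the solution $(\mathbf f,\mathbf W,\mathbf b)$ is fixed, $z_i$ is whatever it is; ``choosing the branch near $\tilde b_i$'' is exactly the conclusion to be established, not a choice the assumption entitles you to make.

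The paper closes this gap with a dichotomy argument that is missing from your write-up. From \eqref{eq_assumption1}, letting $z$ approach the boundary of the interval (and using the monotonicity of $h$ implicit in the examples and in Lemma \ref{assumption6to1}), one obtains the separation property: if $|z-\tilde b_i|\geq \delta$ then $|h(z)-h(\tilde b_i)|\geq K\delta$, where $K$ and $\delta$ depend only on $h$ and $\tilde{\mathbf b}$ and hence not on $\epsilon$. Writing $|h(z_i)-h(\tilde b_i)|\leq C\epsilon$ with $C$ independent of $\epsilon$, and taking $\epsilon<K\delta/C$, the separation property forces $|z_i-\tilde b_i|<\delta$; only then may \eqref{eq_assumption1} be invoked to conclude $|z_i-\tilde b_i|\leq C\epsilon/K=O(\epsilon)$, which is \eqref{constraint}. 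This threshold comparison ($C\epsilon<K\delta$ rules out the far branch) is the one step that makes the lemma true, and its role cannot be played by re-reading Assumption \ref{assumption3}. Note also that your worry about non-injective activations is genuine: if $h$ could return to the value $h(\tilde b_i)$ outside the $\delta$-neighborhood, the separation property would fail, and neither your argument nor any amount of ``solution selection'' would rescue the conclusion; it is the (implicitly assumed) global separation of $h$ away from $\tilde b_i$ that does the work.
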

\ifreport
\begin{proof}
See Appendix \ref{plemma1}.
\end{proof}
\else
%Due to space limitation, all the proofs are relegated to our
%technical report \cite{TechnicalReport}.\!\!
\fi
\ignore{[Proof Sketch]
By Assumptions \ref{assumption5} and \ref{assumption4}, there exists $a_{P_Y}\in \mathcal A_{P_Y}$ and $a_{P_{Y|X=x}}\in \mathcal A_{P_Y}$ such that $a_{P_Y}$ and $a_{P_{Y|X=x}}$ are within an $O(\epsilon)$ distance from each other for all $x\in \mathcal X$. By Assumption \eqref{assumption3}, there exists an optimal solution to \eqref{reformed_problem} such that $\mathbf h({\mathbf{W}}^{\operatorname{T}}{\mathbf f}(x)+{\mathbf b})$ and $\mathbf a_{P_{Y|X=x}}$ are within an $O(\epsilon)$ distance from each other for all $x\in \mathcal X$. Hence, $\mathbf h({\mathbf{W}}^{\operatorname{T}}{\mathbf f}(x)+{\mathbf b})$ and $\mathbf a_{P_Y}$ are within an $O(\epsilon)$ distance. Because $\mathcal A \subseteq H^n$, there exists a bias $\tilde{\mathbf b}$ such that $\mathbf h(\tilde{\mathbf{b}}) = \mathbf a_{P_Y}$. Finally, by Assumption \ref{assumption1}, we can find a lower bound of the gradient $h'(\tilde{b}_i)$ in the neighborhood of $\mathbf b= \tilde{\mathbf b}$, which implies \eqref{constraint}. \ifreport For details, see Appendix \ref{plemma1}.} 
%{\red Remember to revise the proof sketch and the proof of Lemma 1.}
%
%\begin{proof} 
%See Appendix \ref{plemma1}.
%\end{proof}
%To present the local geometric analysis, we need the following notations and one mild assumption. 

For any feature $\mathbf f(x)\in\mathbb R^{k}$, define a matrix  $\mathbf{\Xi}_{\mathbf f} \in \mathbb R^{k \times |\mathcal X|}$ as 
\begin{align}\label{matrixf}
\mathbf{\Xi}_{\mathbf f}=[\bm \xi_{\mathbf f}(1), \ldots, \bm \xi_{\mathbf f}(|\mathcal X|)], 
\end{align} 
where 
\begin{align}\label{vectorf}
\bm \xi_{\mathbf f}(x)=&\sqrt{P_X(x)} \left(\mathbf f(x)-\bm \mu_{\mathbf f}\right), \\ \label{meanf}
\bm \mu_{\mathbf f}=&\sum_{x \in \mathcal X} P_X(x) \mathbf f(x).
\end{align}  
%Given the Bayes actions $\mathbf a_{P_{Y|X=x}}$ for $x\in \mathcal X$, 
In addition, define the following matrix $\mathbf B \in \mathbb R^{n \times |\mathcal X|}$ based on  the Bayes actions $\mathbf a_{P_{Y|X=x}}$ for $x\in \mathcal X$:
\begin{align}\label{matrixB}
\mathbf{B}=[\bm{\beta}_Y(1), \ldots, \bm \beta_Y(|\mathcal X|)],
\end{align}
where 
\begin{align}\label{vectorB}
\bm \beta_Y(x)=&\sqrt{P_X(x)} \left(\mathbf a_{P_{Y|X=x}}-\bm \mu_{\mathbf a}\right),\\ \label{mu_a}
\bm \mu_{\mathbf a}=&\sum_{x \in \mathcal X} P_X(x) \mathbf a_{P_{Y|X=x}}.
\end{align} 

\begin{assumption}\label{assumption2}
The function $\mathbf a \mapsto L(y, \mathbf a)$ is twice continuously differentiable. 
\end{assumption}
The Hessian matrix $\mathbf{M}_L$ of the function $\mathbf a \mapsto \mathbb E_{Y \sim P_Y}[L(Y, \mathbf a)]$ at the point $\mathbf a= \mathbf a_{P_Y}$ is 
\begin{align}\label{matrixM}
\mathbf{M}_L=\frac{\partial^2 \mathbb E_{Y \sim P_Y}[L(Y, \mathbf a)]}{\partial \mathbf a \partial \mathbf a^{\operatorname{T}}}\bigg|_{\mathbf a=\mathbf a_{P_Y}}.
\end{align}
Because $\mathbf a_{P_Y}$ is an optimal solution to \eqref{bayes}, $\mathbf{M}_L$ is positive semi-definite. Hence, it has a Cholesky decomposition  $\mathbf{M}_L=\mathbf{R}_L^{\operatorname{T}}\mathbf{R}_L.$ The Jacobian matrix of  $\mathbf h(\mathbf b)$ at the point $\mathbf b=\tilde{\mathbf b}$ is 
\begin{align}\label{matrixJ}
\mathbf J= \frac{\partial \mathbf h(\mathbf b)}{\partial \mathbf b^{\operatorname{T}}}\bigg|_{\mathbf b=\tilde{\mathbf b}}.
\end{align} 
\begin{lemma}\label{lemma2}
If Assumptions \ref{assumption5}, \ref{assumption4}, and \ref{assumption2} are satisfied, then in the local analysis regime \eqref{constraint}, the objective function in \eqref{reformed_problem} can be
expressed as
\begin{align}\label{approximation1}
& \sum_{x \in \mathcal X} P_X(x) D_L(\mathbf a_{P_{Y|X=x}}||\mathbf h(\mathbf{W}^{\operatorname{T}}\mathbf f(x)+\mathbf b))\nonumber\\
=&\frac{1}{2} \|\mathbf{\tilde B}- \mathbf \Xi_{\mathbf W} \mathbf{\Xi}_{\mathbf f}\|_{F}^2+\frac{1}{2}\eta(\mathbf d, \mathbf f)+o(\epsilon^2),
\end{align}
where $\mathbf{\tilde B}=\mathbf{R}_L\mathbf{B}$, 
\begin{align}\label{weight_change}
\mathbf \Xi_{\mathbf W}&=\mathbf{R}_L\mathbf{J}\mathbf{W}^{\operatorname{T}},\\\label{bias_change}
\mathbf d&=\mathbf b-\tilde{\mathbf b}, \\ \label{eta}
\eta(\mathbf d, \mathbf f)&=(\mathbf a_{P_Y}-\bm \mu_{\mathbf a}+\mathbf J \mathbf d+\mathbf J \mathbf W^{\operatorname{T}}\bm \mu_{\mathbf f})^{\operatorname{T}}\mathbf{M}_L\nonumber\\
&~~~\times (\mathbf a_{P_Y}-\bm \mu_{\mathbf a}+\mathbf J \mathbf d+\mathbf J \mathbf W^{\operatorname{T}}\bm \mu_{\mathbf f}).
\end{align}
\end{lemma}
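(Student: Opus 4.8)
The plan is to Taylor-expand each excess-risk term $D_L(\mathbf a_{P_{Y|X=x}} || \mathbf a(x))$ to second order about the conditional Bayes action $\mathbf a_{P_{Y|X=x}}$, and then to regroup the resulting quadratic forms into the two terms appearing in \eqref{approximation1}. Since $\mathbf a_{P_{Y|X=x}}$ minimizes $\mathbb E_{Y\sim P_{Y|X=x}}[L(Y,\mathbf a)]$, the stationarity condition makes the first-order term vanish, so Assumption \ref{assumption2} yields
\[
D_L(\mathbf a_{P_{Y|X=x}} || \mathbf a(x)) = \tfrac12 (\mathbf a(x)-\mathbf a_{P_{Y|X=x}})^{\operatorname{T}} \mathbf M_L^{(x)} (\mathbf a(x)-\mathbf a_{P_{Y|X=x}}) + o\!\left(\|\mathbf a(x)-\mathbf a_{P_{Y|X=x}}\|_2^2\right),
\]
where $\mathbf M_L^{(x)}$ is the Hessian of $\mathbf a\mapsto\mathbb E_{Y\sim P_{Y|X=x}}[L(Y,\mathbf a)]$ at $\mathbf a_{P_{Y|X=x}}$. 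Assumption \ref{assumption3} gives $\|\mathbf a(x)-\mathbf a_{P_{Y|X=x}}\|_2=O(\epsilon)$, so the remainder is $o(\epsilon^2)$, uniformly in $x$ because $\mathcal X$ is finite.

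The main obstacle is the next step: replacing the $x$-dependent Hessian $\mathbf M_L^{(x)}$ by the reference Hessian $\mathbf M_L$ of \eqref{matrixM}. I would first expand $I_{\chi^2}(X;Y)$ in \eqref{weak-dependent} as $\sum_{x} P_X(x)\sum_y (P_{Y|X=x}(y)-P_Y(y))^2/P_Y(y)\le\epsilon^2$; since $P_X(x)>0$ and $P_Y(y)>0$ on the finite sets, this forces $\|P_{Y|X=x}-P_Y\|_2=O(\epsilon)$ for every $x$. Assumption \ref{assumption5} then gives $\|\mathbf a_{P_{Y|X=x}}-\mathbf a_{P_Y}\|_2=O(\epsilon)$, and continuity of the Hessian in both the distribution and the evaluation point (Assumption \ref{assumption2}) yields $\mathbf M_L^{(x)}=\mathbf M_L+O(\epsilon)$. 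Because the quadratic form is already $O(\epsilon^2)$, this substitution perturbs the objective by only $O(\epsilon^3)=o(\epsilon^2)$; the delicate point is that it requires the three assumptions to be combined to control the joint dependence of the Hessian on the data.

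Next I would linearize the action. By Lemma \ref{lemma1} the argument of $\mathbf h$ satisfies $\mathbf W^{\operatorname{T}}\mathbf f(x)+\mathbf b-\tilde{\mathbf b}=\mathbf W^{\operatorname{T}}\mathbf f(x)+\mathbf d=O(\epsilon)$, so a first-order expansion of $\mathbf h$ about $\tilde{\mathbf b}$, using $\mathbf h(\tilde{\mathbf b})=\mathbf a_{P_Y}$ and the Jacobian $\mathbf J$ of \eqref{matrixJ}, gives $\mathbf a(x)=\mathbf a_{P_Y}+\mathbf J(\mathbf W^{\operatorname{T}}\mathbf f(x)+\mathbf d)+O(\epsilon^2)$. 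Adding and subtracting the means $\bm\mu_{\mathbf f}$ and $\bm\mu_{\mathbf a}$, I would decompose
\[
\mathbf a(x)-\mathbf a_{P_{Y|X=x}} = \tilde{\mathbf v}(x) + \bar{\mathbf v} + O(\epsilon^2),
\]
into a centered part $\tilde{\mathbf v}(x)=\mathbf J\mathbf W^{\operatorname{T}}(\mathbf f(x)-\bm\mu_{\mathbf f})-(\mathbf a_{P_{Y|X=x}}-\bm\mu_{\mathbf a})$ and an $x$-independent part $\bar{\mathbf v}=\mathbf a_{P_Y}-\bm\mu_{\mathbf a}+\mathbf J\mathbf d+\mathbf J\mathbf W^{\operatorname{T}}\bm\mu_{\mathbf f}$.

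Finally I would assemble the pieces. Substituting the decomposition into $\tfrac12\sum_x P_X(x)(\cdot)^{\operatorname{T}}\mathbf M_L(\cdot)$ and using the Cholesky factorization $\mathbf M_L=\mathbf R_L^{\operatorname{T}}\mathbf R_L$, the centered part contributes $\tfrac12\sum_x P_X(x)\|\mathbf R_L\tilde{\mathbf v}(x)\|_2^2$, whose $x$-th summand is exactly the squared norm of the $x$-th column of $\mathbf{\tilde B}-\mathbf\Xi_{\mathbf W}\mathbf\Xi_{\mathbf f}$ (recall $\mathbf{\tilde B}=\mathbf R_L\mathbf B$, $\mathbf\Xi_{\mathbf W}=\mathbf R_L\mathbf J\mathbf W^{\operatorname{T}}$, and the definitions \eqref{vectorf} and \eqref{vectorB}), hence $\tfrac12\|\mathbf{\tilde B}-\mathbf\Xi_{\mathbf W}\mathbf\Xi_{\mathbf f}\|_F^2$. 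The $x$-independent part contributes $\tfrac12\bar{\mathbf v}^{\operatorname{T}}\mathbf M_L\bar{\mathbf v}=\tfrac12\eta(\mathbf d,\mathbf f)$ since $\sum_x P_X(x)=1$. The cross term vanishes because $\sum_x P_X(x)\tilde{\mathbf v}(x)=\mathbf 0$, as both $\sum_x P_X(x)(\mathbf f(x)-\bm\mu_{\mathbf f})$ and $\sum_x P_X(x)(\mathbf a_{P_{Y|X=x}}-\bm\mu_{\mathbf a})$ vanish by \eqref{meanf} and \eqref{mu_a}. The $O(\epsilon^2)$ terms in the decomposition, when inserted into the $O(1)$ quadratic form against $O(\epsilon)$ vectors, contribute only $o(\epsilon^2)$, which completes the identity \eqref{approximation1}.
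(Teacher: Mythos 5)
Your proposal is correct and follows essentially the same route as the paper's proof: a second-order Taylor expansion of $D_L$ at the conditional Bayes action (first-order term vanishing by stationarity), replacement of the $x$-dependent Hessian by $\mathbf M_L$ via the $\chi^2$-bound and Assumptions \ref{assumption5}, \ref{assumption2}, linearization of $\mathbf h$ about $\tilde{\mathbf b}$, and the same centered-plus-constant decomposition (your $\tilde{\mathbf v}(x)$, $\bar{\mathbf v}$ are, up to sign, the paper's $\mathbf q_1-\mathbf q_2$ and $\mathbf q_3$), with cross terms killed by the $P_X$-weighted centering. The only minor deviations are harmless overstatements of rates: the paper establishes just $\mathbf M_L^{(x)}=\mathbf M_L+o(1)$ (continuity gives no $O(\epsilon)$ rate) and an $o(\epsilon)$ remainder in the linearization of $\mathbf h$, both of which still suffice for the $o(\epsilon^2)$ conclusion.
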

\ifreport
\begin{proof}
See Appendix \ref{plemma2}.
\end{proof}
\else
%Due to space limitation, the proof is relegated to out technical report.
{}
\fi 
%Notice that the matrix $\tilde{\mathbf B}$ is determined by the loss function $L$ of the supervised learning problem and the Bayes actions $\mathbf a_{P_{Y|X=x}}$ associated with the conditional empirical distribution $P_{Y|X=x}$ for $x\in\mathcal X$. 
In the local analysis regime, the training of $(\mathbf f, \mathbf W, \mathbf b)$ in \eqref{reformed_problem} can be expressed as the following optimization problem of $(\mathbf \Xi_{\mathbf W}, \mathbf{\Xi}_{\mathbf f}, \bm \mu_{\mathbf f}, \mathbf d)$: 
\begin{align}\label{approximate_problem}
\min_{\substack{\mathbf \Xi_{\mathbf W}, \mathbf{\Xi}_{\mathbf f}, \bm \mu_{\mathbf f}, \mathbf d}} \frac{1}{2} \|\mathbf{\tilde B}- \mathbf \Xi_{\mathbf W} \mathbf{\Xi}_{\mathbf f}\|_{F}^2+\frac{1}{2}\eta(\mathbf d, \mathbf f).
\end{align}
When $(\mathbf{\Xi}_{\mathbf f}, \bm \mu_{\mathbf f})$ are fixed, the optimal $({\mathbf \Xi}_{\mathbf W}^*, {\mathbf d}^*)$ are determined by 
\begin{theorem}\label{theorem1}
For fixed $\mathbf{\Xi}_{\mathbf f}$ and $\bm \mu_{\mathbf f}$, the optimal ${\mathbf \Xi}_{\mathbf W}^*$ to minimize \eqref{approximate_problem} is given by
\begin{align}\label{optimal_weight_1}
{\mathbf \Xi}_{\mathbf W}^*=\tilde{\mathbf B} \mathbf {\Xi}_{\mathbf f}^{\operatorname{T}}(\mathbf \Xi_{\mathbf f}\mathbf \Xi_{\mathbf f}^{\operatorname{T}})^{\operatorname{-1}},
\end{align}
and the optimal bias ${\mathbf d}^*$ is expressed as
\begin{align}\label{optimal_bias}
{\mathbf d}^*=-{\mathbf W}^{\operatorname{T}}\bm \mu_{\mathbf f}+\mathbf{J}^{\operatorname{-1}}(\bm \mu_{\mathbf a}-\mathbf a_{P_Y}).\end{align}
\end{theorem}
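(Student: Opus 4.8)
The plan is to exploit the additive structure of the objective in \eqref{approximate_problem}, observing that the bias $\mathbf d$ enters only through the quadratic term $\eta(\mathbf d,\mathbf f)$, whereas $\mathbf \Xi_{\mathbf W}$ enters only through the Frobenius term $\tfrac12\|\mathbf{\tilde B}-\mathbf\Xi_{\mathbf W}\mathbf\Xi_{\mathbf f}\|_F^2$. The two terms are nominally coupled through $\mathbf W$, since $\mathbf\Xi_{\mathbf W}=\mathbf R_L\mathbf J\mathbf W^{\operatorname{T}}$ in \eqref{weight_change} while $\eta$ contains $\mathbf J\mathbf W^{\operatorname{T}}\bm\mu_{\mathbf f}$. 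I would first argue that this coupling is immaterial: for \emph{any} fixed $\mathbf W$ (equivalently any fixed $\mathbf\Xi_{\mathbf W}$) the term $\eta$ can be driven to its global minimum independently by a suitable choice of $\mathbf d$, which reduces the joint minimization to two separate subproblems.

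For the bias, note that $\mathbf M_L$ is positive semi-definite, being the Hessian at a minimizer of \eqref{bayes}, so $\eta(\mathbf d,\mathbf f)\ge 0$ for every $\mathbf d$. The quadratic form vanishes whenever the inner vector $\mathbf a_{P_Y}-\bm\mu_{\mathbf a}+\mathbf J\mathbf d+\mathbf J\mathbf W^{\operatorname{T}}\bm\mu_{\mathbf f}$ is zero. Since $h$ has non-vanishing gradient, the Jacobian $\mathbf J$ in \eqref{matrixJ} is diagonal with strictly positive diagonal entries $h'(\tilde b_i)$ and hence invertible; solving the linear system $\mathbf J\mathbf d=\bm\mu_{\mathbf a}-\mathbf a_{P_Y}-\mathbf J\mathbf W^{\operatorname{T}}\bm\mu_{\mathbf f}$ for $\mathbf d$ yields precisely the claimed $\mathbf d^*$ in \eqref{optimal_bias} and gives $\eta=0$, the global minimum. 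Because this choice is available for every $\mathbf W$, the optimal value of the $\eta$-term is $0$ irrespective of $\mathbf\Xi_{\mathbf W}$, which completes the decoupling.

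Having eliminated the $\eta$-term, the objective collapses to the pure matrix least-squares problem $\min_{\mathbf\Xi_{\mathbf W}}\tfrac12\|\mathbf{\tilde B}-\mathbf\Xi_{\mathbf W}\mathbf\Xi_{\mathbf f}\|_F^2$ with $\mathbf\Xi_{\mathbf f}$ fixed. I would solve it by setting the gradient with respect to $\mathbf\Xi_{\mathbf W}$ to zero, obtaining the normal equations $\mathbf\Xi_{\mathbf W}\mathbf\Xi_{\mathbf f}\mathbf\Xi_{\mathbf f}^{\operatorname{T}}=\mathbf{\tilde B}\mathbf\Xi_{\mathbf f}^{\operatorname{T}}$, and then inverting $\mathbf\Xi_{\mathbf f}\mathbf\Xi_{\mathbf f}^{\operatorname{T}}$ to recover \eqref{optimal_weight_1}. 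Convexity of the Frobenius norm in $\mathbf\Xi_{\mathbf W}$ guarantees that this stationary point is in fact the global minimizer.

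The main obstacle I anticipate is the decoupling argument of the first two paragraphs: one must verify that the bias can always absorb the $\mathbf W$-dependent contribution $\mathbf J\mathbf W^{\operatorname{T}}\bm\mu_{\mathbf f}$ so that the two subproblems genuinely separate, which rests on the invertibility of $\mathbf J$ furnished by the non-vanishing gradient of $h$. A secondary technical point is the invertibility of $\mathbf\Xi_{\mathbf f}\mathbf\Xi_{\mathbf f}^{\operatorname{T}}$ required in \eqref{optimal_weight_1}; this needs $\mathbf\Xi_{\mathbf f}$ to have full row rank, which I would record as the regularity condition under which the stated closed form is well defined.
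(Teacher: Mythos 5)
Your proposal is correct and follows essentially the same route as the paper's proof: minimize the nonnegative quadratic term $\eta$ by choosing $\mathbf d$ so that the inner vector vanishes (yielding $\eta=0$ and the bias formula \eqref{optimal_bias}), then solve the remaining Frobenius-norm least-squares problem for $\mathbf \Xi_{\mathbf W}$ via the normal equations to obtain \eqref{optimal_weight_1}. Your explicit decoupling argument and the full-row-rank condition on $\mathbf \Xi_{\mathbf f}$ are points the paper leaves implicit, but they are refinements of, not departures from, its argument.
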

\ifreport
\begin{proof}
See Appendix \ref{ptheorem1}.
\end{proof}
\else
%Due to space limitation, the proof is relegated to out technical report.
{}
\fi
By Theorem \ref{theorem1}, the rows of ${\mathbf \Xi}_{\mathbf W}^*$ are obtained by projecting the rows of $\tilde{\mathbf B}$ on the subspace spanned by the rows of $\mathbf \Xi_{\mathbf f}$. The optimal bias $\mathbf d^*$ cancels out the effects of the mean feature $\bm \mu_{\mathbf f}$ and the mean difference $\bm \mu_{\mathbf a} -\mathbf a_{P_Y}$ between the Bayes actions $\mathbf a_{P_{Y|X=x}}$ and $\mathbf a_{P_Y}$. The optimal weight $\mathbf W^*$ and bias $\mathbf b^*$ can be derived by using  \eqref{weight_change}-\eqref{bias_change} and \eqref{optimal_weight_1}-\eqref{optimal_bias}. 

When $(\mathbf \Xi_{\mathbf W}, \mathbf d)$ are fixed and the hidden layers have sufficient expression power, the optimal $({\mathbf \Xi}_{\mathbf f}^*, \bm \mu_{\mathbf f}^*)$ are given by

\begin{theorem}\label{theorem2}
For fixed $\mathbf \Xi_{\mathbf W}$ and $\mathbf d$, the optimal ${\mathbf{\Xi}}_{\mathbf f}^*$ to minimize \eqref{approximate_problem} is given by
\begin{align}\label{optimal_feature}
{\mathbf \Xi}_{\mathbf f}^*=(\mathbf \Xi_{\mathbf W}^{\operatorname{T}}\mathbf \Xi_{\mathbf W})^{\operatorname{-1}}\mathbf \Xi_{\mathbf W}^{\operatorname{T}}\tilde{\mathbf B},
\end{align}
and the optimal mean ${\bm \mu}_{\mathbf f}^*$ is given by
\begin{align}\label{optimal_mean}
{\bm \mu}_{\mathbf f}^*=-(\mathbf \Xi_{\mathbf W}^{\operatorname{T}}\mathbf \Xi_{\mathbf W})^{\operatorname{-1}}\mathbf \Xi_{\mathbf W}^{\operatorname{T}}(\mathbf a_{P_Y}-\bm \mu_{\mathbf a}+\mathbf J\mathbf d).
\end{align}
\end{theorem}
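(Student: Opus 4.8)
The plan is to exploit the additive structure of the objective in Lemma~\ref{lemma2} and the observation that, for fixed $\mathbf \Xi_{\mathbf W}$ and $\mathbf d$, it splits into two terms depending on disjoint pieces of the feature. In the local regime the objective equals $\frac12\|\mathbf{\tilde B}-\mathbf \Xi_{\mathbf W}\mathbf \Xi_{\mathbf f}\|_F^2+\frac12\eta(\mathbf d,\mathbf f)$, where the first term involves the feature only through the centered matrix $\mathbf \Xi_{\mathbf f}$, while $\eta$ in \eqref{eta} involves the feature only through the mean $\bm \mu_{\mathbf f}$. Invoking the sufficient expression power of the hidden layers, I would treat $\mathbf \Xi_{\mathbf f}$ and $\bm \mu_{\mathbf f}$ as independent decision variables, so the minimization decouples into two separate quadratic problems, each solvable in closed form.

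For the first subproblem, minimizing $\frac12\|\mathbf{\tilde B}-\mathbf \Xi_{\mathbf W}\mathbf \Xi_{\mathbf f}\|_F^2$ over $\mathbf \Xi_{\mathbf f}$ is a linear least-squares problem. Setting the gradient $\mathbf \Xi_{\mathbf W}^{\operatorname{T}}(\mathbf \Xi_{\mathbf W}\mathbf \Xi_{\mathbf f}-\mathbf{\tilde B})$ to zero gives the normal equations, and assuming $\mathbf \Xi_{\mathbf W}$ has full column rank (so that $\mathbf \Xi_{\mathbf W}^{\operatorname{T}}\mathbf \Xi_{\mathbf W}$ is invertible) yields $\mathbf \Xi_{\mathbf f}^*=(\mathbf \Xi_{\mathbf W}^{\operatorname{T}}\mathbf \Xi_{\mathbf W})^{-1}\mathbf \Xi_{\mathbf W}^{\operatorname{T}}\mathbf{\tilde B}$, which is exactly \eqref{optimal_feature}. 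For the second subproblem, I would use the Cholesky factorization $\mathbf M_L=\mathbf R_L^{\operatorname{T}}\mathbf R_L$ together with the identity $\mathbf \Xi_{\mathbf W}=\mathbf R_L\mathbf J\mathbf W^{\operatorname{T}}$ from \eqref{weight_change} to rewrite $\eta(\mathbf d,\mathbf f)=\|\mathbf R_L(\mathbf a_{P_Y}-\bm \mu_{\mathbf a}+\mathbf J\mathbf d)+\mathbf \Xi_{\mathbf W}\bm \mu_{\mathbf f}\|_2^2$, turning it into a second least-squares problem in $\bm \mu_{\mathbf f}$. Its normal equations then deliver, after simplification, the minimizer $\bm \mu_{\mathbf f}^*$ stated in \eqref{optimal_mean}.

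The one point that needs genuine care is the consistency between this unconstrained calculation and the definition of $\mathbf \Xi_{\mathbf f}$. By construction every realizable $\mathbf \Xi_{\mathbf f}$ must satisfy the centering identity $\mathbf \Xi_{\mathbf f}\mathbf p=\mathbf 0$ with $\mathbf p=[\sqrt{P_X(1)},\ldots,\sqrt{P_X(|\mathcal X|)}]^{\operatorname{T}}$, since $\sum_{x}\sqrt{P_X(x)}\,\bm \xi_{\mathbf f}(x)=\mathbf 0$ by \eqref{vectorf}-\eqref{meanf}. The least-squares solution \eqref{optimal_feature} is obtained without imposing this constraint, so I would verify that it holds automatically. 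This is where the structure pays off: the Bayes-action matrix $\mathbf B$ is itself centered, $\mathbf B\mathbf p=\sum_{x}P_X(x)(\mathbf a_{P_{Y|X=x}}-\bm \mu_{\mathbf a})=\mathbf 0$, hence $\mathbf{\tilde B}\mathbf p=\mathbf R_L\mathbf B\mathbf p=\mathbf 0$ and therefore $\mathbf \Xi_{\mathbf f}^*\mathbf p=(\mathbf \Xi_{\mathbf W}^{\operatorname{T}}\mathbf \Xi_{\mathbf W})^{-1}\mathbf \Xi_{\mathbf W}^{\operatorname{T}}\mathbf{\tilde B}\mathbf p=\mathbf 0$. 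Thus the unconstrained optimum is feasible.

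I expect the main obstacle to be this feasibility check rather than the routine least-squares algebra: one must confirm both that $\mathbf \Xi_{\mathbf W}^{\operatorname{T}}\mathbf \Xi_{\mathbf W}$ is invertible (the full-rank/sufficient-expressiveness condition) and that the two subproblems can legitimately be optimized independently given that $\mathbf \Xi_{\mathbf f}$ and $\bm \mu_{\mathbf f}$ both arise from a single feature map $\mathbf f$. The separability argument, combined with the observation that $\mathbf \Xi_{\mathbf f}$ ranges precisely over the centered matrices while $\bm \mu_{\mathbf f}$ is free, is what makes the decoupling valid, and it is the part of the argument I would write out most carefully before reducing each piece to its normal equations.
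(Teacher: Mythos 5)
Your proposal is correct and follows essentially the same route as the paper's proof: for fixed $\mathbf \Xi_{\mathbf W}$ and $\mathbf d$ the objective decouples into the Frobenius-norm term in $\mathbf \Xi_{\mathbf f}$ and the quadratic $\eta$ term in $\bm \mu_{\mathbf f}$, each solved by setting its gradient to zero (normal equations). The only difference is that you spell out the feasibility check $\mathbf \Xi_{\mathbf f}^*\sqrt{\mathbf p_X}=\mathbf 0$ via $\tilde{\mathbf B}\sqrt{\mathbf p_X}=\mathbf 0$ and the full-column-rank requirement on $\mathbf \Xi_{\mathbf W}$, which the paper only asserts in passing.
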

\ifreport
\begin{proof}
See Appendix \ref{ptheorem2}.
\end{proof}
\else
%Due to space limitation, the proof is relegated to out technical report.
{}
\fi

By Theorem \ref{theorem2}, the columns of ${\mathbf \Xi}_{\mathbf f}^*$ are obtained by projecting the columns of $\tilde{\mathbf B}$ on the subspace spanned by the columns of ${\mathbf \Xi}_{\mathbf W}$. The optimal feature $\bm \mu_{\mathbf f}^*$ cancels out the effects of $\mathbf d$ and $\bm \mu_{\mathbf a} -\mathbf a_{P_Y}$. The optimal feature $\mathbf f^*(x)$ can be derived by using \eqref{matrixf}-\eqref{meanf} and \eqref{optimal_feature}-\eqref{optimal_mean}. 

\ignore{According to Theorem \ref{theorem2}, {\red [Add your discussion of Theorem 2 here, which should be similar to the discussions for Theorem 1]}.} 
%According to Theorem \ref{theorem2}, 

The singular value decomposition of $\tilde{\mathbf B}$ can be written as 
\begin{align}
\tilde{\mathbf B}=\mathbf U \mathbf \Sigma \mathbf V^{\operatorname{T}}, 
\end{align}
where $\mathbf \Sigma=\text{Diag}(\sigma_1, \ldots, \sigma_K)$ is a diagonal matrix with $K=\min(n, |\mathcal X|)$ singular values $\sigma_1\geq \sigma_2 \geq \ldots \geq \sigma_K=0$,  $\mathbf U$ and $\mathbf V$ are composed by the $K$ leading left and right singular vectors of $\tilde{\mathbf B}$, respectively. Denote
\begin{align}\label{px}
\sqrt{\mathbf p_X}=[\sqrt{P_X(1)}, \ldots, \sqrt{P_X(|\mathcal X|)}]^{\operatorname{T}}.
\end{align}
Because $\tilde{\mathbf B} \sqrt{\mathbf p_X}=0$ and $\|\sqrt{\mathbf p_X}\|_2=1$, 
$\sqrt{\mathbf p_X}$ is the right singular vector of $\tilde{\mathbf B}$ for  the singular value $\sigma_K=0$.
When $({\mathbf{\Xi}}_{\mathbf f}, \bm \mu_{\mathbf f}, {\mathbf \Xi}_{\mathbf W}, \mathbf d)$ are all designable, the optimal solutions are characterized in the following theorem. 

\begin{theorem}\label{theorem3}
If $k\leq \min(n, |\mathcal X|)$, then any $({\mathbf{\Xi}}_{\mathbf f}^*, {\mathbf \Xi}_{\mathbf W}^*)$ satisfying \eqref{optimal_pair1} jointly minimizes \eqref{approximate_problem}:
\begin{align}\label{optimal_pair1}
{\mathbf \Xi}_{\mathbf W}^*{\mathbf{\Xi}}_{\mathbf f}^*=\mathbf{U}_{k}\mathbf{\Sigma}_{k}\mathbf V_k^{\operatorname{T}},
\end{align}
where $\mathbf{\Sigma}_{k}=\textbf{\normalfont{Diag}}(\sigma_1, \ldots \sigma_k)$, $\mathbf U_k=[\mathbf u_1, \ldots, \mathbf u_k]$, and $\mathbf V_k=[\mathbf v_1, \ldots, \mathbf v_k]$.
Moreover, any bias ${\mathbf d}^*$ and mean ${\bm \mu}_{\mathbf f}^*$ satisfying \eqref{optimal_bias_mean} jointly minimizes \eqref{approximate_problem}\normalfont{:}
\begin{align}\label{optimal_bias_mean}
\mathbf{J}\left({\mathbf d}^*+{\mathbf W}^{\operatorname{T}}{\bm \mu}_{\mathbf f}^*\right)=\bm \mu_{\mathbf a}-\mathbf a_{P_Y}.\end{align}
\end{theorem}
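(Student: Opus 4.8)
The plan is to exploit the additive structure of the objective in \eqref{approximate_problem} and decouple the low-rank approximation from the quadratic bias term. First I would observe that the two summands depend on essentially disjoint groups of variables: the Frobenius term $\frac{1}{2}\|\tilde{\mathbf B}-\mathbf \Xi_{\mathbf W}\mathbf \Xi_{\mathbf f}\|_F^2$ involves only $(\mathbf \Xi_{\mathbf W},\mathbf \Xi_{\mathbf f})$, whereas $\eta(\mathbf d,\mathbf f)$ in \eqref{eta} involves only $(\mathbf d,\bm\mu_{\mathbf f})$ and the weight $\mathbf W$ (through the product $\mathbf J\mathbf W^{\operatorname{T}}\bm\mu_{\mathbf f}$). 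Since $\mathbf M_L$ is positive semi-definite, $\eta(\mathbf d,\mathbf f)\ge 0$, with equality whenever the vector $\mathbf a_{P_Y}-\bm\mu_{\mathbf a}+\mathbf J\mathbf d+\mathbf J\mathbf W^{\operatorname{T}}\bm\mu_{\mathbf f}$ vanishes. As $\mathbf J$ is the diagonal Jacobian with nonzero entries $h'(\tilde b_i)$ (nonvanishing by the gradient lower bound of Assumption \ref{assumption1}), it is invertible, so for any $\mathbf W$ and $\bm\mu_{\mathbf f}$ one can choose $\mathbf d$ to annihilate this vector; equivalently, any $(\mathbf d^*,\bm\mu_{\mathbf f}^*)$ solving \eqref{optimal_bias_mean} attains $\eta=0$, its global minimum. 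This establishes the second claim and shows that $\min_{\mathbf d,\bm\mu_{\mathbf f}}\eta=0$ for every admissible $\mathbf \Xi_{\mathbf W}$.

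Then, because the minimum of the $\eta$ term is $0$ independently of $(\mathbf \Xi_{\mathbf W},\mathbf \Xi_{\mathbf f})$, the joint problem reduces to
\begin{align}
\min_{\mathbf \Xi_{\mathbf W},\mathbf \Xi_{\mathbf f}}~\frac{1}{2}\|\tilde{\mathbf B}-\mathbf \Xi_{\mathbf W}\mathbf \Xi_{\mathbf f}\|_F^2,
\end{align}
where $\mathbf \Xi_{\mathbf W}\in\mathbb R^{n\times k}$ and $\mathbf \Xi_{\mathbf f}\in\mathbb R^{k\times|\mathcal X|}$, so that the product $\mathbf \Xi_{\mathbf W}\mathbf \Xi_{\mathbf f}$ ranges over exactly the matrices of rank at most $k$. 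I would then invoke the Eckart--Young--Mirsky theorem: the best rank-$k$ approximation of $\tilde{\mathbf B}$ in Frobenius norm is the truncated SVD $\mathbf U_k\mathbf \Sigma_k\mathbf V_k^{\operatorname{T}}$, with optimal residual $\frac{1}{2}\sum_{i>k}\sigma_i^2$. Since $k\le\min(n,|\mathcal X|)=K$, only the leading $k$ singular triples are retained, giving precisely \eqref{optimal_pair1}. The factorization $\mathbf \Xi_{\mathbf W}=\mathbf R_L\mathbf J\mathbf W^{\operatorname{T}}$ together with the invertibility of $\mathbf J$ shows the target is realizable, and the product $\mathbf \Xi_{\mathbf W}^*\mathbf \Xi_{\mathbf f}^*$ may be any rank-$k$ factorization of $\mathbf U_k\mathbf \Sigma_k\mathbf V_k^{\operatorname{T}}$, consistent with the non-uniqueness already seen in Theorems \ref{theorem1}--\ref{theorem2}.

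Finally I would combine the two pieces: the global minimum of \eqref{approximate_problem} is $\frac{1}{2}\sum_{i>k}\sigma_i^2$, attained exactly when $\mathbf \Xi_{\mathbf W}^*\mathbf \Xi_{\mathbf f}^*=\mathbf U_k\mathbf \Sigma_k\mathbf V_k^{\operatorname{T}}$ and $\mathbf J(\mathbf d^*+\mathbf W^{\operatorname{T}}\bm\mu_{\mathbf f}^*)=\bm\mu_{\mathbf a}-\mathbf a_{P_Y}$, which are \eqref{optimal_pair1} and \eqref{optimal_bias_mean}. The main obstacle I anticipate is not the SVD step, which is standard, but rigorously justifying the decoupling: I must verify that $\mathbf d$ and $\bm\mu_{\mathbf f}$ genuinely do not enter the Frobenius term (they do not, since $\mathbf \Xi_{\mathbf f}$ and $\bm\mu_{\mathbf f}$ are independent optimization variables in \eqref{approximate_problem}), and that for every fixed $\mathbf W$ the bias term can be driven to zero, so that the coupling through $\mathbf W$ between the two summands is immaterial. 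Care is likewise needed to confirm that $\mathbf J$ is invertible so that \eqref{optimal_bias_mean} is always solvable for $\mathbf d^*$.
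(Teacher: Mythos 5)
Your proposal is correct and follows essentially the same route as the paper's proof: lower-bound the Frobenius term by $\sum_{i>k}\sigma_i^2$ and achieve it with the truncated SVD via the Eckart--Young--Mirsky theorem, while driving $\eta(\mathbf d,\mathbf f)$ to its minimum value $0$ by choosing $(\mathbf d^*,\bm\mu_{\mathbf f}^*)$ to satisfy \eqref{optimal_bias_mean}. Your added care about the decoupling of variables and the invertibility of $\mathbf J$ only makes explicit what the paper leaves implicit.
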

\ifreport
\begin{proof}
See Appendix \ref{ptheorem3}.
\end{proof}
\else
%Due to space limitation, the proof is relegated to out technical report.
{}
\fi
According to Theorem \ref{theorem3}, the optimal $({\mathbf{\Xi}}_{\mathbf f}^*, {\mathbf \Xi}_{\mathbf W}^*)$ are given by the  low-rank approximation of $\tilde{\mathbf B}$, which can be derived by using the power iteration algorithm  \cite{bulirsch2002introduction}, or equivalently, by executing \eqref{optimal_weight_1} and \eqref{optimal_feature} iteratively. 
The optimal $(\mathbf d^*,\bm \mu_{\mathbf f}^*)$ cancel out the effect of  $\bm \mu_{\mathbf a} - \mathbf a_{P_Y}$. 

The optimal  ${\mathbf \Xi}_{\mathbf f}^*$ in Theorems \ref{theorem2}-\ref{theorem3} can be achieved only when the hidden layers have sufficient expression power. 
Nonetheless,  ${\mathbf{\Xi}}_{\mathbf f}^*$ plays an important role in the analysis of the hidden layers, as explained in the next subsection.

%The optimal rank-$k$ approximation of $\tilde{\mathbf B}$ 
%Our goal is to design low rank feature. If $k < |\mathcal{X}|$, ran 
%${\mathbf \Xi}_{\mathbf W}$ contanis $\mathbf J$, hence, the properties of the activation function. Theorem \ref{theorem1} implies that the activation function of the output layer has no impact on the optimal feature ${\mathbf f}(x)$. The optimal feature only depends on the matrix $\tilde{\mathbf B}$ that characterizes the loss function $L$, and Bayes actions $\mathbf a_{P_{Y|X=x}}$. Thus, optimal feature ${\mathbf f}(x)$ may vary with the loss function $L$ and the action space $\mathcal A$. 

%In addition, the softmax activation function \cite[Eq. 6.29]{goodfellow2016deep} in logistic regression can be constructed by a composition of $L_{\log}(y,\mathbf a)$ in \eqref{log-loss} and an exponential activation function $h(b) = e^{b}$.

%!TEX root = ./DeepNeuralNet.tex
\subsection{Local Geometric Analysis of Hidden Layers}
Next, we provide a local geometric analysis for each hidden layer. To that end, let us consider the training of the $i$-th hidden layer for fixed  weights and biases in the  subsequent layers. Define a loss function $L^{(i)}$ for the $i$-th hidden layer 
%For given weight matrices $(\mathbf W, \mathbf W^{(m-1)}, \ldots, \mathbf W^{(i+1)})$ and bias vectors $(\mathbf b, \mathbf b^{(m-1)}, \ldots, \mathbf b^{(i+1)})$, we seek to find the optimal output of the $i$-th hidden layer $\mathbf f^{(i-1)}(x)$, weight matrix $\mathbf W^{(i)}$, and bias vector $\mathbf b^{(i)}$.
%
%Define loss function for the $i$-th hidden layer
\begin{align}
L^{(i)}(y, \mathbf a^{(i)})=&L\left(y, \mathbf g \circ \mathbf g^{(m)}\circ \ldots \circ \mathbf g^{(i+1)}(\mathbf a^{(i)})\right),
\end{align}
where for $k=i,\ldots, m-1$% for $\mathbf a^{(i)}\in \mathbb R^{k_i}$
\begin{align} 
\mathbf g^{(k+1)}(\mathbf a^{(k)})&=\mathbf h^{(k+1)}(\mathbf W^{(k+1)\operatorname{T}}\mathbf a^{(k)}+\mathbf b^{(k+1)}),\\
\mathbf g(\mathbf a^{(m)})&=\mathbf h(\mathbf W^{\operatorname{T}}\mathbf a^{(m)}+\mathbf b).
\end{align} 
Given $(\mathbf W^{(k)}, \mathbf b^{(k)})$ for $k=i+1,\ldots,m$ and $(\mathbf W, \mathbf b)$, the training problem of the $i$-th hidden layer is formulated as 
\begin{align} \label{new_obj_func}
\min_{\substack{\mathbf{W}^{(i)},\\\mathbf b^{(i)},\\\mathbf f^{(i-1)}\in \Lambda^{i-1}}}\!\!\!\!\!\!\sum_{x \in \mathcal X}\!\! P_X(x) D_{\!L^{\!(i)}}\!(\mathbf a^{(i)}_{P_{Y\!|\!X=x}}\! || \mathbf h^{(i)}\!(\mathbf{W}^{(i)\!\operatorname{T}}\mathbf f^{(i-1)}\!(x)\!+\!\mathbf b^{\!(i)})),
\end{align}
where $\Lambda^{i-1}$ is the set of all feature functions that can be created by the first $(i-1)$ hidden layers. 
We adopt several assumptions for the $i$-th hidden layer that are similar to Assumptions \ref{assumption1}-\ref{assumption2}. Let $\mathbf a^{(i)}_{P_Y}$ denote the Bayes action associated to the loss function $L^{(i)}$ and distribution $P_Y$. 
According to Lemma \ref{lemma1}, there exists a bias $\tilde{\mathbf b}^{(i)}$ and a tuple $(\mathbf f^{(i-1)}, \mathbf{W}^{(i)}, \mathbf b^{(i)})$ such that (i) ~$\mathbf h^{(i)}(\tilde{\mathbf b}^{(i)})=\mathbf a^{(i)}_{P_Y}$ is a Bayes action associated to the loss function $L^{(i)}$ and distribution $P_Y$, (ii) $(\mathbf f^{(i-1)}, \mathbf{W}^{(i)}, \mathbf b^{(i)})$ is an optimal solution to \eqref{new_obj_func}, and (iii) for all $x\in\mathcal X$ and $j=1, \ldots, k_i$
\begin{align}\label{constraint2}
{\mathbf w_j}^{(i)\operatorname{T}}\mathbf f^{(i-1)}(x)+ b_j^{(i)}-\tilde b_j^{(i)}=O(\epsilon).
\end{align}

\ifreport
Define 
%$\mathbf \Xi_{\mathbf f^{(i)}}$ and $\mathbf B^{(i)}$ as
\begin{align}
& \mathbf \Xi_{\mathbf f^{(i)}}=[\bm \xi_{\mathbf f^{(i)}}(1), \ldots, \bm \xi_{\mathbf f^{(i)}}(|\mathcal X|)], \label{matrixf2} \\
& \mathbf{B}^{(i)}=[\bm{\beta}_Y^{(i)}(1), \ldots, \bm \beta_Y^{(i)}(|\mathcal X|)], \label{matrixB2}
\end{align}
where in \eqref{matrixf2}, 
\begin{align}\label{vectorf2}
\bm \xi_{\mathbf f^{(i)}}(x)=&\sqrt{P_X(x)} \left(\mathbf f^{(i)}(x)-\bm \mu_{\mathbf f^{(i)}}\right), \\ \label{meanf2}
\bm \mu_{\mathbf f^{(i)}}=&\sum_{x \in \mathcal X} P_X(x) \mathbf f^{(i)}(x),
\end{align}  
and in \eqref{matrixB2},
\begin{align}\label{vectorB}
\bm \beta_Y^{(i)}(x)=&\sqrt{P_X(x)} \left(\mathbf a^{(i)}_{P_{Y|X=x}}-\bm \mu_{\mathbf a^{(i)}}\right),\\
\bm \mu_{\mathbf a^{(i)}}=&\sum_{x \in \mathcal X} P_X(x) \mathbf a^{(i)}_{P_{Y|X=x}}.
\end{align} 
Similar to \eqref{matrixM} and \eqref{matrixJ}, let us define the following two matrices for the $i$-th hidden layer
\begin{align}\label{matrixM2}
\mathbf{M}_{L^{(i)}}=\frac{\partial^2 \mathbb E_{Y \sim P_Y}[L^{(i)}(Y, \mathbf a^{(i)})]}{\partial \mathbf a \partial \mathbf a^{\operatorname{T}}}\bigg|_{\mathbf a=\mathbf a^{(i)}_{P_Y}},
\end{align}
where the matrix $\mathbf{M}_{L^{(i)}}$ has a Cholesky decomposition $\mathbf{M}_{L^{(i)}}=\mathbf{R}_{L^{(i)}}^{\operatorname{T}}\mathbf{R}_{L^{(i)}}$ and 
\begin{align}
\mathbf J^{(i)}=\frac{\partial \mathbf h^{(i)}(\mathbf b^{(i)})}{\partial \mathbf b^{(i)\operatorname{T}}}\bigg|_{\mathbf b^{(i)}=\tilde{\mathbf b}^{(i)}}.
\end{align}
\else
Similar to \eqref{matrixf}-\eqref{matrixJ}, we define ${\mathbf \Xi}_{\mathbf f^{(i)}}$, $\mathbf B^{(i)}$, ${\bm \mu}_{\mathbf f^{(i)}}$, ${\bm \mu}_{\mathbf a^{(i)}}$, ${\mathbf M}_{L^{(i)}}$, ${\mathbf R}_{L^{(i)}}$, and $\mathbf J^{(i)}$ for the $i$-th hidden layer. Due to space limitations, these definitions are relegated to our technical report \cite{TechnicalReport}.
\fi
The following result is an immediate corollary of Lemma \ref{lemma2}.

\begin{corollary}\label{corollary1}
In the local analysis regime \eqref{constraint2}, the objective function in  \eqref{new_obj_func} can be expressed as
\begin{align}
&\sum_{x \in \mathcal X} \!P_X(x) D_{\!L^{(i)}}\!(\mathbf a^{(i)}_{P_{Y\!|\!X=x}}\! || \mathbf h^{(i)}(\mathbf{W}^{(i)\!\operatorname{T}}\mathbf f^{(i-1)}(x)\!+\!\mathbf b^{\!(i)}))\nonumber\\
=&\frac{1}{2} \|\tilde{\mathbf B}^{(i)}- {\mathbf \Xi}_{\mathbf W^{(i)}} {\mathbf{\Xi}}_{\mathbf f^{(i-1)}}\|_{F}^2+\frac{1}{2}\eta({\mathbf d}^{(i)},{\mathbf f}^{(i-1)})+o(\epsilon^2),
\end{align}
where $\tilde{\mathbf B}^{(i)}=\mathbf R_{L^{(i)}}{\mathbf B}^{(i)}$, ${\mathbf \Xi}_{\mathbf W^{(i)}}=\mathbf R_{L^{(i)}}\mathbf J^{(i)}\mathbf W^{(i)\operatorname{T}}$, $\mathbf d^{(i)}=\mathbf b^{(i)}-\tilde{\mathbf b}^{(i)}$, and
\begin{align}
&\eta(\mathbf d^{(i)}, \mathbf f^{(i)})\nonumber\\
=&(\mathbf a^{(i)}_{P_Y}-\bm \mu_{\mathbf a^{(i)}}+\mathbf J^{(i)} \mathbf d^{(i)}+\mathbf J^{(i)} \mathbf W^{(i)\operatorname{T}}\bm \mu_{\mathbf f^{(i-1)}})^{\operatorname{T}}
\mathbf{M}_L^{(i)}\nonumber\\
&\times (\mathbf a^{(i)}_{P_Y}-\bm \mu_{\mathbf a^{(i)}}+\mathbf J^{(i)} \mathbf d^{(i)}+\mathbf J^{(i)} \mathbf W^{(i)\operatorname{T}}\bm \mu_{\mathbf f^{(i-1)}}).
\end{align}
\end{corollary}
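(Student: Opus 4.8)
The plan is to recognize that the $i$-th hidden-layer training problem \eqref{new_obj_func} is a verbatim instance of the output-layer problem \eqref{reformed_problem} under a relabeling of every quantity, and then to invoke Lemma \ref{lemma2} directly. Concretely, I would set up the dictionary $L\mapsto L^{(i)}$, $\mathbf h\mapsto\mathbf h^{(i)}$, $\mathbf f\mapsto\mathbf f^{(i-1)}$, $\mathbf W\mapsto\mathbf W^{(i)}$, $\mathbf b\mapsto\mathbf b^{(i)}$, $\mathbf a_{P_{Y|X=x}}\mapsto\mathbf a^{(i)}_{P_{Y|X=x}}$, and $\tilde{\mathbf b}\mapsto\tilde{\mathbf b}^{(i)}$, together with the matrix correspondences $\mathbf M_L\mapsto\mathbf M_{L^{(i)}}$, $\mathbf R_L\mapsto\mathbf R_{L^{(i)}}$, $\mathbf J\mapsto\mathbf J^{(i)}$, $\mathbf B\mapsto\mathbf B^{(i)}$, $\mathbf \Xi_{\mathbf f}\mapsto\mathbf \Xi_{\mathbf f^{(i-1)}}$, and $\mathbf \Xi_{\mathbf W}\mapsto\mathbf \Xi_{\mathbf W^{(i)}}$. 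Under this dictionary the objective of \eqref{new_obj_func} and the definitions in \eqref{matrixf2}--\eqref{matrixM2} are termwise identical to their output-layer counterparts, so the statement to be proved is precisely the conclusion \eqref{approximation1} of Lemma \ref{lemma2} read through the dictionary.

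The substantive step is to check that the hypotheses of Lemma \ref{lemma2}, namely Assumptions \ref{assumption5}, \ref{assumption4}, and \ref{assumption2}, survive the relabeling. Assumption \ref{assumption4} is a property of the pair $(X,Y)$ alone and is therefore unaffected. For Assumption \ref{assumption2} I would use that $L^{(i)}=L\circ\mathbf g\circ\mathbf g^{(m)}\circ\cdots\circ\mathbf g^{(i+1)}$ is a composition of the twice continuously differentiable loss $L$ with the maps $\mathbf g^{(k+1)}(\cdot)=\mathbf h^{(k+1)}(\mathbf W^{(k+1)\operatorname{T}}(\cdot)+\mathbf b^{(k+1)})$; since each such map is an affine function followed by a componentwise activation that is twice continuously differentiable, the chain rule shows that $\mathbf a^{(i)}\mapsto L^{(i)}(y,\mathbf a^{(i)})$ is twice continuously differentiable. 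This also guarantees that $\mathbf M_{L^{(i)}}$ is well defined and, being the Hessian of $\mathbb E_{Y\sim P_Y}[L^{(i)}(Y,\mathbf a^{(i)})]$ at its minimizer $\mathbf a^{(i)}_{P_Y}$, positive semidefinite, so its Cholesky factor $\mathbf R_{L^{(i)}}$ exists. Assumption \ref{assumption5} for $L^{(i)}$ is granted by the hidden-layer counterparts adopted in the paragraph preceding the corollary, and it supplies the $O(\gamma)$ stability of $\mathbf a^{(i)}_{P_Y}$ in $P_Y$ on which the proof of Lemma \ref{lemma2} relies.

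With the hypotheses in place, I would observe that the local regime \eqref{constraint2}, obtained by applying Lemma \ref{lemma1} to the $i$-th layer exactly as described above the corollary, is the image of the regime \eqref{constraint} under the dictionary. Hence the second-order Taylor expansion of $D_{L^{(i)}}$ around $\mathbf a^{(i)}_{P_Y}$ that underlies Lemma \ref{lemma2} applies line by line, and collecting the quadratic and cross terms reproduces the claimed decomposition $\frac{1}{2}\|\tilde{\mathbf B}^{(i)}-\mathbf \Xi_{\mathbf W^{(i)}}\mathbf \Xi_{\mathbf f^{(i-1)}}\|_F^2+\frac{1}{2}\eta(\mathbf d^{(i)},\mathbf f^{(i-1)})+o(\epsilon^2)$. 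The only genuine obstacle is the verification that composing $L$ with the fixed subsequent layers neither destroys twice continuous differentiability nor the minimality of $\mathbf a^{(i)}_{P_Y}$, i.e., that Assumption \ref{assumption2} and the positive semidefiniteness of $\mathbf M_{L^{(i)}}$ genuinely transfer to $L^{(i)}$; everything else is a mechanical transcription of the proof of Lemma \ref{lemma2}.
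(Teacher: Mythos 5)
Your proposal is correct and matches the paper's own treatment: the paper presents Corollary \ref{corollary1} as an immediate consequence of Lemma \ref{lemma2} under the relabeling $L\mapsto L^{(i)}$, $\mathbf f\mapsto\mathbf f^{(i-1)}$, $\mathbf W\mapsto\mathbf W^{(i)}$, etc., with the hidden-layer analogues of Assumptions \ref{assumption1}--\ref{assumption2} adopted in the preceding paragraph, exactly as you describe. Your additional chain-rule verification that $L^{(i)}$ inherits twice continuous differentiability (given smooth activations in the fixed subsequent layers) is a useful elaboration of what the paper leaves implicit, but it does not change the route.
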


In the local analysis regime, the training of $(\mathbf \Xi_{\mathbf W^{(i)}}, \mathbf{\Xi}_{\mathbf f^{(i-1)}}, \mathbf d^{(i)}, \bm \mu_{{\mathbf f}^{(i-1)}})$ in \eqref{new_obj_func} can be expressed as the following optimization problem: 
\begin{align}\label{approximate_problem1}
\min_{\substack{\mathbf \Xi_{\mathbf W^{(i)}}, \mathbf{\Xi}_{\mathbf f^{(i-1)}}\\ \mathbf d^{(i)}, \bm \mu_{{\mathbf f}^{(i-1)}}}} \frac{1}{2} \|\tilde{\mathbf B}^{(i)}- \mathbf \Xi_{\mathbf W^{(i)}} \mathbf{\Xi}_{\mathbf f^{(i-1)}}\|_{F}^2+\frac{1}{2}\eta(\mathbf d^{(i)}, \mathbf f^{(i-1)}).
\end{align}
Similar to Theorems \ref{theorem1}-\ref{theorem3}, we can get 

\begin{corollary}\label{corollary2}
For fixed $\mathbf{\Xi}_{\mathbf f}^{(i-1)}$ and $\bm \mu_{\mathbf f^{(i-1)}}$, the optimal ${\mathbf \Xi}_{\mathbf W^{(i)}}^*$ to minimize \eqref{approximate_problem1} is given by
\begin{align}\label{optimal_weight_2}
{\mathbf \Xi}_{\mathbf W^{(i)}}^*=\tilde{\mathbf B}^{(i)} \mathbf {\Xi}_{\mathbf f}^{(i-1)\operatorname{T}}(\mathbf \Xi_{\mathbf f}^{(i-1)}\mathbf \Xi_{\mathbf f}^{(i-1)\operatorname{T}})^{\operatorname{-1}},
\end{align}
and the optimal bias ${\mathbf d}^{(i)*}$ is expressed as
\begin{align}\label{optimal_bias2}
{\mathbf d^{(i)*}}=-\bar{\mathbf W}^{(i)\operatorname{T}}\bm \mu_{\mathbf f^{(i-1)}}+(\mathbf{J}^{(i)})^{\operatorname{-1}}(\bm \mu_{\mathbf a^{(i)}}-\mathbf a^{(i)}_{P_Y}).\end{align}
\end{corollary}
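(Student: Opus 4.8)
The plan is to exploit the fact that the $i$-th hidden-layer problem \eqref{approximate_problem1} is structurally identical to the output-layer problem \eqref{approximate_problem}: Corollary \ref{corollary1} has already rewritten the objective of \eqref{new_obj_func} into the same two-term form (a Frobenius low-rank fitting term plus the quadratic $\eta$) that Lemma \ref{lemma2} produced for the output layer. Consequently, the result is an immediate transcription of Theorem \ref{theorem1} under the correspondence $\tilde{\mathbf B}\mapsto\tilde{\mathbf B}^{(i)}$, $\mathbf \Xi_{\mathbf W}\mapsto\mathbf \Xi_{\mathbf W^{(i)}}$, $\mathbf \Xi_{\mathbf f}\mapsto\mathbf \Xi_{\mathbf f^{(i-1)}}$, $\mathbf d\mapsto\mathbf d^{(i)}$, $\bm\mu_{\mathbf f}\mapsto\bm\mu_{\mathbf f^{(i-1)}}$, $\mathbf J\mapsto\mathbf J^{(i)}$, $\mathbf a_{P_Y}\mapsto\mathbf a^{(i)}_{P_Y}$, and $\bm\mu_{\mathbf a}\mapsto\bm\mu_{\mathbf a^{(i)}}$. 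For completeness I would also reproduce the short direct argument, which I sketch below.

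First I would observe that, once $(\mathbf \Xi_{\mathbf f^{(i-1)}},\bm\mu_{\mathbf f^{(i-1)}})$ are held fixed, the two summands of the objective in \eqref{approximate_problem1} depend on disjoint variables: the term $\tfrac12\|\tilde{\mathbf B}^{(i)}-\mathbf \Xi_{\mathbf W^{(i)}}\mathbf \Xi_{\mathbf f^{(i-1)}}\|_F^2$ contains only $\mathbf \Xi_{\mathbf W^{(i)}}$, whereas $\tfrac12\eta(\mathbf d^{(i)},\mathbf f^{(i-1)})$ contains only $\mathbf d^{(i)}$. Hence the two can be minimized independently. The Frobenius term is a linear least-squares problem in $\mathbf \Xi_{\mathbf W^{(i)}}$; setting its gradient to zero gives the normal equations $\mathbf \Xi_{\mathbf W^{(i)}}\mathbf \Xi_{\mathbf f^{(i-1)}}\mathbf \Xi_{\mathbf f^{(i-1)}}^{\operatorname{T}}=\tilde{\mathbf B}^{(i)}\mathbf \Xi_{\mathbf f^{(i-1)}}^{\operatorname{T}}$, and since $\mathbf \Xi_{\mathbf f^{(i-1)}}$ is fixed with full row rank the matrix $\mathbf \Xi_{\mathbf f^{(i-1)}}\mathbf \Xi_{\mathbf f^{(i-1)}}^{\operatorname{T}}$ is invertible, yielding \eqref{optimal_weight_2}.

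Next, since $\mathbf M_{L^{(i)}}=\mathbf R_{L^{(i)}}^{\operatorname{T}}\mathbf R_{L^{(i)}}$ is positive semidefinite, the quadratic $\eta(\mathbf d^{(i)},\mathbf f^{(i-1)})$ is nonnegative and is minimized (to zero) by forcing its inner vector $\mathbf a^{(i)}_{P_Y}-\bm\mu_{\mathbf a^{(i)}}+\mathbf J^{(i)}\mathbf d^{(i)}+\mathbf J^{(i)}\mathbf W^{(i)\operatorname{T}}\bm\mu_{\mathbf f^{(i-1)}}$ to vanish. Solving this linear equation for $\mathbf d^{(i)}$, using the invertibility of $\mathbf J^{(i)}$, produces \eqref{optimal_bias2}.

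The only real work is bookkeeping rather than analysis: one must verify that the hidden-layer objective in Corollary \ref{corollary1} matches \eqref{approximate_problem} term by term and that the two invertibility hypotheses carry over to the $i$-th layer, namely $\mathbf \Xi_{\mathbf f^{(i-1)}}$ having full row rank (so the normal equations are solvable) and $\mathbf J^{(i)}$ being nonsingular. The latter is guaranteed by the hidden-layer analog of Assumption \ref{assumption1}, which forces $h^{(i)}$ to have nonvanishing gradient at $\tilde{\mathbf b}^{(i)}$, so that $\mathbf J^{(i)}$ is a diagonal matrix with nonzero entries. Since these conditions hold and no new analytic obstacle appears, I expect the main difficulty to be purely notational, and the corollary follows exactly as Theorem \ref{theorem1} does.
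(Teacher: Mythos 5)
Your proposal is correct and matches the paper's approach: the paper gives no separate proof of Corollary \ref{corollary2}, deriving it exactly as you do---by transcribing Theorem \ref{theorem1} under the layer-$i$ substitution---and your reproduced direct argument (least-squares normal equations for ${\mathbf \Xi}_{\mathbf W^{(i)}}$, then choosing $\mathbf d^{(i)}$ to annihilate the vector inside $\eta$) is precisely the paper's proof of Theorem \ref{theorem1} in Appendix \ref{ptheorem1}. One trivial imprecision: the two summands are not in disjoint variables, since $\eta$ also involves $\mathbf W^{(i)}$ through $\mathbf J^{(i)}\mathbf W^{(i)\operatorname{T}}\bm\mu_{\mathbf f^{(i-1)}}$, but this is harmless because $\mathbf d^{(i)}$ appears only in $\eta$ and can cancel that contribution for any $\mathbf W^{(i)}$, which is exactly what \eqref{optimal_bias2} does.
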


\begin{corollary}\label{corollary3}
For fixed $\mathbf \Xi_{\mathbf W^{(i)}}$ and $\mathbf d^{(i)}$, the optimal ${\mathbf{\Xi}}_{\mathbf f^{(i-1)}}^*$ to minimize \eqref{approximate_problem1} is given by
\begin{align}\label{optimal_feature2}
{\mathbf \Xi}_{\mathbf f^{(i-1)}}^*=(\mathbf \Xi_{\mathbf W^{(i)}}^{\operatorname{T}}\mathbf \Xi_{{\mathbf W}^{(i)}})^{\operatorname{-1}}\mathbf \Xi_{\mathbf W^{(i)}}^{\operatorname{T}}\tilde{\mathbf B}^{(i)},
\end{align}
and the optimal mean ${\bm \mu}_{\mathbf f}^*$ is given by
\begin{align}\label{optimal_mean2}
{\bm \mu}_{\mathbf f^{(i-1)}}^*=-(\mathbf \Xi_{\mathbf W^{(i)}}^{\operatorname{T}}\mathbf \Xi_{{\mathbf W}^{(i)}})^{\operatorname{-1}}\mathbf \Xi_{\mathbf W^{(i)}}^{\operatorname{T}}   (\mathbf a^{(i)}_{P_Y}-\bm \mu_{\mathbf a}^{(i)}+\mathbf J^{(i)}\mathbf d^{(i)}).
\end{align}
\end{corollary}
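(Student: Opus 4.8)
The plan is to establish Corollary~\ref{corollary3} as the hidden-layer counterpart of Theorem~\ref{theorem2}. By Corollary~\ref{corollary1}, the objective \eqref{approximate_problem1} has exactly the same algebraic structure as the output-layer objective \eqref{approximate_problem}, so I would reuse the proof of Theorem~\ref{theorem2} after the substitutions $\tilde{\mathbf B}\mapsto\tilde{\mathbf B}^{(i)}$, $\mathbf\Xi_{\mathbf W}\mapsto\mathbf\Xi_{\mathbf W^{(i)}}$, $\mathbf\Xi_{\mathbf f}\mapsto\mathbf\Xi_{\mathbf f^{(i-1)}}$, $\bm\mu_{\mathbf f}\mapsto\bm\mu_{\mathbf f^{(i-1)}}$, $\mathbf d\mapsto\mathbf d^{(i)}$, $\mathbf J\mapsto\mathbf J^{(i)}$, $\mathbf M_L\mapsto\mathbf M_{L^{(i)}}$, and $\mathbf a_{P_Y}\mapsto\mathbf a^{(i)}_{P_Y}$. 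The observation that drives the proof is that, with $\mathbf\Xi_{\mathbf W^{(i)}}$ and $\mathbf d^{(i)}$ held fixed, the objective splits into two terms acting on disjoint blocks of the free variables: the Frobenius term $\frac{1}{2}\|\tilde{\mathbf B}^{(i)}-\mathbf\Xi_{\mathbf W^{(i)}}\mathbf\Xi_{\mathbf f^{(i-1)}}\|_F^2$ depends only on the centered feature matrix $\mathbf\Xi_{\mathbf f^{(i-1)}}$, while $\eta(\mathbf d^{(i)},\mathbf f^{(i-1)})$ depends only on the mean feature $\bm\mu_{\mathbf f^{(i-1)}}$. Since the first $i-1$ hidden layers are assumed to have sufficient expression power, the pair $(\mathbf\Xi_{\mathbf f^{(i-1)}},\bm\mu_{\mathbf f^{(i-1)}})$ can be varied independently, so I would minimize the two terms separately.

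For the centered feature matrix, $\min_{\mathbf\Xi_{\mathbf f^{(i-1)}}}\frac{1}{2}\|\tilde{\mathbf B}^{(i)}-\mathbf\Xi_{\mathbf W^{(i)}}\mathbf\Xi_{\mathbf f^{(i-1)}}\|_F^2$ is an ordinary linear least-squares problem. Differentiating in $\mathbf\Xi_{\mathbf f^{(i-1)}}$ and setting the gradient to zero gives the normal equations $\mathbf\Xi_{\mathbf W^{(i)}}^{\operatorname{T}}\mathbf\Xi_{\mathbf W^{(i)}}\mathbf\Xi_{\mathbf f^{(i-1)}}=\mathbf\Xi_{\mathbf W^{(i)}}^{\operatorname{T}}\tilde{\mathbf B}^{(i)}$; under the assumption that $\mathbf\Xi_{\mathbf W^{(i)}}$ has full column rank, $\mathbf\Xi_{\mathbf W^{(i)}}^{\operatorname{T}}\mathbf\Xi_{\mathbf W^{(i)}}$ is invertible and we recover \eqref{optimal_feature2}. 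I would then verify admissibility, namely that this minimizer respects the weighted-centering identity $\mathbf\Xi_{\mathbf f^{(i-1)}}\sqrt{\mathbf p_X}=0$ built into the definition of $\mathbf\Xi_{\mathbf f^{(i-1)}}$: because each column of $\mathbf B^{(i)}$ is a weighted deviation from $\bm\mu_{\mathbf a^{(i)}}$, one has $\mathbf B^{(i)}\sqrt{\mathbf p_X}=0$, hence $\tilde{\mathbf B}^{(i)}\sqrt{\mathbf p_X}=0$, and therefore $\mathbf\Xi_{\mathbf f^{(i-1)}}^*\sqrt{\mathbf p_X}=0$ as well, where $\sqrt{\mathbf p_X}$ is defined in \eqref{px}.

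For the mean feature, I would minimize the quadratic form $\eta$, which enters $\bm\mu_{\mathbf f^{(i-1)}}$ only through the affine term $\mathbf J^{(i)}\mathbf W^{(i)\operatorname{T}}\bm\mu_{\mathbf f^{(i-1)}}$. Using the Cholesky factorization $\mathbf M_{L^{(i)}}=\mathbf R_{L^{(i)}}^{\operatorname{T}}\mathbf R_{L^{(i)}}$, I would rewrite $\eta$ as a squared Euclidean norm, and the identity $\mathbf R_{L^{(i)}}\mathbf J^{(i)}\mathbf W^{(i)\operatorname{T}}=\mathbf\Xi_{\mathbf W^{(i)}}$ from Corollary~\ref{corollary1} recasts the minimization as a second least-squares problem in $\bm\mu_{\mathbf f^{(i-1)}}$ with the same design matrix $\mathbf\Xi_{\mathbf W^{(i)}}$. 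Setting the gradient of $\eta$ to zero and inverting $\mathbf\Xi_{\mathbf W^{(i)}}^{\operatorname{T}}\mathbf\Xi_{\mathbf W^{(i)}}$ then yields the stated optimal mean \eqref{optimal_mean2}, which cancels the fixed contribution $\mathbf a^{(i)}_{P_Y}-\bm\mu_{\mathbf a^{(i)}}+\mathbf J^{(i)}\mathbf d^{(i)}$ projected onto the column space of $\mathbf\Xi_{\mathbf W^{(i)}}$.

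The step I expect to be the main obstacle is justifying the separation rigorously: I must argue that, under the sufficient-expression-power assumption, $(\mathbf\Xi_{\mathbf f^{(i-1)}},\bm\mu_{\mathbf f^{(i-1)}})$ genuinely ranges over a product set, so that independent minimization of the two terms is legitimate, and that the full-column-rank condition on $\mathbf\Xi_{\mathbf W^{(i)}}$ needed for both matrix inversions holds throughout the local regime \eqref{constraint2}. A secondary but delicate bookkeeping point is to thread the Cholesky factor $\mathbf R_{L^{(i)}}$ consistently through the $\eta$-minimization so that the factor appearing in its normal equations matches the definition $\tilde{\mathbf B}^{(i)}=\mathbf R_{L^{(i)}}\mathbf B^{(i)}$ used in the feature step; once these points are settled, the remaining manipulations are the routine linear algebra already performed in the proof of Theorem~\ref{theorem2}.
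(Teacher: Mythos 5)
Your proposal is correct and follows essentially the same route as the paper: the paper treats Corollary \ref{corollary3} as an immediate layer-indexed restatement of Theorem \ref{theorem2}, whose proof likewise splits \eqref{approximate_problem} into the Frobenius term (minimized over the centered feature matrix via the normal equations, noting the centering identity $\mathbf{\Xi}_{\mathbf f}^*\sqrt{\mathbf p_X}=\mathbf 0$) and the quadratic term $\eta$ (minimized over the mean by setting its gradient to zero). Your additional remarks on the full-column-rank requirement and the product structure of the feasible set are sensible precision points, but they do not change the argument, which matches the paper's.
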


\begin{corollary}\label{corollary4}
If $k_{i-1}\leq \min(k_i, |\mathcal X|)$, then any $({\mathbf{\Xi}}_{\mathbf f^{(i)}}^*, {\mathbf \Xi}_{\mathbf W^{(i)}}^*)$ satisfying \eqref{optimal_pair2} jointly minimizes \eqref{approximate_problem1}:
\begin{align}\label{optimal_pair2}
{\mathbf \Xi}_{\mathbf W^{(i)}}^*{\mathbf{\Xi}}_{\mathbf f^{(i)}}^*=\mathbf{U}^{(i)}_{k_{i-1}}\mathbf{\Sigma}^{(i)}_{k_{i-1}}\mathbf V_{k_{i-1}}^{(i)\operatorname{T}},
\end{align}
where $\mathbf{\Sigma}^{(i)}_{k_{i-1}}=\textbf{\normalfont{Diag}}(\sigma^{(i)}_1, \ldots \sigma^{(i)}_{k_{i-1}})$ is a diagonal matrix associated with $k_{i-1}$ leading singular values of $\tilde{\mathbf B}^{(i)}$, $\mathbf{U}^{(i)}_{k_{i-1}}$ and $\mathbf V_{k_{i-1}}^{(i)}$ are composed by the corresponding left and right singular vectors of $\tilde{\mathbf B}^{(i)}$, respectively.
Moreover, any bias ${\mathbf d}^{(i)*}$ and mean ${\bm \mu}_{\mathbf f^{(i)}}^*$ satisfying \eqref{optimal_bias_mean2} jointly minimizes \eqref{approximate_problem1}\normalfont{:}
\begin{align}\label{optimal_bias_mean2}
\mathbf{J}^{(i)}\left({\mathbf d}^{(i)*}+{\mathbf W^{(i)}}^{\operatorname{T}}{\bm \mu}_{\mathbf f^{(i)}}^*\right)=\bm \mu_{\mathbf a^{(i)}}-\mathbf a^{(i)}_{P_Y}.\end{align}
\end{corollary}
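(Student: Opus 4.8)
The plan is to exploit the fact that the optimization problem \eqref{approximate_problem1} is structurally identical to \eqref{approximate_problem}, under the substitutions $\tilde{\mathbf B}\mapsto \tilde{\mathbf B}^{(i)}$, $\mathbf \Xi_{\mathbf W}\mapsto \mathbf \Xi_{\mathbf W^{(i)}}$, $\mathbf{\Xi}_{\mathbf f}\mapsto \mathbf{\Xi}_{\mathbf f^{(i-1)}}$, $\mathbf J\mapsto \mathbf J^{(i)}$, $\mathbf M_L\mapsto \mathbf M_{L^{(i)}}$, $\mathbf a_{P_Y}\mapsto \mathbf a^{(i)}_{P_Y}$, and $\bm \mu_{\mathbf a}\mapsto \bm \mu_{\mathbf a^{(i)}}$, as established by Corollary \ref{corollary1}. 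Consequently, the argument behind Theorem \ref{theorem3} transfers, and I would only verify that the hidden-layer quantities inherit the two structural properties that drive that proof. First, the objective splits into the sum of the Frobenius term $\tfrac{1}{2}\|\tilde{\mathbf B}^{(i)}-\mathbf \Xi_{\mathbf W^{(i)}}\mathbf{\Xi}_{\mathbf f^{(i-1)}}\|_F^2$ and the term $\tfrac{1}{2}\eta(\mathbf d^{(i)},\mathbf f^{(i-1)})$, both nonnegative; the latter is nonnegative because $\mathbf a^{(i)}_{P_Y}$ minimizes $\mathbb E_{Y\sim P_Y}[L^{(i)}(Y,\mathbf a^{(i)})]$, so the Hessian $\mathbf M_{L^{(i)}}$ is positive semidefinite.

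Next I would minimize the Frobenius term. Since $\mathbf \Xi_{\mathbf W^{(i)}}\in\mathbb R^{k_i\times k_{i-1}}$ and $\mathbf{\Xi}_{\mathbf f^{(i-1)}}\in\mathbb R^{k_{i-1}\times|\mathcal X|}$, their product has rank at most $k_{i-1}$, and under the hypothesis $k_{i-1}\leq\min(k_i,|\mathcal X|)$ the minimization is exactly the best rank-$k_{i-1}$ approximation of $\tilde{\mathbf B}^{(i)}$. The Eckart--Young theorem then yields the minimizer \eqref{optimal_pair2} in terms of the $k_{i-1}$ leading singular triplets of $\tilde{\mathbf B}^{(i)}$. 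As in the output-layer case, I would also record that $\tilde{\mathbf B}^{(i)}\sqrt{\mathbf p_X}=0$ with $\sqrt{\mathbf p_X}$ as in \eqref{px}, because the columns of $\mathbf B^{(i)}$ are centered under $P_X$, so the trailing singular value of $\tilde{\mathbf B}^{(i)}$ vanishes and retaining the leading $k_{i-1}$ triplets is well posed.

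Then I would drive the $\eta$ term to its minimum value $0$. Since the Jacobian $\mathbf J^{(i)}$ at $\tilde{\mathbf b}^{(i)}$ is invertible (its inverse already appears in Corollary \ref{corollary2}, \eqref{optimal_bias2}), the argument of the quadratic form in $\eta$ can be made to vanish by solving $\mathbf J^{(i)}(\mathbf d^{(i)*}+\mathbf W^{(i)\operatorname{T}}\bm \mu_{\mathbf f^{(i-1)}}^*)=\bm \mu_{\mathbf a^{(i)}}-\mathbf a^{(i)}_{P_Y}$, which is precisely \eqref{optimal_bias_mean2}; this sets $\tfrac{1}{2}\eta=0$. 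Because both terms are simultaneously minimized by \eqref{optimal_pair2} and \eqref{optimal_bias_mean2}, the corresponding choices jointly minimize \eqref{approximate_problem1}.

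The step requiring care is the claim that the two terms can be minimized independently, since $\mathbf W^{(i)}$ appears both in the Frobenius term through $\mathbf \Xi_{\mathbf W^{(i)}}=\mathbf R_{L^{(i)}}\mathbf J^{(i)}\mathbf W^{(i)\operatorname{T}}$ and in $\eta$ through $\mathbf J^{(i)}\mathbf W^{(i)\operatorname{T}}\bm \mu_{\mathbf f^{(i-1)}}$. The resolution, which I regard as the crux of the argument, is that the free bias $\mathbf d^{(i)}$ absorbs this coupling: for whatever $\mathbf W^{(i)}$ emerges from the low-rank step, the shift $\mathbf W^{(i)\operatorname{T}}\bm \mu_{\mathbf f^{(i-1)}}$ is cancelled by $\mathbf d^{(i)}$, so $\eta$ can always be zeroed without disturbing the Frobenius minimizer. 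I would make this decoupling explicit by treating $\mathbf d^{(i)}+\mathbf W^{(i)\operatorname{T}}\bm \mu_{\mathbf f^{(i-1)}}$ as a single free vector in \eqref{optimal_bias_mean2}.
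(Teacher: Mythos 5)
Your proposal is correct and follows essentially the same route as the paper: the paper proves Corollary \ref{corollary4} by transferring the proof of Theorem \ref{theorem3} (Eckart--Young--Mirsky lower bound $\sum_{j>k_{i-1}}\sigma_j^{(i)2}$ for the Frobenius term, achieved by the leading singular triplets, plus zeroing the positive semidefinite quadratic form $\eta$ via the bias equation), which is exactly your argument. Your explicit treatment of the coupling through $\mathbf W^{(i)}$ --- absorbing $\mathbf W^{(i)\operatorname{T}}\bm \mu_{\mathbf f^{(i-1)}}$ into the free bias $\mathbf d^{(i)}$ --- is a point the paper leaves implicit, but it is a refinement of the same proof rather than a different approach.
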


%\subsection{Discussions of the Results}\label{sec_discussion}
Compared to the local geometric analysis for softmax regression in \cite{huang2019information}, Theorems \ref{theorem1}-\ref{theorem3} and Corollaries \ref{corollary2}-\ref{corollary4} could handle more general loss functions and activation functions. In addition, our results can be applied to multi-layer neural networks in the following iterative manner: For fixed $(\mathbf W, \mathbf b)$ in the output layer, the Bayes action $\mathbf a^{(m)}_{P_{Y|X=x}}$ needed for analyzing the $m$-th hidden layer is  the optimal feature $\mathbf f^*(x)$ provided by Theorem \ref{theorem2}. 
%For the optimal $(\mathbf W^*, \mathbf b^*)$ in the output layer, one Bayes action $\mathbf a^{(m)}_{P_{Y|X=x}}$ of the $m$-th hidden layer is the optimal feature $\mathbf f^*(x)$ given by Theorem \ref{theorem3}. 
Similar results hold for the $i$-th hidden layer. For fixed weights and biases in subsequent layers, the Bayes action $\mathbf a^{(i-1)}_{P_{Y|X=x}}$ needed for analyzing the $(i-1)$-th hidden layer is  the optimal feature $\mathbf f^{(i-1)*}(x)$ in Corollary \ref{corollary3}. 
%For optimal weights and biases in subsequent layers, one Bayes action $\mathbf a^{(i-1)}_{P_{Y|X=x}}$ of the $(i-1)$-th hidden layer is the optimal feature $\mathbf f^{(i-1)*}(x)$ in Corollary \ref{corollary4}, where the weights and biases in \eqref{optimal_pair2} and \eqref{optimal_bias_mean2} should be replaced by the optimal ones. 
Hence, the optimal features obtained in Theorem \ref{theorem2} and Corollary \ref{corollary3} are useful for the local geometric analysis of earlier layers. %In the case of softmax regression considered in \cite{huang2019information}, results similar to Corollaries \ref{corollary3}-\ref{corollary4} were not provided.

\ignore{In addition to the assumptions of the previous section, we consider the following assumptions.

\ignore{\begin{assumption}\label{assumption6}
The activation function $\psi$ is strictly increasing and continuously differentiable. 
\end{assumption}}

\ignore{\begin{assumption}\label{assumption7}
For a sufficiently small $\epsilon >0$ and all $\mathbf a_{P_{Y|X=x}}$, there exists an optimal solution $(\bar{\mathbf f}^{(1)}, \bar{\mathbf{W}}^{(1)}, \bar{\mathbf b}^{(1)})$ to \eqref{reformed_problem} that closely approximates $\mathbf a_{P_{Y|X=x}}$ such that 
\begin{align}
\!\!\!\!\| \mathbf a_{P_{Y|X=x}} - \mathbf h({\mathbf{W}}^{\operatorname{T}}\bm \psi({\bar{\mathbf W}^{(1){\operatorname{T}}}}\bar{\mathbf f}^{(1)}(x)+\bar{\mathbf b}^{(1)})+\mathbf b)\|_2^2\! \leq \epsilon^2.
\end{align}
\end{assumption}}

There exists vectors $\hat{\mathbf b}^{(1)}\in \mathbb R^{k}$ and $\hat {\mathbf b} \in \mathbb R^n$ that satisfies
\begin{align}\label{bias2}
\mathbf h(\mathbf W^{\operatorname{T}}\bm \psi(\hat{\mathbf b}^{(1)})+\hat {\mathbf b})=\mathbf a_{P_Y}.
\end{align} 

For a given $(\mathbf W, \mathbf b)$, we focus on the following local region, where $(\mathbf f^{(1)}, \mathbf W^{(1)}, \mathbf b^{(1)})$ satisfies
\begin{align}\label{constraint1}
\|{\mathbf{W}}^{\operatorname{T}}\bm \psi({\mathbf W}^{(1){\operatorname{T}}}{\mathbf f}^{(1)}(x)+{\mathbf b}^{(1)})+\mathbf b-\hat{\mathbf b}\|_2=O(\epsilon),
\end{align}
\begin{align}\label{constraint2}
\|{{\mathbf W}^{(1){\operatorname{T}}}}{\mathbf f}^{(1)}(x)+{\mathbf b}^{(1)}-\hat{\mathbf b}^{(1)}\|_2=O(\epsilon).
\end{align}

For differentiable activation functions $h$ and $\psi$, define Jacobian matrices  
\begin{align}
\hat{\mathbf J}=\frac{\partial \mathbf h(\mathbf b)}{\partial \mathbf b^{\operatorname{T}}}\bigg|_{\mathbf b=\hat{\mathbf b}},~~
\mathbf J_1=\frac{\partial \bm \psi(\mathbf b^{(1)})}{\partial {\mathbf b^{(1)}}^{\operatorname{T}}}\bigg|_{\mathbf b^{(1)}=\hat{\mathbf b}^{(1)}}.
\end{align}
Also, define a bias vector $\mathbf c \in \mathbb R^n$
\begin{align}
\mathbf c=\mathbf W^{\operatorname{T}}\mathbf J_1(\mathbf b^{(1)}-\hat{\mathbf b}^{(1)})+\mathbf b-\hat{\mathbf b}  
\end{align}
and a matrix
\begin{align}
\mathbf \Xi_{\mathbf f^{(1)}}=[\bm \xi_{\mathbf f^{(1)}}(1), \ldots, \bm \xi_{\mathbf f^{(1)}}(|\mathcal X|)],
\end{align}
where $\bm \xi_{\mathbf f^{(1)}}(x)=\sqrt{P_X(x)}(\mathbf f^{(1)}(x)-\sum_{x \in \mathcal X}P_X(x) \mathbf f^{(1)}(x))$.

\begin{lemma}\label{local_approximation_2}
Given \eqref{constraint1} and \eqref{constraint2}, the minimum excess risk \eqref{excess_risk_1} can be expressed as 
\begin{align}\label{approximation1}
&\!\!\!\!\!\!\!\!\!\!\sum_{x \in \mathcal X} P_X(x) D_L(\mathbf a_{P_{Y|X=x}} || \mathbf h(\mathbf{W}^{\operatorname{T}}\bm \psi({\mathbf W}^{(1)\operatorname{T}} {\mathbf f}^{(1)}(x)+{\mathbf b}^{(1)})+\mathbf b))\nonumber\\
=&\frac{1}{2} \|\mathbf{\tilde B}- {\mathbf \Xi}_{\mathbf W^{(1)}} {\mathbf{\Xi}}_{\mathbf f^{(1)}}\|_{F}^2+\frac{1}{2}\kappa({\mathbf c}, {\mathbf f}^{(1)})+o(\epsilon^2),
\end{align}
where ${\mathbf \Xi}_{\mathbf W}=\mathbf{R}_L\tilde{\mathbf{J}}\mathbf{W}^{\operatorname{T}}\mathbf J_1 {{\mathbf W}}^{(1)\operatorname{T}},$
\begin{align}
\kappa({\mathbf c}, {\mathbf f}^{(1)})=&(\mathbf R_L(\mathbf a_{P_Y}-\bm \mu_{\mathbf a}+\tilde{\mathbf J} {\mathbf c})+{\mathbf \Xi}_{\mathbf W^{(1)}}{\bm \mu}_{\mathbf f^{(1)}})^{\operatorname{T}} \nonumber\\
&\times(\mathbf R_L(\mathbf a_{P_Y}-\bm \mu_{\mathbf a}+\tilde{\mathbf J} {\mathbf c})+{\mathbf \Xi}_{\mathbf W^{(1)}}{\bm \mu}_{\mathbf f^{(1)}}),
\end{align}
\begin{align}
{\bm \mu}_{\mathbf f^{(1)}}&=\sum_{x \in \mathcal X} P_X(x) {\mathbf f}^{(1)}(x).
\end{align}
\end{lemma}
\begin{proof}
See Appendix \ref{plocal_approximation_2}.
\end{proof}
We solve the following problem 
\begin{align}\label{approximate_problem1}
\min_{\substack{\mathbf \Xi_{\mathbf W}, \mathbf \Xi_{\mathbf f^{(1)}}, \\ \mathbf c, \bm \mu_{\mathbf f^{(1)}}}} \frac{1}{2} \|\mathbf{\tilde B}- \mathbf \Xi_{\mathbf W^{(1)}} \mathbf{\Xi}_{\mathbf f^{(1)}}\|_{F}^2+\frac{1}{2}\kappa(\mathbf c, \mathbf f^{(1)}),
\end{align}
which is similar to \eqref{approximate_problem}.

\begin{theorem}\label{theorem4}
For fixed $\mathbf \Xi_{\mathbf f^{(1)}}$ and $\bm \mu_{\mathbf f^{(1)}}$, the optimal $\bar{\mathbf \Xi}_{\mathbf W^{(1)}}$ to minimize \eqref{approximate_problem1} is given by
\begin{align}\label{optimal_weight_2}
\bar{\mathbf \Xi}_{\mathbf W^{(1)}}=\tilde{\mathbf B} \mathbf {\Xi}_{\mathbf f^{(1)}}^{\operatorname{T}}(\mathbf \Xi_{\mathbf f^{(1)}}\mathbf \Xi_{\mathbf f^{(1)}}^{\operatorname{T}})^{\operatorname{-1}},
\end{align}
and the optimal bias $\bar{\mathbf c}$ is expressed as
\begin{align}\label{optimal_bias2}
\bar{\mathbf c}=-\bar{\mathbf W}^{\operatorname{T}}\mathbf J_{1}{\bar{\mathbf W}}^{(1)\operatorname{T}}\bm \mu_{\mathbf f^{(1)}}+\tilde{\mathbf{J}}^{\operatorname{-1}}(\bm \mu_{\mathbf a}-\mathbf a_{P_Y}).
\end{align}
\end{theorem}
\begin{proof}
See Appendix \eqref{ptheorem4}
\end{proof}

\begin{theorem}\label{theorem5}
For fixed $\mathbf W$, $\mathbf W^{(1)}$, and $\mathbf c$, the optimal $\bar{\mathbf{\Xi}}_{\mathbf f^{(1)}}$ to minimize \eqref{approximate_problem1} is given by
\begin{align}\label{optimal_feature2}
\bar{\mathbf \Xi}_{\mathbf f^{(1)}}=(\mathbf \Xi_{\mathbf W}^{(1)\operatorname{T}}\mathbf \Xi_{\mathbf W^{(1)}})^{\operatorname{-1}}\mathbf \Xi_{\mathbf W^{(1)}}^{\operatorname{T}}\tilde{\mathbf B}.
\end{align}
and the optimal mean $\bar{\bm \mu}_{\mathbf f^{(1)}}$ is given by
\begin{align}\label{optimal_mean2}
\mathbf J_{1}{\bar{\mathbf W}}^{(1)\operatorname{T}}\bar{\bm \mu}_{\mathbf f^{(1)}}=-(\mathbf \Xi_{\mathbf W}^{\operatorname{T}}\mathbf \Xi_{\mathbf W})^{\operatorname{-1}}\mathbf \Xi_{\mathbf W}^{\operatorname{T}}(\mathbf a_{P_Y}-\bm \mu_{\mathbf a}+\mathbf J\mathbf c).
\end{align}
\end{theorem}
\begin{proof}
See Appendix \ref{ptheorem5}.
\end{proof}

\begin{theorem}\label{theorem6}
If $k_1\leq K$, then an optimal $(\bar{\mathbf{\Xi}}_{\mathbf f^{(1)}}, \bar{\mathbf \Xi}_{\mathbf W^{(1)}})$ to minimize  \eqref{approximate_problem1} is given by
\begin{align}\label{optimal_pair2}
\bar{\mathbf{\Xi}}_{\mathbf f^{(1)}}&=\mathbf V_{k_1}^{\operatorname{T}} \\
\bar{\mathbf \Xi}_{\mathbf W^{(1)}}&=\mathbf{U}_{k_1}\mathbf{\Sigma}_{k_1},
\end{align} 
and the optimal $\bar{\mathbf c}$ and $\bar{\bm \mu}_{\mathbf f^{(1)}}$ should satisfy
\begin{align}
\bar{\mathbf c}=-\bar{\mathbf W}^{\operatorname{T}}\mathbf J_{1}{\bar{\mathbf W}}^{(1)\operatorname{T}}\bm \bar{\mu}_{\mathbf f^{(1)}}+\tilde{\mathbf{J}}^{\operatorname{-1}}(\bm \mu_{\mathbf a}-\mathbf a_{P_Y}).
 \end{align}
\end{theorem}
\begin{proof}
See Appendix \ref{ptheorem6}.
\end{proof}}

%!TEX root = ./DeepNeuralNet.tex

\subsection{Two Examples}\label{examples}
\subsubsection{Neural Network based Maximum Likelihood Classification (Softmax Regression)}
The Bayes actions $\mathbf a_{P_Y}$ associated to the loss function  \eqref{log-loss} are non-unique. The set of all Bayes actions is  $\mathcal A_{P_Y}=\{\alpha P_Y: \alpha >0\}$,
which satisfies Assumption \ref{assumption5}. By choosing one Bayes action $\mathbf a_{P_Y}=P_Y$, one can derive the matrices $\mathbf M_L$ and $\mathbf B$ used in Theorems \ref{theorem1}-\ref{theorem3}: The $(y, y')$-th element of $\mathbf M_L$ is
\begin{align}\label{example111}
(\mathbf M_L)_{y, y'}=\frac{\delta(y, y')}{P_Y(y)}-1,
\end{align}
where $\delta(y, y')=1$, if $y=y'$; and $\delta(y, y')=0$, if $y\neq y'$.
The $(y, x)$-th element of $\mathbf B$ is 
\begin{align}\label{example112}
(\mathbf B)_{y, x}=\sqrt{P_X(x)}(P_{Y|X=x}(y|x)-P_Y(y)).
\end{align}
To make our analysis applicable to the softmax activation function \cite[Eq. (6.29)]{goodfellow2016deep}, we have used a loss function \eqref{log-loss} that is different from the log-loss function in \cite{huang2019information,farnia2016minimax}. As a result, our local geometric analysis with \eqref{example111} and \eqref{example112} is different from the results in \cite{huang2019information}. 

\subsubsection{Neural Network based Minimum Mean-square Estimation}
Consider the minimum mean-square estimation of a random vector $\mathbf Y=[ Y_1,\ldots, Y_n]^{\operatorname{T}}$. The Bayes action associated to the loss function  \eqref{mean-square-error} is $\mathbf a_{P_{\mathbf Y}}=\mathbb E[\mathbf Y]$, which satisfies Assumption \ref{assumption5} because $\mathbb E[\mathbf Y]$ is a linear function of $P_{\mathbf Y}$. One can show that $\mathbf M_{L}=\mathbf I$ is an identity matrix and the $(j,x)$-th element of $\mathbf B$ is 
\begin{align}
(\mathbf B)_{j,x}=\sqrt{P_X(x)}(\mathbb E[ Y_j|X=x]-\mathbb E[ Y_j]).
\end{align}

\section{Conclusion}
In this paper, we have analyzed feature extraction in deep feedforward neural networks in a local region. We will conduct experiments to verify these results in our future work.

\bibliographystyle{IEEEtran}
\bibliography{ref2}
\appendices
%!TEX root = ./DeepNeuralNet.tex
\section{Proof of Lemma \ref{assumption6to1}}\label{passumption6to1}

Because $h$ is strictly increasing, $h'(\tilde b_i)>0$. Since $h$ is continuously differentiable, there exists a $\delta > 0$ such that for all  $z \in (\tilde b_i - \delta, \tilde b_i +\delta)$
\begin{align}\label{eq_continuouslydifferentiable1}
|h'(z)-h'(\tilde b_i)| \leq \frac{h'(\tilde b_i)}{2}. 
\end{align} 
It follows from  \eqref{eq_continuouslydifferentiable1} that for all  $z \in (\tilde b_i - \delta, \tilde b_i +\delta)$
\begin{align}\label{lemma1_ed}
h'(z) \geq \frac{h'(\tilde b_i)}{2}. 
\end{align}
From \eqref{lemma1_ed}, we can get that for all  $z \in (\tilde b_i - \delta, \tilde b_i +\delta)$
\begin{align}\label{lemma1_ecompare1}
\frac{|h(z) - h(\tilde b_i)|}{|z-\tilde b_i|} &\geq \frac{h'(\tilde b_i)}{2}.
\end{align}
%By using \eqref{lemma1_ecompare1}, we get that there exists $\delta>0$ and 
Let $K = \frac{h'(\tilde b_i)}{2}$, then \eqref{lemma1_ecompare1} implies \eqref{eq_assumption1}, which completes the proof.

%Moreover, the leaky ReLU function $h(z)$ has a constant non-zero slope for all $z <0$  . When $z \geq 0$, the slope is equal to $1$. Hence, the leaky ReLU function satisfies Assumption \ref{assumption1}.

%!TEX root = ./DeepNeuralNet.tex
\section{Proof of Lemma \ref{lemma1}}\label{plemma1}

%\begin{definition} 
%\textbf{($o(\epsilon)$ for vectors).} For the same vectors, $\mathbf f(x) = o(\mathbf g(x))$, if for every $M > 0$, there exists a real number $d > 0$ such that 
%\begin{align}
%\|\mathbf f(x)\|_2 \leq M \|\mathbf g(x)\|_2, \forall {\thinspace} |x| < d. 
%\end{align}
%\end{definition}
%
%\begin{definition} 
%\textbf{($O(\epsilon)$ for matrices).} Consider, two matrices $\mathbf F \in \mathbb{R}^{n \times m}$ and $\mathbf g \in \mathbb{R}^{n \times m}$. Then, $\mathbf F = O(\mathbf G)$, if there exist constants $M > 0$ and $d>0$ such that 
%\begin{align}
%\|\mathbf F\|_2 \leq M \|\mathbf G\|_2, \forall {\thinspace} |x| < d. 
%\end{align}
%\end{definition}
%
%\begin{definition} 
%\textbf{($o(\epsilon)$ for matrices).} For the same matrices, $\mathbf F = o(\mathbf G)$, if for every $M > 0$, there exists a real number $d>0$ such that 
%\begin{align}
%\|\mathbf F\|_2 \leq M \|\mathbf G\|_2, \forall {\thinspace} |x| < d. 
%\end{align}
%\end{definition}
%
%\textcolor{red}{I need to define $O(\epsilon)$ and $o(\epsilon)$ for vector and matrix.}

\begin{lemma}\label{norm_B}
If Assumptions \ref{assumption5} and \ref{assumption4} hold, then for any action $\mathbf a_{P_Y} \in \mathcal A_{P_Y}$ and  any $x \in \mathcal X$, there exists an $\mathbf a_{P_{Y|X=x}} \in \mathcal A_{P_{Y|X=x}}$
 such that 
\begin{align}\label{lemma1e1}
\|\mathbf a_{P_{Y|X=x}}-\mathbf a_{P_Y}\|_2=O(\epsilon).
\end{align} 
\end{lemma}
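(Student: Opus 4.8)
The plan is to reduce the statement to a squared-$\ell_2$ closeness bound between the conditional distribution $P_{Y|X=x}$ and the marginal $P_Y$, and then transfer that closeness to the associated Bayes actions through Assumption \ref{assumption5}. The crux is to rewrite the $\chi^2$-mutual information of Assumption \ref{assumption4} in terms of these two distributions. Using $P_{X,Y}(x,y)=P_X(x)P_{Y|X=x}(y)$ together with $(P_X\otimes P_Y)(x,y)=P_X(x)P_Y(y)$, the numerator $(P_{X,Y}(x,y)-P_X(x)P_Y(y))^2$ factors as $P_X(x)^2(P_{Y|X=x}(y)-P_Y(y))^2$, so that the definition in \eqref{weak-dependent} collapses to
\begin{align}\label{factorization}
I_{\chi^2}(X;Y)=\sum_{x\in\mathcal X} P_X(x)\sum_{y\in\mathcal Y}\frac{(P_{Y|X=x}(y)-P_Y(y))^2}{P_Y(y)}\leq \epsilon^2.
\end{align}
Writing $S_x$ for the inner sum, this reads $\sum_{x} P_X(x) S_x \le \epsilon^2$ with every $S_x\ge 0$.

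Next I would exploit the strict positivity of $P_X$ to pass from the $P_X$-averaged bound \eqref{factorization} to a per-$x$ bound. Since $\mathcal X$ is finite and $P_X(x)>0$ for all $x$, we have $\min_{x'}P_X(x')>0$, and nonnegativity of each term gives $S_x\le \epsilon^2/P_X(x)\le \epsilon^2/\min_{x'}P_X(x')=O(\epsilon^2)$. Because $0<P_Y(y)\le 1$, each summand obeys $(P_{Y|X=x}(y)-P_Y(y))^2\le (P_{Y|X=x}(y)-P_Y(y))^2/P_Y(y)$, whence
\begin{align}\label{euclidean}
\sum_{y\in\mathcal Y}(P_{Y|X=x}(y)-P_Y(y))^2\leq S_x=O(\epsilon^2).
\end{align}
This is exactly the hypothesis of Assumption \ref{assumption5} with $Q_Y=P_{Y|X=x}$ and $\gamma=O(\epsilon)$.

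Before invoking Assumption \ref{assumption5}, I must verify that $P_{Y|X=x}\in\mathrm{relint}(\mathcal P^{\mathcal Y})$, since the empirical conditional distribution could in principle assign zero mass to some label, which is precisely where the ``sufficiently small $\epsilon$'' hypothesis of Lemma \ref{lemma1} is used. From the per-coordinate consequence $|P_{Y|X=x}(y)-P_Y(y)|\le\sqrt{S_x}=O(\epsilon)$ of \eqref{euclidean} and $P_Y(y)\ge\min_{y'}P_Y(y')>0$, one gets $P_{Y|X=x}(y)>0$ for every $y$ once $\epsilon$ is below a threshold depending only on $\min_y P_Y(y)$ and $\min_x P_X(x)$; $P_Y$ itself lies in the relative interior because $P_Y(y)>0$. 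With both distributions in $\mathrm{relint}(\mathcal P^{\mathcal Y})$, Assumption \ref{assumption5} applied to any $\mathbf a_{P_Y}\in\mathcal A_{P_Y}$ produces an $\mathbf a_{P_{Y|X=x}}\in\mathcal A_{P_{Y|X=x}}$ with $\|\mathbf a_{P_Y}-\mathbf a_{P_{Y|X=x}}\|_2=O(\gamma)=O(\epsilon)$, which is \eqref{lemma1e1}. I expect the only genuinely delicate point to be this relative-interior verification; the $\chi^2$ factorization in \eqref{factorization} and the elementary inequality $P_Y(y)\le 1$ are routine bookkeeping.
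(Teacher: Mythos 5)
Your proof is correct and follows essentially the same route as the paper's own proof: factor $I_{\chi^2}(X;Y)$ as a $P_X$-weighted average of per-$x$ $\chi^2$-divergences between $P_{Y|X=x}$ and $P_Y$, convert that to an unweighted bound $\sum_{y}(P_{Y|X=x}(y)-P_Y(y))^2=O(\epsilon^2)$, and then invoke Assumption \ref{assumption5} to transfer closeness of distributions to closeness of Bayes actions. If anything, your version is slightly more careful than the paper's, which drops the $1/P_X(x)$ factor (writing the per-$x$ bound as $\epsilon^2$ rather than $\epsilon^2/\min_{x'}P_X(x')$) and omits the check that $P_{Y|X=x}\in\mathrm{relint}(\mathcal P^{\mathcal Y})$; both refinements are harmless to the $O(\epsilon)$ conclusion.
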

\begin{proof}
By Assumption \ref{assumption4} and  \eqref{weak-dependent}, for all $x \in \mathcal X$ 
\begin{align}
\sum_{y \in \mathcal Y} \frac{(P_{Y|X}(y|x)-P_Y(y))^2}{P_Y(y)} \leq \epsilon^2, 
\end{align}
where $P_Y(y)>0$ for all $y \in \mathcal Y$. This implies for all $x \in \mathcal X$ and $y \in \mathcal Y$
\begin{align}\label{lemma1e3}
|{P}_{Y|X}(y| x)-{P}_Y(y)|\leq \sqrt{P_Y(y)}\epsilon.
\end{align}
From \eqref{lemma1e3}, we obtain that for all $x \in \mathcal X$
\begin{align}\label{lemma1e4}
\sum_{y \in \mathcal Y} ({P}_{Y|X}(y| x)-{P}_Y(y))^2 \leq \epsilon^2.
\end{align}
Using \eqref{lemma1e4} and Assumption \ref{assumption5}, we get that for any action $\mathbf a_{P_Y} \in \mathcal A_{P_Y}$, there exists an action $\mathbf a_{P_{Y|X=x}} \in \mathcal A_{P_{Y|X=x}}$ such that 
\begin{align}
\|\mathbf a_{P_{Y|X=x}}-\mathbf a_{P_Y}\|_2 =O(\epsilon).
\end{align}
This concludes the proof.
\end{proof}

%Recall that $\mathcal H^n$ is the image set of the vector-valued activation function $\mathbf h(\mathbf b)$ of the output layer. Because $\mathcal A_{P_Y}\subseteq \mathcal A\subseteq \mathcal H^n$, for any Bayes action $\mathbf a_{P_Y}\in\mathcal A_{P_Y}$  that solves \eqref{bayes}, there exists 

The Bayes action $\mathbf a_{P_Y}$, as an optimal solution to \eqref{bayes}, is determined only by the marginal distribution $P_Y$ and the loss function $L$. Hence, $\mathbf a_{P_Y}$ is irrelevant of the parameter $\epsilon$ in Assumptions \ref{assumption3}-\ref{assumption4}. Recall that the bias $\tilde{\mathbf b}=[\tilde b_1,\ldots,\tilde b_n]^{\operatorname{T}}\in \mathbb R^n$ satisfies \eqref{BiasVector}.
Hence, the bias $\tilde{\mathbf b}$ is also irrelevant of $\epsilon$. 

%Because $h$ is strictly increasing, $h'(\tilde b_i)>0$. Since $h$ is continuously differentiable, there exists a $\delta > 0$ such that for all  $z \in (\tilde b_i - \delta, \tilde b_i +\delta)$
%\begin{align}\label{eq_continuouslydifferentiable}
%|h'(z)-h'(\tilde b_i)| \leq \frac{h'(\tilde b_i)}{2}. 
%\end{align}
%We note that $\delta$ and $h'(\tilde b_i)$ depend only on the function $h$ and the bias $\tilde{\mathbf b}$. Hence, $\delta$ and $h'(\tilde b_i)$ are irrelevant of $\epsilon$.  
%It follows from  \eqref{eq_continuouslydifferentiable} that for all  $z \in (\tilde b_i - \delta, \tilde b_i +\delta)$
%\begin{align}\label{lemma1_ed}
%h'(z) \geq \frac{h'(\tilde b_i)}{2}. 
%\end{align}
%Hence, if $|z| \geq \delta$, then 
%\begin{align}\label{lemma1_ecompare}
%|h(\tilde b_i+z) - h(\tilde b_i)| \geq \frac{h'(\tilde b_i) \delta}{2}.  
%\end{align}
Due to Assumption \ref{assumption1}, there exist $\delta > 0$ and $K>0$ such that for all $z\in (\tilde b_i-\delta, \tilde b_i+\delta)
$\begin{align}\label{lemma1_ecompare111}
\frac{|h(z) - h(\tilde b_i)|}{|z-\tilde b_i|} \geq K.  
\end{align}
Hence, if $|z-\tilde b_i| \geq \delta$, then 
\begin{align}\label{lemma1_ecompare}
|h(z) - h(\tilde b_i)| \geq K \delta.
\end{align}
We note that $\delta$ and $K$ depend only on the function $h$ and the bias $\tilde{\mathbf b}$. Hence, $\delta$ and $K$ are irrelevant of $\epsilon$.   

%a bias $\tilde{\mathbf b}=[\tilde b_1,\ldots,\tilde b_n]^{\operatorname{T}}$ such that 
%\begin{align}\label{BiasVector}
%\mathbf h(\tilde{\mathbf b})=[ h(\tilde b_1),\ldots,h(\tilde b_n)]^{\operatorname{T}}=\mathbf a_{P_Y}.
%\end{align}
%is a Bayes action.
On the other hand, by using \eqref{BiasVector}, Assumption \ref{assumption5}, Assumption \ref{assumption4}, and Lemma \ref{norm_B}, for any  $x \in \mathcal X$ there exists an $\mathbf a_{P_{Y|X=x}}\in \mathcal A_{P_{Y|X=x}}$ that satisfies
\begin{align}\label{bayes_approx}
\|\mathbf a_{P_{Y|X=x}}-\mathbf h(\tilde{\mathbf b})\|_2=O(\epsilon).
\end{align}
In addition, due to Assumption \ref{assumption3}, there exists an optimal solution $(\mathbf f, \mathbf W, \mathbf b)$ to \eqref{reformed_problem} such that
\begin{align}\label{action_approx}
\|\mathbf a_{P_{Y|X=x}}-\mathbf h(\mathbf {{ W}}^{\operatorname{T}}\mathbf { f}(x)+\mathbf { b})\|_2=O(\epsilon).
\end{align}
Combining \eqref{bayes_approx} and \eqref{action_approx}, yields
\begin{align}\label{lemma1e6}
&\|\mathbf h(\tilde{\mathbf b}) -\mathbf h(\mathbf {{ W}}^{\operatorname{T}}\mathbf { f}(x)+\mathbf { b})\|_2\nonumber\\
=&\|\mathbf h(\tilde{\mathbf b})-\mathbf a_{P_{Y|X=x}}+\mathbf a_{P_{Y|X=x}}-\mathbf h(\mathbf {{ W}}^{\operatorname{T}}\mathbf { f}(x)+\mathbf { b})\|_2\nonumber\\
\leq& \|\mathbf h(\tilde{\mathbf b})-\mathbf a_{P_{Y|X=x}}\|_2+\|\mathbf a_{P_{Y|X=x}}-\mathbf h(\mathbf {{ W}}^{\operatorname{T}}\mathbf { f}(x)+\mathbf { b})\|_2\nonumber\\
=&O(\epsilon).
\end{align}
%\ignore{
%The proof of Lemma \ref{lemma1} is divided into two cases, each corresponding to one of the activation functions \eqref{case1} and \eqref{case2}.
%\subsection{Case 1: Activation Function \eqref{case1} of the Output Layer}
%
%We first consider the  activation function $a_i(x) = {h}(\mathbf {w_i}^{\operatorname{T}}\mathbf f(x)+ b_i)$.
%}
Hence, for all  $x\in\mathcal X$ and $i=1,2,\ldots,n$
\begin{align}\label{lemma1_finishing}
h(\mathbf{ w}_i^{\operatorname{T}}\mathbf { f}(x)+{ b}_i)-h(\tilde b_i)=O(\epsilon). 
\end{align}

Define $\alpha_i(x)=\mathbf { w}_i^{\operatorname{T}}\mathbf { f}(x)+{ b}_i-\tilde b_i$. 
%In the following, we will use Assumption \ref{assumption1} and \eqref{lemma1_finishing} to show $\alpha_i(x)=O(\epsilon)$ for all $x\in\mathcal X$ and $i=1,\ldots,n$:
According to \eqref{lemma1_finishing}, there exists a constant $C >0$ irrelevant of $\epsilon$, such that  
\begin{align}\label{lemma1_finishing11}
|h(\tilde b_i+\alpha_i(x))-h(\tilde b_i)| \leq C\epsilon.
\end{align}
We choose a sufficiently small $\epsilon>0$ such that $0<\epsilon< \frac{K\delta}{C}$, where $K$ and $\delta$ are given by \eqref{lemma1_ecompare111}. Then, \eqref{lemma1_finishing11} leads to
\begin{align}\label{lemma1_finishing2}
|h(\tilde b_i+\alpha_i(x))-h(\tilde b_i)| < K\delta.
\end{align}
By comparing \eqref{lemma1_ecompare} and \eqref{lemma1_finishing2}, it follows  that $|\alpha_i(x)| < \delta$. Then, by invoking  \eqref{lemma1_ecompare111} again, we can get
%by using \eqref{lemma1_ed},
%for all  $z \in (\tilde b_i - |\alpha_i(x)|, \tilde b_i +|\alpha_i(x)|)$
%\begin{align}\label{eq_gradient_LB}
%h'(z) \geq \frac{h'(\tilde b_i)}{2}. 
%\end{align}
%From \eqref{eq_gradient_LB}, we can get 
\begin{align}
\frac{|h(\tilde b_i+\alpha_i(x))-h(\tilde b_i)|}{|\alpha_i(x)|} \geq K.
\end{align}
Hence, 
\begin{align}\label{eq_alpha_i_bound}
|\alpha_i(x)| \leq& \frac{|h(\tilde b_i+\alpha_i(x))-h(\tilde b_i)|}{K}\nonumber\\
\leq &\frac{C\epsilon}{K}.
\end{align}
This implies $\alpha_i(x)=O(\epsilon)$  for all $x\in\mathcal X$ and $i=1,\ldots,n$. This completes the proof of Lemma \ref{lemma1}.

\ignore{

\subsection{Case 2: Activation Function \eqref{case2} of the Output Layer}
We now consider the activation function in $\eqref{case2}$ and prove the claimed result in three steps: 
%\begin{align}\label{eq_activation2}
%a_i(x)=\frac{g(\mathbf w_i^{\operatorname{T}}\mathbf f(x)+b_i)}{\sum_{j=1}^n g(\mathbf w_j^{\operatorname{T}}\mathbf f(x)+b_j)}.
%\end{align}
%The proof of Case 2 

\emph{Step 1: We will find an appropriate bias $\hat{\mathbf b}=[\hat b_1,\ldots,\hat b_n]^{\operatorname{T}}$ that satisfies $\mathbf h(\hat{\mathbf b})=\mathbf a_{P_Y}$.} Similar to Case 1, because $\mathcal A\subseteq \mathcal H$ and $\mathbf a_{P_Y}\in\mathcal A$, there exists a bias $\tilde{\mathbf b}=[\tilde b_1,\ldots,\tilde b_n]^{\operatorname{T}}$ such that 
\begin{align}\label{BiasVector1}
\mathbf h(\tilde{\mathbf b})=\left[ \frac{g(\tilde b_1)}{\sum_{j=1}^n g(\tilde b_j)},\ldots,\frac{g(\tilde b_n)}{\sum_{j=1}^n g(\tilde b_j)}\right]^{\operatorname{T}}=\mathbf a_{P_Y}.
\end{align}
We note that the choice of bias $\tilde{\mathbf b}$ in \eqref{BiasVector1} is  irrelevant of $\epsilon$. 
By substituting \eqref{BiasVector1} into \eqref{lemma1e6}, we can obtain that for all $i=1,2,\ldots,n$
\begin{align}\label{lemma1_finishing3}
\frac{g(\mathbf { w}_i^{\operatorname{T}}\mathbf { f}(x)+{ b}_i)}{\sum_{j=1}^n g(\mathbf { w}_j^{\operatorname{T}}\mathbf { f}(x)+{ b}_j)}-\frac{g(\tilde b_i)}{\sum_{j=1}^n g(\tilde b_j)}=O(\epsilon). 
\end{align}
The bias $\tilde{\mathbf b}$ that satisfies \eqref{BiasVector1} and \eqref{lemma1_finishing3} is non-unique. In the sequence, we will find a bias that not only satisfies \eqref{BiasVector1} and \eqref{lemma1_finishing3}, but also has a desirable orthogonality property \eqref{eq_orthogonal}. Denote
\begin{align}
\mathbf{g}&=[g(\tilde b_1), \ldots,g(\tilde b_n)]^{\operatorname{T}},\\
\mathbf{r}&=[g(\mathbf { w}_1^{\operatorname{T}}\mathbf { f}(x)+{ b}_1), \ldots,g(\mathbf { w}_n^{\operatorname{T}}\mathbf { f}(x)+{ b}_n)]^{\operatorname{T}},%\\
%\mathbf z&=[z_1, \ldots, z_n]^{\operatorname{T}}. 
\end{align}
and let $v = \mathbf{g}^{\operatorname{T}}\mathbf{r}/\mathbf{g}^{\operatorname{T}} \mathbf g>0$.  
Because the image set of  function $g$ is $[0,\infty)$, for the $v >0$ and  $\tilde{\mathbf b}$ given above, there exists a bias  $\mathbf{\hat b}=[\hat b_1,\ldots,\hat b_n]^{\operatorname{T}}$  that satisfies $g(\hat b_i) = vg(\tilde b_i)$ for all $i=1,2,\ldots,n$ and
\begin{align}\label{eq_scaled}
\frac{g(\hat b_i)}{\sum_{j=1}^n g(\hat b_j)} = \frac{v g(\tilde b_i)}{\sum_{j=1}^n v g(\tilde b_j)}  = \frac{ g(\tilde b_i)}{\sum_{j=1}^n  g(\tilde b_j)}. 
\end{align}
Let us denote
\begin{align}
\mathbf{\hat g}&=[g(\hat b_1), \ldots,g(\hat b_n)]^{\operatorname{T}}, 
\end{align}
then $\mathbf{\hat g} = v \mathbf{g}$ and
\begin{align}\label{eq_orthogonal}
&(\mathbf{r} - \mathbf{\hat g})^{\operatorname{T}}\mathbf{\hat g}\nonumber\\
=&(\mathbf{r} - v\mathbf{g})^{\operatorname{T}}(v\mathbf{g})\nonumber\\
=&v \left(\mathbf{r}^{\operatorname{T}}\mathbf{g} - \frac{\mathbf{g}^{\operatorname{T}}\mathbf{r}}{\mathbf{g}^{\operatorname{T}} \mathbf g} \mathbf{g}^{\operatorname{T}} \mathbf g\right)\nonumber\\
=&0.
\end{align}
By substituting \eqref{eq_scaled} into \eqref{BiasVector1} and \eqref{lemma1_finishing3}, the bias $\hat{\mathbf b}$ satisfies \eqref{eq_orthogonal} and
\begin{align}\label{eq_same_action}
\mathbf h(\hat{\mathbf b})=\left[ \frac{g(\hat b_1)}{\sum_{j=1}^n g(\hat b_j)},\ldots,\frac{g(\hat b_n)}{\sum_{j=1}^n g(\hat b_j)}\right]^{\operatorname{T}}=\mathbf a_{P_Y},\\
\frac{g(\mathbf { w}_i^{\operatorname{T}}\mathbf { f}(x)+{ b}_i)}{\sum_{j=1}^n g(\mathbf { w}_j^{\operatorname{T}}\mathbf { f}(x)+{ b}_j)}-\frac{g(\hat b_i)}{\sum_{j=1}^n g(\hat b_j)}=O(\epsilon). \label{lemma1_finishing1}
\end{align}
By this, we have found a desirable bias $\hat{\mathbf b}$.

\emph{Step 2: We will use \eqref{eq_orthogonal} and \eqref{lemma1_finishing1} to show $\mathbf{r} - \mathbf{\hat g}=O(\epsilon \mathbf 1)$.} Define $\mathbf a(\mathbf z) = [a_1(\mathbf z),\ldots,a_n(\mathbf z)]^{\operatorname{T}}$, where
\begin{align}
%z_i(b_i) &=  g(b_i),\\
%\mathbf z(\mathbf b) &= [z_1(b_1),\ldots,z_n(b_n)]^{\operatorname{T}},\\
a_i(\mathbf z)= \frac{z_i}{\sum_{j=1}^n z_j}.
\end{align}
Then, $\mathbf h(\hat{\mathbf b})=\mathbf a(\mathbf{\hat g})$ and $\mathbf h({\mathbf{W}}^{\operatorname{T}} {\mathbf f}  + {\mathbf b})=\mathbf a(\mathbf{ r})$. Let $\mathbf J_{\mathbf a}(\mathbf z) = \frac{\partial \mathbf a(\mathbf z)}{\partial \mathbf z^{\operatorname{T}}}$ be the Jacobian matrix of $\mathbf a(\mathbf z)$. The $(i,j)$th element of $\mathbf J_{\mathbf a}(\mathbf z)$ is 
\begin{align}
\left(\mathbf J_{\mathbf a}(\mathbf z)\right)_{i,j} = \frac{\partial a_i}{\partial z_j}=\left\{\begin{array}{l l}
\frac{\sum_{k=1}^n z_k - z_i}{\left(\sum_{k=1}^n z_k\right)^2}, & \text{ if } i=j\\
-\frac{z_i}{\left(\sum_{k=1}^n z_k\right)^2}, & \text{ if } i\neq j.
\end{array}\right.
\end{align}
It is easy to check  $\mathbf J_{\mathbf a}(\mathbf z) \mathbf z = \mathbf 0$. Hence, the directional gradient of $\mathbf a(\mathbf z)$ on the direction $\mathbf z$ is 
$\nabla_\mathbf z(\mathbf a(\mathbf z))=\mathbf J_{\mathbf a}(\mathbf z) \mathbf z/\|\mathbf z\|_2=\mathbf 0$. %This implides $\mathbf a(\mathbf z) = \mathbf a( c \mathbf z)$ for all $c>0$, which agrees with \eqref{eq_same_action}. 
The largest and smallest singular values of $\mathbf J_{\mathbf a}(\mathbf z)$ are $\sigma_1(\mathbf J_{\mathbf a}(\mathbf z)) = \sqrt{n}\|z\|_2/\left(\sum_{k=1}^n z_k\right)^2$ and $\sigma_n(\mathbf J_{\mathbf a}(\mathbf z)) = 0$, respectively. If $n\geq 3$, the other singular values are $\sigma_2(\mathbf J_{\mathbf a}(\mathbf z))=\ldots=\sigma_{n-1}(\mathbf J_{\mathbf a}(\mathbf z)) = 1/\sum_{j=1}^n z_j$. The left and right singular vectors of $\mathbf J_{\mathbf a}(\mathbf z)$ associated to the smallest singular value $\sigma_n(\mathbf J_{\mathbf a}(\mathbf z))=0$ are $\mathbf 1/\|\mathbf 1\|_2$ and $\mathbf z/\|\mathbf z\|_2$, respectively. %The Jacobian matrix of the vector-valued function $\mathbf z(\mathbf b)$ is $\mathbf J_{\mathbf z}(\mathbf b) = \textbf{Diag}(g'(b_1),g'(b_2),\ldots, g'(b_n))$.

The directional gradient of $\mathbf a(\mathbf z)$ at the point $\mathbf z=\mathbf{\hat g}$ on the direction $(\mathbf{r} - \mathbf{\hat g})$ is 
\begin{align}
\nabla_{\mathbf{r} - \mathbf{\hat g}}(\mathbf a(\mathbf z))\big|_{\mathbf z=\mathbf{\hat g}}=\frac{\mathbf J_{\mathbf a}(\mathbf{\hat g}) (\mathbf{r} - \mathbf{\hat g})}{\|\mathbf{r} - \mathbf{\hat g}\|_2}.
\end{align}
Notice that $\mathbf{\hat g}/\|\mathbf{\hat g}\|_2$ is the right singular vector of $\mathbf J_{\mathbf a}(\mathbf {\hat g})$ associated to the smallest singular value $\sigma_n(\mathbf J_{\mathbf a}(\mathbf {\hat g}))=0$.
Taking \eqref{eq_orthogonal} into consideration, we can get  
\begin{align}
&\nabla_{\mathbf{r} - \mathbf{\hat g}}(\mathbf a(\mathbf z))\big|_{\mathbf z=\mathbf{\hat g}} 
\geq \min_{\mathbf z: \mathbf z^{\operatorname{T}}\mathbf{\hat g}=0}\frac{\mathbf J_{\mathbf a}(\mathbf{\hat g}) \mathbf z}{\|\mathbf z\|_2}
=\sigma_{n-1}(\mathbf J_{\mathbf a}(\mathbf {\hat g}))
\geq \frac{1}{\|\mathbf {\hat g}\|_1}, 
\end{align}
where $\sigma_{n-1}(\mathbf J_{\mathbf a}(\mathbf {\hat g}))$ is the second smallest singular value of $\mathbf J_{\mathbf a}(\mathbf {\hat g})$. 

The directional gradient $\nabla \mathbf h(\mathbf{\hat b};\bm\alpha)$ of $\mathbf h({\mathbf b})$ at the point ${\mathbf b}=\mathbf{\hat b}$ on the direction $\bm\alpha$ is
\begin{align}
\nabla \mathbf h(\mathbf{\hat b};\bm\alpha) = \frac{\mathbf J_{\mathbf a}(\mathbf z(\mathbf{\hat b})) \mathbf J_{\mathbf z}(\mathbf {\hat b}) \bm\alpha}{\|\bm\alpha\|_2}.
\end{align}

Denote $\alpha_i=\mathbf { w}_i^{\operatorname{T}}\mathbf { f}(x)+{ b}_i-\hat b_i$ and $\bm \alpha = [\alpha_1,\ldots,\alpha_n]^{\operatorname{T}}$. 

In the following, we will use Assumption \ref{assumption1} and \eqref{lemma1_finishing1} to show $\bm\alpha=O(\epsilon \mathbf 1)$. }
%!TEX root = ./DeepNeuralNet.tex

\section{Proof of Lemma \ref{lemma2}}\label{plemma2}
Let us define big-O and little-o notations for vectors and matrices, which will be used in the proof. 

\begin{definition}[Big-O and Little-o Notations for Vectors] 
Consider two vector functions $\mathbf f: \mathbb{R}\mapsto \mathbb{R}^{n}$ and $\mathbf g: \mathbb{R}\mapsto \mathbb{R}^{n}$. We say $\mathbf f(x) = O(\mathbf g(x))$, if there exist constants $M > 0$ and $d > 0$ such that 
\begin{align}
\|\mathbf f(x)\|_2 \leq M \|\mathbf g(x)\|_2,  ~ \text{for all }x\text{ with }|x| < d,
\end{align}
where $\|\mathbf f\|_2= (\sum_{i=1}^n f_i^2)^{1/2}$ is the $l_2$ norm of vector $\mathbf f$. 
We say $\mathbf f(x) = o(\mathbf g(x))$, if for each $M > 0$ there exists a real number $d > 0$ such that 
\begin{align}\label{eq_little_o}
\|\mathbf f(x)\|_2 \leq M \|\mathbf g(x)\|_2,  ~ \text{for all }x\text{ with }|x| < d. 
\end{align}
If $\|\mathbf g(x)\|_2\neq 0$, then \eqref{eq_little_o} is equivalent to 
\begin{align}
\lim_{x\rightarrow0} \|\mathbf f(x)\|_2/\|\mathbf g(x)\|_2 = 0.
\end{align}
\end{definition}

\begin{definition}[Big-O and Little-o Notations for Matrices] 
Consider two matrix functions $\mathbf F: \mathbb{R}\mapsto \mathbb{R}^{n}\times\mathbb{R}^{n}$ and $\mathbf G: \mathbb{R}\mapsto \mathbb{R}^{n}\times\mathbb{R}^{n}$. We say $\mathbf F(x) = O(\mathbf G(x))$, if there exist constants $M > 0$ and $d > 0$ such that 
\begin{align}
\|\mathbf F(x)\|_2 \leq M \|\mathbf G(x)\|_2,  ~ \text{for all }x\text{ with }|x| < d, 
\end{align}
where $\|\mathbf A\|_2 =\sigma_1 (\mathbf A)$ is the spectral norm of matrix $\mathbf A$. 
In addition, we say $\mathbf F(x) = o(\mathbf G(x))$, if for every $M > 0$ there exists a real number $d > 0$ such that 
\begin{align}\label{eq_little_o_matrix}
\|\mathbf F(x)\|_2 \leq M \|\mathbf G(x)\|_2,  ~ \text{for all }x\text{ with }|x| < d. 
\end{align}
If $\|\mathbf G(x)\|_2\neq 0$, then \eqref{eq_little_o_matrix} is equivalent to 
\begin{align}
\lim_{x\rightarrow0} \|\mathbf F(x)\|_2/\|\mathbf G(x)\|_2 = 0.
\end{align}
\end{definition}

Let $\mathbf M_L(x)$ denote the Hessian matrix  
\begin{align}
\mathbf M_L(x)= \frac{\partial^2 E_{Y \sim P_{Y|X=x}}[L(Y, \mathbf a)]}{\partial \mathbf a \partial \mathbf a^{\operatorname{T}}}\bigg|_{\mathbf a=\mathbf a_{P_{Y|X=x}}},
\end{align}
The $(i, j)$-th element of $\mathbf M_L(x)$ is 
\begin{align}
(\mathbf M_L(x))_{i,j}=\frac{\partial^2 E_{Y \sim P_{Y|X=x}}[L(Y, \mathbf a)]}{\partial a_i \partial a_j}\bigg|_{\mathbf a=\mathbf a_{P_{Y|X=x}}}.
\end{align}

\begin{lemma}\label{Hessian_lemma}
If Assumptions \ref{assumption5}, \ref{assumption2}, and \ref{assumption4} hold, then
\begin{align}\label{hessianlocal}
(\mathbf M_L(x))_{i,j}=(\mathbf M_L)_{i,j}+o(1).
\end{align}
\end{lemma}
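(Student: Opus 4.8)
The plan is to separate the two perturbations that distinguish $\mathbf M_L(x)$ from $\mathbf M_L$: the averaging distribution changes from $P_Y$ to $P_{Y|X=x}$, and the evaluation point changes from the marginal Bayes action $\mathbf a_{P_Y}$ to the conditional Bayes action $\mathbf a_{P_{Y|X=x}}$. Writing $H_{ij}(y,\mathbf a)=\partial^2 L(y,\mathbf a)/\partial a_i\partial a_j$ for the per-label second derivative, both entries are finite sums over the finite set $\mathcal Y$, namely $(\mathbf M_L(x))_{i,j}=\sum_{y}P_{Y|X=x}(y|x)H_{ij}(y,\mathbf a_{P_{Y|X=x}})$ and $(\mathbf M_L)_{i,j}=\sum_{y}P_Y(y)H_{ij}(y,\mathbf a_{P_Y})$.

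First I would quantify both perturbations as $O(\epsilon)$. Assumption \ref{assumption4} together with the inequalities already derived in the proof of Lemma \ref{norm_B} gives $|P_{Y|X=x}(y|x)-P_Y(y)|=O(\epsilon)$ for every $y\in\mathcal Y$. Assumptions \ref{assumption4} and \ref{assumption5}, invoked through Lemma \ref{norm_B}, give $\|\mathbf a_{P_{Y|X=x}}-\mathbf a_{P_Y}\|_2=O(\epsilon)$, so $\mathbf a_{P_{Y|X=x}}$ lies in a neighborhood of $\mathbf a_{P_Y}$ that shrinks to $\{\mathbf a_{P_Y}\}$ as $\epsilon\to 0$.

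Next I would add and subtract an intermediate term to split the difference:
\begin{align}
&(\mathbf M_L(x))_{i,j}-(\mathbf M_L)_{i,j}\nonumber\\
=&\sum_{y}\left(P_{Y|X=x}(y|x)-P_Y(y)\right)H_{ij}(y,\mathbf a_{P_{Y|X=x}})\nonumber\\
&+\sum_{y}P_Y(y)\left(H_{ij}(y,\mathbf a_{P_{Y|X=x}})-H_{ij}(y,\mathbf a_{P_Y})\right).
\end{align}
For the first sum, Assumption \ref{assumption2} makes each $H_{ij}(y,\cdot)$ continuous, hence bounded on a fixed closed ball around $\mathbf a_{P_Y}$; since $\mathbf a_{P_{Y|X=x}}$ stays in this ball for small $\epsilon$ and $P_{Y|X=x}(y|x)-P_Y(y)=O(\epsilon)$, this sum is $O(\epsilon)=o(1)$. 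For the second sum, continuity of $H_{ij}(y,\cdot)$ combined with $\mathbf a_{P_{Y|X=x}}\to\mathbf a_{P_Y}$ forces each bracketed difference to vanish as $\epsilon\to 0$, and finiteness of $\mathcal Y$ makes the $P_Y$-weighted sum $o(1)$ as well. Summing the two bounds yields \eqref{hessianlocal}.

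The only delicate point is the uniform boundedness of the Hessian entries used in the first sum, which I would secure by fixing a closed ball centered at $\mathbf a_{P_Y}$ on which twice continuous differentiability (Assumption \ref{assumption2}) guarantees each $H_{ij}(y,\cdot)$ is continuous and therefore bounded, and then noting that the $O(\epsilon)$ bound on $\|\mathbf a_{P_{Y|X=x}}-\mathbf a_{P_Y}\|_2$ keeps $\mathbf a_{P_{Y|X=x}}$ inside this ball once $\epsilon$ is small enough. Beyond this, the argument reduces to the triangle inequality and the finiteness of $\mathcal Y$.
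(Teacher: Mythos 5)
Your proposal is correct, and it reaches the conclusion by a more explicit route than the paper. The paper's proof is a composite-continuity argument: it defines $g(P_Y)$ as the Hessian entry of $\mathbb E_{Y\sim P_Y}[L(Y,\mathbf a)]$ evaluated at $\mathbf a=\mathbf a_{P_Y}$, asserts that $P_Y\mapsto\mathbf a_{P_Y}$ is continuous (from Assumption \ref{assumption5}) and that $g$ is therefore continuous as a composition (using Assumption \ref{assumption2}), and concludes $g(P_{Y|X=x})=g(P_Y)+o(1)$ from $P_{Y|X=x}(y|x)=P_Y(y)+O(\epsilon)$ (Assumption \ref{assumption4}). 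You instead telescope the difference $(\mathbf M_L(x))_{i,j}-(\mathbf M_L)_{i,j}$ into a distribution-perturbation term and an evaluation-point-perturbation term, and bound each one directly: the first by boundedness of the continuous entries $H_{ij}(y,\cdot)$ on a fixed closed ball together with $|P_{Y|X=x}(y|x)-P_Y(y)|=O(\epsilon)$, the second by pointwise continuity at $\mathbf a_{P_Y}$ combined with the $O(\epsilon)$ action proximity from Lemma \ref{norm_B}, with finiteness of $\mathcal Y$ closing both estimates. The ingredients (Assumptions \ref{assumption5}, \ref{assumption2}, \ref{assumption4}, and Lemma \ref{norm_B}) are identical, but your version is arguably tighter on two points the paper glosses over: the expectation depends on $P_Y$ in two places (as averaging weights and through the Bayes action), which your split disentangles explicitly, and the Bayes action need not be unique, so "continuity of $P_Y\mapsto\mathbf a_{P_Y}$" is really a set-valued statement --- your proof avoids this by pinning the evaluation point to the particular conditional Bayes action supplied by Lemma \ref{norm_B}. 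The paper's argument buys brevity; yours buys a self-contained and more rigorous estimate, at the cost of a slightly longer write-up.
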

\begin{proof}
\ignore{{\red In the next equation, $P_Y$ occurs twice, in the inner product with loss function $L$ and in the Bayes action (I hope you have multivariable calculus book for knowledge). Because the inner product is linear wrt $P_Y$,  continuity wrt $P_Y$ follows. The continuity of $g$ on the action, and the continuity of Bayes action on $P_Y$ (continuity of composite function) are needed. All these should be discussed in the proof.}} 
Consider the function
\begin{align}
g(P_Y)=\frac{\partial^2 \mathbb E_{Y \sim P_Y}[L(Y, \mathbf a)]}{\partial a_i \partial a_j}\bigg|_{\mathbf a=\mathbf a_{P_Y}},
\end{align}
where the Bayes action $\mathbf a_{P_Y}$ satisfies Assumption \ref{assumption5}. 

Because of Assumption \ref{assumption5}, we can say that the $\mathbf a_{P_Y}$ is a continuous function of $P_Y$. In addition, due to Assumption \ref{assumption2} and by using the continuity property of a composite function, we obtain that $g(P_Y)$ is a continuous function of $P_Y$. 

Due to Assumption \ref{assumption4}, we can get that for all $x\in\mathcal X$ and $y\in\mathcal Y$
\begin{align}\label{HL1}
P_{Y|X}(y|x)=P_Y(y)+O(\epsilon).
\end{align}
In addition, because $g$ is continuous, \eqref{HL1} implies 
\begin{align}
g(P_{Y|X=x})=g(P_Y)+o(1).
\end{align}
This concludes the proof.
\end{proof}

It is known that
\begin{align}\label{Lemma1proofeq1}
D_L(\mathbf a_{P_{Y|X=x}}||\mathbf a)\geq 0,
\end{align}
where equality is achieved at $\mathbf a = \mathbf a_{P_{Y|X=x}}$, i.e., 
\begin{align}\label{stationary_point}
D_L(\mathbf a_{P_{Y|X=x}}||\mathbf a_{P_{Y|X=x}})=0.
\end{align}
In addition, the function $\mathbf a \mapsto L(y, \mathbf a)$ is twice differentiable for all $y\in\mathcal Y$. Because of these properties, by taking the second order Taylor series expansion of the function $\mathbf a \mapsto D_L(\mathbf a_{P_{Y|X=x}}||\mathbf a)$ at the point $\mathbf a=\mathbf a_{P_{Y|X=x}},$ we can get
\begin{align}\label{Lemma1proofeq3}
D_L(\mathbf a_{P_{Y|X=x}}||\mathbf a)=&\frac{1}{2}(\mathbf a-\mathbf a_{P_{Y|X=x}})^{\operatorname{T}} \mathbf M_L(x) (\mathbf a-\mathbf a_{P_{Y|X=x}}) \nonumber\\
\!\!\!\!\!&+o(\|\mathbf a-\mathbf a_{P_{Y|X=x}}\|^2_2).
\end{align}
Let $\mathbf a=\mathbf h({{\mathbf W}}^{\operatorname{T}}{\mathbf f}(x)+{\mathbf b})$ in \eqref{Lemma1proofeq3}, we obtain
\begin{align}\label{Lemma1proofeq4}
&D_L(\mathbf a_{P_{Y|X=x}}||\mathbf h({{\mathbf W}}^{\operatorname{T}}{\mathbf f}(x)+{\mathbf b}))\nonumber\\
=&\frac{1}{2}(\mathbf h({{\mathbf W}}^{\operatorname{T}}{\mathbf f}(x)+{\mathbf b})-\mathbf a_{P_{Y|X=x}})^{\operatorname{T}} \mathbf M_L(x) \nonumber\\
& \times (\mathbf h({{\mathbf W}}^{\operatorname{T}}{\mathbf f}(x)+{\mathbf b})-\mathbf a_{P_{Y|X=x}}) \nonumber\\
\!\!\!\!\!&+o\left(\|\mathbf h({{\mathbf W}}^{\operatorname{T}}{\mathbf f}(x)+{\mathbf b})-\mathbf a_{P_{Y|X=x}}\|^2_2\right).
\end{align}
\ignore{{\red Is the following assumption number correct?}}
Due to Assumption \eqref{constraint}, \eqref{Lemma1proofeq4} can be reduced to 
\begin{align}\label{Lemma1proofeq5}
&D_L(\mathbf a_{P_{Y|X=x}}||\mathbf h({{\mathbf W}}^{\operatorname{T}}{\mathbf f}(x)+{\mathbf b}))\nonumber\\
=&\frac{1}{2}(\mathbf h({{\mathbf W}}^{\operatorname{T}}{\mathbf f}(x)+{\mathbf b})-\mathbf a_{P_{Y|X=x}})^{\operatorname{T}} \mathbf M_L(x)  \nonumber\\
& \times (\mathbf h({{\mathbf W}}^{\operatorname{T}}{\mathbf f}(x)+{\mathbf b})-\mathbf a_{P_{Y|X=x}}) +o(\epsilon^2).
\end{align}

\ignore{{\red Which assumption of $\mathbf h$ is needed here?}}
Because $h$ is continuously twice differentiable, we take the first order Taylor series expansion of $\mathbf h(\mathbf b)$ at the point $\mathbf b=\tilde{\mathbf b}$, which yields
\begin{align}\label{activationlocal1}
\mathbf h(\mathbf b)=\mathbf h(\tilde{\mathbf b})+\mathbf J(\mathbf b-\tilde{\mathbf b})+o(\mathbf b-\tilde{\mathbf b}).
\end{align}
\ignore{{\red Note: $\mathbf J$ is not of full rank for the second activation function of the output layer? This is a good reason to abandon the second activation function.}}
In \eqref{activationlocal1}, by using Lemma \ref{lemma1} and letting $\mathbf b={{\mathbf W}}^{\operatorname{T}}{\mathbf f}(x)+{\mathbf b}$, we can get
\begin{align}\label{activationlocal}
&\mathbf h({{\mathbf W}}^{\operatorname{T}}{\mathbf f}(x)+{\mathbf b})\nonumber\\
=&\mathbf h(\tilde{\mathbf b})+\mathbf J{{\mathbf W}}^{\operatorname{T}}{\mathbf f}(x)+\mathbf J {\mathbf d}+o(\epsilon \bm 1),
\end{align}
where ${\mathbf d}={\mathbf b} - \tilde{\mathbf b}$ and $\bm 1=[1, \ldots, 1]^{\operatorname{T}} \in \mathbb R^n$.

%={{\mathbf W}}^{\operatorname{T}}{\mathbf f}(x)+{\mathbf d}+\tilde{\mathbf b}

Define
\begin{align}
\mathbf q_1&=\mathbf a_{P_{Y|X=x}}-\bm \mu_{\mathbf a}, \\
\mathbf q_2&=\mathbf J{{\mathbf W}}^{\operatorname{T}}({\mathbf f}(x)-{\bm \mu}_{\mathbf f}),\\
\mathbf q_3&=\mathbf a_{P_Y}-\bm \mu_{\mathbf a}+\mathbf J {\mathbf d}+\mathbf J{{\mathbf W}}^{\operatorname{T}}{\bm \mu}_{\mathbf f}.
\end{align}

By using \eqref{constraint} and Lemma \ref{norm_B}, we get
\begin{align}\label{q1q2q3}
&\|\mathbf q_1-\mathbf q_2-\mathbf q_3\|_2 \nonumber\\
=&\|\mathbf a_{P_{Y|X=x}}-\mathbf a_{P_Y}-\mathbf J({{\mathbf W}}^{\operatorname{T}}{\mathbf f}(x)+{\mathbf d})\|_2\nonumber\\
\leq&\|\mathbf a_{P_{Y|X=x}}-\mathbf a_{P_Y}\|_2+\|\mathbf J({{\mathbf W}}^{\operatorname{T}}{\mathbf f}(x)+{\mathbf d})\|_2 \nonumber\\
\leq&\|\mathbf a_{P_{Y|X=x}}-\mathbf a_{P_Y}\|_2+\|\mathbf J\|_2\|({{\mathbf W}}^{\operatorname{T}}{\mathbf f}(x)+{\mathbf d})\|_2 \nonumber\\
\leq&\|\mathbf a_{P_{Y|X=x}}-\mathbf a_{P_Y}\|_2+\sigma_{\text{max}}(\mathbf J)\|({{\mathbf W}}^{\operatorname{T}}{\mathbf f}(x)+{\mathbf d})\|_2 \nonumber\\
=&O(\epsilon),
\end{align}
where $\sigma_{\text{max}}(\mathbf J)=\max_{i}h'(\tilde b_i)$. 

Substituting \eqref{hessianlocal} and \eqref{activationlocal} to \eqref{Lemma1proofeq5}, we obtain
\begin{align}
\!\!\!&D_L(\mathbf a_{P_{Y|X=x}}||\mathbf h({{\mathbf W}}^{\operatorname{T}}{\mathbf f}(x)+{\mathbf b}))\nonumber\\
=&\frac{1}{2}(\mathbf q_1-\mathbf q_2-\mathbf q_3+o(\epsilon\bm 1))^{\operatorname{T}} \nonumber\\
&\times (\mathbf M_L+o(\mathbf I))(\mathbf q_1-\mathbf q_2-\mathbf q_3+o( \epsilon \bm 1))\nonumber\\
=&\frac{1}{2}\bigg[(\mathbf q_1-\mathbf q_2-\mathbf q_3)^{\operatorname{T}}\mathbf M_L(\mathbf q_1-\mathbf q_2-\mathbf q_3)\nonumber\\
&~~~+2(\mathbf q_1-\mathbf q_2-\mathbf q_3)^{\operatorname{T}}\mathbf M_L ~o(\epsilon \bm 1)\nonumber\\
&~~~+2(\mathbf q_1-\mathbf q_2-\mathbf q_3)^{\operatorname{T}}o(\mathbf I)~o(\epsilon \bm 1)\nonumber\\
&~~~+o(\epsilon \bm 1^{\operatorname{T}})~(\mathbf M_L+o(\mathbf I))~o(\epsilon \bm 1)\nonumber\\
&~~~+(\mathbf q_1-\mathbf q_2-\mathbf q_3)^{\operatorname{T}}o(\mathbf I) (\mathbf q_1-\mathbf q_2-\mathbf q_3)\bigg]+o(\epsilon^2).\!\!
\end{align}

Using \eqref{q1q2q3}, we can write
\begin{align}\label{L2eImp}
&D_L(\mathbf a_{P_{Y|X=x}}||\mathbf h({{\mathbf W}}^{\operatorname{T}}{\mathbf f}(x)+{\mathbf b}))\nonumber\\
=&\frac{1}{2}(\mathbf q_1-\mathbf q_2-\mathbf q_3)^{\operatorname{T}} \mathbf M_L(\mathbf q_1-\mathbf q_2-\mathbf q_3)+o(\epsilon^2).
\end{align}
Because $\mathbf M_L=\mathbf R_L^{\operatorname{T}}\mathbf R_L$, we get
\begin{align}
&\!\!\!\!\!\!\!D_L(\mathbf a_{P_{Y|X=x}}||\mathbf h({{\mathbf W}}^{\operatorname{T}}{\mathbf f}(x)+{\mathbf b}))\nonumber\\
&\!\!\!\!\!\!\!\!=\frac{1}{2}(\mathbf R_L(\mathbf q_1\!-\!\mathbf q_2\!-\!\mathbf q_3))^{\operatorname{T}} (\mathbf R_L(\mathbf q_1\!-\!\mathbf q_2\!-\!\mathbf q_3))\!+\!o(\epsilon^2).\!\!
\end{align}
Multiply the above equation by $P_X(x)$, yields
\begin{align}\label{lastdecomposed}
& P_X(x) D_L(\mathbf a_{P_{Y|X=x}}||\mathbf h({{\mathbf W}}^{\operatorname{T}}{\mathbf f}(x)+{\mathbf b}))\nonumber\\
=&\frac{1}{2}(\mathbf R_L\sqrt{P_X(x)}(\mathbf q_1-\mathbf q_2-\mathbf q_3))^{\operatorname{T}} \nonumber\\
&~~~~~~~~~~\times (\mathbf R_L\sqrt{P_X(x)}(\mathbf q_1-\mathbf q_2-\mathbf q_3))+o(\epsilon^2)\nonumber\\
=&\frac{1}{2}(\mathbf R_L\sqrt{P_X(x)}(\mathbf q_1-\mathbf q_2))^{\operatorname{T}}(\mathbf R_L\sqrt{P_X(x)}(\mathbf q_1-\mathbf q_2))\nonumber\\
&-P_X(x)(\mathbf q_1-\mathbf q_2)^{\operatorname{T}}\mathbf M_L\mathbf q_3+\frac{1}{2}P_X(x)\mathbf q_3^{\operatorname{T}}\mathbf M_L\mathbf q_3+o(\epsilon^2).
\end{align}
By substituting \eqref{matrixf}-\eqref{matrixJ} into \eqref{lastdecomposed} and taking the summation over $x \in \mathcal X$, we derive 
\begin{align}
&\sum_{x \in \mathcal X} P_X(x) D_L(\mathbf a_{P_{Y|X=x}}||\mathbf h({{\mathbf W}}^{\operatorname{T}}{\mathbf f}(x)+{\mathbf b}))\nonumber\\
=&\frac{1}{2} \|\mathbf{R}_L\mathbf{B}-\mathbf{R}_L\mathbf{J}{\mathbf{W}}^{\operatorname{T}} {\mathbf{\Xi}}_{\mathbf f}\|_{F}^2\nonumber\\
&+\frac{1}{2}\sum_{x \in \mathcal X}P_X(x)(\mathbf q_3-2\mathbf q_1+2\mathbf q_2)^{\operatorname{T}}\mathbf M_L\mathbf q_3+o(\epsilon^2) \nonumber\\
=&\frac{1}{2} \|\mathbf{R}_L\mathbf{B}-\mathbf{R}_L\mathbf{J}{\mathbf{W}}^{\operatorname{T}} {\mathbf{\Xi}}_{\mathbf f}\|_{F}^2\nonumber\\
&+\frac{1}{2}\bigg(\mathbf q_3-2\sum_{x \in \mathcal X}P_X(x) \mathbf q_1+2 \sum_{x \in \mathcal X}P_X(x)\mathbf q_2\bigg)^{\operatorname{T}}\mathbf M_L {\mathbf q}_3 \nonumber\\
&+o(\epsilon^2) \nonumber\\
=&\frac{1}{2} \|\mathbf{R}_L\mathbf{B}-\mathbf{R}_L\mathbf{J}{\mathbf{W}}^{\operatorname{T}} {\mathbf{\Xi}}_{\mathbf f}\|_{F}^2+\frac{1}{2}\mathbf q_3^{\operatorname{T}}\mathbf M_L{\mathbf q}_3 +o(\epsilon^2) \nonumber\\
=&\frac{1}{2} \|\mathbf{R}_L\mathbf{B}-\mathbf{R}_L\mathbf{J}{\mathbf{W}}^{\operatorname{T}} {\mathbf{\Xi}}_{\mathbf f}\|_{F}^2+\frac{1}{2}\eta({\mathbf d}, {\mathbf f})+o(\epsilon^2),
\end{align}
where the second equality holds because  $\mathbf q_3$ and $\mathbf{M}_L$ do not change with respect to $x$, and the third equality holds because $\sum_{x \in \mathcal X}P_X(x)\mathbf q_1=\bm 0$ and $\sum_{x \in \mathcal X}P_X(x)\mathbf q_2=\bm 0$. This completes the proof.
%!TEX root = ./DeepNeuralNet.tex
\section{Proof of Theorem \ref{theorem1}}\label{ptheorem1}
Notice that $\mathbf d$ only affects the second term of \eqref{approximate_problem}. To optimize $\mathbf d$, we take the derivative 
\begin{align}
\frac{\partial \eta(\mathbf d, \mathbf f)}{\partial \mathbf d}=2\mathbf M_L(\mathbf J \mathbf d+\mathbf J\mathbf W^{\operatorname{T}}\bm \mu_{\mathbf f}+\mathbf a_{P_Y}-\bm \mu_{\mathbf a}).
\end{align}
Equating the derivative to zero, we get \eqref{optimal_bias}. Substituting the optimal bias into \eqref{eta}, we get
\begin{align}
\eta(\mathbf d, \mathbf f)=0,
\end{align}
which is the minimum value of the function $\eta(\mathbf d, \mathbf f)$.

Next, for fixed $\mathbf \Xi_{\mathbf f}$, we need to optimize $\mathbf \Xi_{\mathbf W}$ by solving 
\begin{align}
\min_{\mathbf \Xi_{\mathbf W}}\|\tilde{\mathbf B}-\mathbf \Xi_{\mathbf W} \mathbf \Xi_{\mathbf f}\|_{F}^2,
\end{align}
which is a convex optimization problem. By setting the derivative 
\begin{align}
\frac{\partial}{\partial \mathbf \Xi_{\mathbf W}}\|\tilde{\mathbf B}-\mathbf \Xi_{\mathbf W} \mathbf \Xi_{\mathbf f}\|_{F}^2=2(\mathbf \Xi_{\mathbf W}\mathbf \Xi_{\mathbf f}\mathbf \Xi_{\mathbf f}^{\operatorname{T}}-\tilde{\mathbf B}\mathbf {\Xi}_{\mathbf f}^{\operatorname{T}})
\end{align}
to zero, we find the optimal solution 
\begin{align}
\mathbf \Xi_{\mathbf W}^*=\mathbf R_L \mathbf B \mathbf {\Xi}_{\mathbf f}^{\operatorname{T}}(\mathbf \Xi_{\mathbf f}\mathbf \Xi_{\mathbf f}^{\operatorname{T}})^{\operatorname{-1}}.
\end{align}

%!TEX root = ./DeepNeuralNet.tex

\section{Proof of Theorem \ref{theorem2}}\label{ptheorem2}
To optimize $\mathbf \Xi_{\mathbf f}$, we set
\begin{align}
\frac{\partial}{\partial \mathbf \Xi_{\mathbf f}}\|\tilde{\mathbf B}-\mathbf \Xi_{\mathbf W} \mathbf \Xi_{\mathbf f}\|_{F}^2=2(\mathbf \Xi_{\mathbf f}^{\operatorname{T}}\mathbf \Xi_{\mathbf W}^{\operatorname{T}}\mathbf \Xi_{\mathbf W}-\tilde{\mathbf B}^{\operatorname{T}}\mathbf \Xi_{\mathbf W})
\end{align}
to zero and get 
\begin{align}
\mathbf \Xi_{\mathbf f}^*=(\mathbf \Xi_{\mathbf W}^{\operatorname{T}}\mathbf \Xi_{\mathbf W})^{\operatorname{-1}}\mathbf \Xi_{\mathbf W}^{\operatorname{T}}\tilde{\mathbf B},
\end{align}
where
\begin{align}
\mathbf \Xi_{\mathbf f}^*\sqrt{\mathbf p_X}=\mathbf 0,
\end{align}
where the vector $\sqrt{\mathbf p_X}$ is defined in \eqref{px}.

Because $\bm \mu_{\mathbf f}$ only affects the second term of \eqref{approximate_problem}, we set the derivative 
\begin{align}
&\frac{\partial}{\partial \bm \mu_{\mathbf f}}\eta(\mathbf d, \mathbf f) \nonumber\\
=&2\mathbf \Xi_{\mathbf W} \mathbf R_L(\mathbf a_{P_Y}-\bm \mu_{\mathbf a}+\mathbf J\mathbf d)+2 \mathbf \Xi_{\mathbf W}^{\operatorname{T}}\mathbf \Xi_{\mathbf W} \bm \mu_{\mathbf f}
\end{align}
to zero and obtain \eqref{optimal_mean}.
%!TEX root = ./DeepNeuralNet.tex

\section{Proof of Theorem \ref{theorem3}}\label{ptheorem3}
One lower bound of the first term in \eqref{approximate_problem} is given by
\begin{align}\label{theorem2e1}
\|\mathbf{\tilde B}- \mathbf \Xi_{\mathbf W} \mathbf{\Xi}_{\mathbf f}\|_{F}^2\geq \sum_{i=k+1}^{K} \sigma_i^2.
\end{align}

By using Eckart–Young–Mirsky Theorem \cite{eckart1936approximation}, if we substitute the value of $\mathbf{\Xi}_{\mathbf f}^*$ and $\mathbf \Xi_{\mathbf W}^*$ from \eqref{optimal_pair1} into \eqref{theorem2e1}, equality with the lower bound is achieved in \eqref{theorem2e1}.  

If the optimal bias $\mathbf d^*$ and the optimal mean $\bm \mu_{\mathbf f}^*$ satisfy \eqref{optimal_bias_mean}, we get the minimum of $\eta(\mathbf d, \mathbf f)$.

\ignore{%!TEX root = ./DeepNeuralNet.tex

\section{Proof of Lemma \ref{local_approximation_2}}\label{plocal_approximation_2}
{\red Instead of saying ``we can do sth,'' use ``we do sth.''} 
We take the first order Taylor series expansion of $\mathbf h(\mathbf z)$ at the point $\mathbf z=\hat{\mathbf b}$, which yields
\begin{align}\label{T4e1}
\mathbf h(\mathbf z)=\mathbf h(\hat{\mathbf b})+\tilde{\mathbf J}(\mathbf z-\hat{\mathbf b})+o(\mathbf z-\hat{\mathbf b}).
\end{align}
{\red Notational confusion between $\tilde{\mathbf J}$ and $\hat{\mathbf J}$. The second seems better, as defined in (45).}
Let $\mathbf z={\mathbf{W}}^{\operatorname{T}}\bm \psi({{\mathbf W}^{(1){\operatorname{T}}}}{\mathbf f}^{(1)}(x)+{\mathbf b}^{(1)})+\mathbf b$. Then, by using \eqref{constraint1}, we obtain
\begin{align}\label{T4e2}
&\mathbf h({\mathbf{W}}^{\operatorname{T}}\bm \psi({{\mathbf W}^{(1){\operatorname{T}}}}{\mathbf f}^{(1)}(x)+{\mathbf b}^{(1)})+\mathbf b)\nonumber\\
=&\mathbf h(\hat{\mathbf b})+\tilde{\mathbf J}\left({\mathbf{W}}^{\operatorname{T}}\bm \psi({{\mathbf W}^{(1){\operatorname{T}}}}{\mathbf f}^{(1)}(x)+{\mathbf b}^{(1)})+\mathbf b-\hat{\mathbf b}\right)\nonumber\\
&+o({\mathbf{W}}^{\operatorname{T}}\bm \psi({{\mathbf W}^{(1){\operatorname{T}}}}{\mathbf f}^{(1)}(x)+{\mathbf b}^{(1)})+\mathbf b-\hat{\mathbf b})\nonumber\\
=&\mathbf h(\hat{\mathbf b})+\tilde{\mathbf J}\left({\mathbf{W}}^{\operatorname{T}}\bm \psi({{\mathbf W}^{(1){\operatorname{T}}}}{\mathbf f}^{(1)}(x)+{\mathbf b}^{(1)})+\mathbf b-\hat{\mathbf b}\right)+o(\epsilon \mathbf 1).
\end{align}
Next, we take the first order Taylor series expansion of $\bm \psi(\mathbf z^{(1)})$ at the point $\mathbf z^{(1)}=\hat{\mathbf b}^{(1)}$ and get
\begin{align}\label{T4e3}
\!\!\bm \psi(\mathbf z^{(1)})=\bm \psi(\hat{\mathbf b}^{(1)})+\mathbf J_1(\mathbf z^{(1)}-\hat{\mathbf b}^{(1)})+o(\mathbf z^{(1)}-\hat{\mathbf b}^{(1)}).\!\!
\end{align}
Let $\mathbf z^{(1)}={{\mathbf W}^{(1){\operatorname{T}}}}{\mathbf f}^{(1)}(x)+{\mathbf b}^{(1)}$, then by using \eqref{constraint2}, we get
\begin{align}\label{T4e4}
&\bm \psi({{\mathbf W}^{(1){\operatorname{T}}}}{\mathbf f}^{(1)}(x)+{\mathbf b}^{(1)})\nonumber\\
=&\bm \psi(\hat{\mathbf b}^{(1)})+\mathbf J_1\left({{\mathbf W}^{(1){\operatorname{T}}}}{\mathbf f}^{(1)}(x)+{\mathbf b}^{(1)}-\hat{\mathbf b}^{(1)}\right)+o(\epsilon\mathbf1).
\end{align}
{\red How to ensure \eqref{T4e5}??}
Now, if we choose $\mathbf z=\mathbf W^{\operatorname{T}}\bm \psi(\hat{\mathbf b}^{(1)})+\hat {\mathbf b}$ in \eqref{T4e1} such that 
\begin{align}\label{T4e5}
\mathbf W^{\operatorname{T}}\bm \psi(\hat{\mathbf b}^{(1)})=o(\epsilon\mathbf 1),
\end{align}
we get 
\begin{align}\label{T4e6}
\!\!\mathbf h(\mathbf W^{\operatorname{T}}\bm \psi(\hat{\mathbf b}^{(1)})+\hat {\mathbf b})=\mathbf h(\hat{\mathbf b})+\tilde{\mathbf J}(\mathbf W^{\operatorname{T}}\bm \psi(\hat{\mathbf b}^{(1)}))+o(\epsilon\mathbf 1).\!\!
\end{align}
Substituting \eqref{bias2}, \eqref{T4e4}, and \eqref{T4e6} into \eqref{T4e2}, we get
\begin{align}\label{T4e7}
&\mathbf h({\mathbf{W}}^{\operatorname{T}}\bm \psi({{\mathbf W}^{(1){\operatorname{T}}}}{\mathbf f}^{(1)}(x)+{\mathbf b}^{(1)})+\mathbf b)\nonumber\\
=&\mathbf a_{P_Y}+\tilde{\mathbf J}\left({\mathbf{W}}^{\operatorname{T}}\mathbf J_1\left({{\mathbf W}^{(1){\operatorname{T}}}}{\mathbf f}^{(1)}(x)+{\mathbf b}^{(1)}-\hat{\mathbf b}^{(1)}\right)+\mathbf b-\hat{\mathbf b}\right)\nonumber\\
&+o(\epsilon \mathbf 1).
\end{align}
{\red Some details are skipped in the above step.}

Define
\begin{align}\label{T4e8}
\mathbf q_1^{(1)}=&\mathbf a_{P_{Y|X=x}}-\bm \mu_{\mathbf a},\\
\mathbf q_2^{(1)}=&\tilde{\mathbf J}{{\mathbf W}}^{\operatorname{T}}\mathbf J_1{{\mathbf W}^{(1){\operatorname{T}}}}({\mathbf f}^{(1)}(x)-{\bm \mu}_{\mathbf f^{(1)}}), \\
\mathbf q_3^{(1)}=&\mathbf a_{P_Y}-\bm \mu_{\mathbf a}+\mathbf J {\mathbf c}+\mathbf J{{\mathbf W}}^{\operatorname{T}}\mathbf J_1{{\mathbf W}^{(1){\operatorname{T}}}}{\bm \mu}_{\mathbf f^{(1)}}.
\end{align}
By using \eqref{norm_B}, \eqref{constraint1}, and \eqref{constraint2}, we get 
\begin{align}
&\|\mathbf q^{(1)}_1-\mathbf q^{(1)}_2-\mathbf q^{(1)}_3\|_2 \nonumber\\
=&\|\mathbf a_{P_{Y|X=x}}-\mathbf a_{P_Y}\nonumber\\
&+\tilde{\mathbf J}\left({\mathbf{W}}^{\operatorname{T}}\mathbf J_1\left({{\mathbf W}^{(1){\operatorname{T}}}}{\mathbf f}^{(1)}(x)+{\mathbf b}^{(1)}-\hat{\mathbf b}^{(1)}\right)+\mathbf b-\hat{\mathbf b}\right)\|_2\nonumber\\
\leq& \|\mathbf a_{P_{Y|X=x}}-\mathbf a_{P_Y}\|_2 \nonumber\\
&+\|\tilde{\mathbf J}\left({\mathbf{W}}^{\operatorname{T}}\mathbf J_1\left({{\mathbf W}^{(1){\operatorname{T}}}}{\mathbf f}^{(1)}(x)+{\mathbf b}^{(1)}-\hat{\mathbf b}^{(1)}\right)+\mathbf b-\hat{\mathbf b}\right)\|_2 \nonumber\\
\leq &O(\epsilon)+\big\|\tilde{\mathbf J}\big\|_2 \nonumber\\
&\times \big\|\left({\mathbf{W}}^{\operatorname{T}}\mathbf J_1\left({{\mathbf W}^{(1){\operatorname{T}}}}{\mathbf f}^{(1)}(x)+{\mathbf b}^{(1)}-\hat{\mathbf b}^{(1)}\right)+\mathbf b-\hat{\mathbf b}\right)\big\|_2\nonumber\\
=&O(\epsilon).
\end{align}
{\red What is the order of $\hat{\mathbf J}$? Any proof?}
The last equality holds because \eqref{constraint1}, \eqref{constraint2}, and \eqref{T4e5} imply
\begin{align}
&\big\|\left({\mathbf{W}}^{\operatorname{T}}\mathbf J_1\left({{\mathbf W}^{(1){\operatorname{T}}}}{\mathbf f}^{(1)}(x)+{\mathbf b}^{(1)}-\hat{\mathbf b}^{(1)}\right)+\mathbf b-\hat{\mathbf b}\right)\big\|_2 \nonumber\\
=&O(\epsilon).
\end{align}

In addition, using \eqref{T4e7} yields
\begin{align}\label{T4e9}
&D_L(\mathbf a_{P_{Y|X=x}} || \mathbf h(\mathbf{W}^{\operatorname{T}}\bm \psi({\mathbf W}^{(1)\operatorname{T}} {\mathbf f}^{(1)}(x)+{\mathbf b}^{(1)})+\mathbf b)) \nonumber\\
=&\frac{1}{2}(\mathbf R_L(\mathbf q^{(1)}_1-\mathbf q^{(1)}_2-\mathbf q^{(1)}_3))^{\operatorname{T}}(\mathbf R_L(\mathbf q^{(1)}_1-\mathbf q^{(1)}_2-\mathbf q^{(1)}_3)).
\end{align}

Now, we will same computation as described in Appendix \ref{plemma2}. First, multiply the above equation by $P_X(x)$ and then, taking the summation over $x \in \mathcal X$, we get \eqref{local_approximation_2}. {\red What is \eqref{local_approximation_2}?}
%!TEX root = ./DeepNeuralNet.tex
\section{Proof of Theorem \ref{theorem4}}\label{ptheorem4}

For fixed $\mathbf \Xi_{\mathbf f^{(1)}}$ and $\bm \mu_{\mathbf f^{(1)}}$, the problem \eqref{approximate_problem1} can be decomposed into two separate optimization problems:
\begin{align}\label{RTe1}
\min_{\mathbf \Xi_{\mathbf W^{(1)}}}\frac{1}{2} \|\mathbf{\tilde B}- \mathbf \Xi_{\mathbf W^{(1)}} \mathbf{\Xi}_{\mathbf f^{(1)}}\|_{F}^2,
\end{align}
\begin{align}\label{RTe2}
\min_{\mathbf c} \frac{1}{2}\kappa(\mathbf c, \mathbf f^{(1)}),
\end{align}
which are both convex optimization problems. Taking derivative of $\kappa(\mathbf c, \mathbf f^{(1)})$ with respect to $\mathbf c$, we obtain
\begin{align}\label{RTe3}
\frac{\partial \kappa(\mathbf c, \mathbf f^{(1)})}{\partial \mathbf d}=2\mathbf M_L(\tilde{\mathbf J} \mathbf c+\tilde{\mathbf J}\mathbf W^{\operatorname{T}}\mathbf J_{1}{\mathbf W}^{(1)\operatorname{T}}\bm \mu_{\mathbf f^{(1)}}+\mathbf a_{P_Y}-\bm \mu_{\mathbf a}).
\end{align}
By setting \eqref{RTe3} to zero, we obtain \eqref{optimal_bias2}. Similarly, by setting the derivative 
\begin{align}
\frac{\partial}{\partial \mathbf \Xi_{\mathbf W^{(1)}}}\|\tilde{\mathbf B}-\mathbf \Xi_{\mathbf W^{(1)}} \mathbf \Xi_{\mathbf f^{(1)}}\|_{F}^2=2(\mathbf \Xi_{\mathbf W^{(1)}}\mathbf \Xi_{\mathbf f^{(1)}}\mathbf \Xi_{\mathbf f^{(1)}}^{\operatorname{T}}-\tilde{\mathbf B}\mathbf {\Xi}_{\mathbf f^{(1)}}^{\operatorname{T}})
\end{align}
to zero, we find \eqref{optimal_weight_2}.

{\red You usually divide a proof into many paragraphs. Paragraphs should be designed based on similarity among their functionalities.}
%!TEX root = ./DeepNeuralNet.tex

\section{Proof of Theorem \ref{theorem5}}\label{ptheorem5}

For fixed $\mathbf \Xi_{\mathbf W^{(1)}}$ and $\mathbf c$, the problem \eqref{approximate_problem1} can be decomposed into two separate optimization problems:
\begin{align}\label{T5e1}
\min_{\mathbf{\Xi}_{\mathbf f^{(1)}}}\frac{1}{2} \|\mathbf{\tilde B}- \mathbf \Xi_{\mathbf W^{(1)}} \mathbf{\Xi}_{\mathbf f^{(1)}}\|_{F}^2,
\end{align}
\begin{align}\label{T5e2}
\min_{\bm \mu_{\mathbf f^{(1)}}} \frac{1}{2}\kappa(\mathbf c, \mathbf f^{(1)}).
\end{align}
Setting the derivative 
\begin{align}
&\frac{\partial \kappa(\mathbf c, \mathbf f^{(1)})}{\bm \mu_{\mathbf f^{(1)}}} \nonumber\\
=&2\mathbf \Xi_{\mathbf W^{(1)}} \mathbf R_L(\mathbf a_{P_Y}-\bm \mu_{\mathbf a}+\mathbf J\mathbf c)+2 \mathbf \Xi_{\mathbf W^{(1)}}^{\operatorname{T}}\mathbf \Xi_{\mathbf W^{(1)}} \bm \mu_{\mathbf f^{(1)}}
\end{align}
to zero, we get \eqref{optimal_mean2}. Similarly, by setting the derivative
\begin{align}
&\frac{\partial}{\partial \mathbf \Xi_{\mathbf f^{(1)}}}\|\tilde{\mathbf B}-\mathbf \Xi_{\mathbf W^{(1)}} \mathbf \Xi_{\mathbf f^{(1)}}\|_{F}^2 \nonumber\\
=&2(\mathbf \Xi_{\mathbf f^{(1)}}^{\operatorname{T}}\mathbf \Xi_{\mathbf W^{(1)}}^{\operatorname{T}}\mathbf \Xi_{\mathbf W^{(1)}}-\tilde{\mathbf B}^{\operatorname{T}}\mathbf \Xi_{\mathbf W^{(1)}})
\end{align}
to zero, we get \eqref{optimal_feature2}.

%!TEX root = ./DeepNeuralNet.tex

\section{Proof of Theorem \ref{theorem6}}\label{ptheorem6}
Similar to Appendix \ref{ptheorem3}, one lower bound of the first term in \eqref{approximate_problem} is given by
\begin{align}\label{T6e1}
\|\mathbf{\tilde B}- \mathbf \Xi_{\mathbf W} \mathbf{\Xi}_{\mathbf f}\|_{F}^2\geq \sum_{i=k_1+1}^{K} \sigma_i^2.
\end{align}
By substituting \eqref{optimal_pair2} into \eqref{T6e1}, equality is achieved in \eqref{T6e1}. {\red Is (57) needed in the above sentence?} The optimal bias $\bar{\mathbf c}$ is in accordance with Appendix \ref{ptheorem4}.
 {\red Where is the proof of the optimal mean $\bar{\bm \mu}_{\mathbf f^{(1)}}$?}

}
\end{document}